\newtheorem{assumption}{Assumption}
\newtheorem{lem}{Lemma}
\newtheorem{defi}{Definition}
\newtheorem{theorem}{Theorem}
\newtheorem{remark}{Remak}
\newtheorem{fact}{Fact}
\newtheorem{col}{Corollary}
\def\eqref#1{eqn.~(\ref{#1})}
\def\1{\bm{1}}
\DeclareMathAlphabet{\mathsfit}{\encodingdefault}{\sfdefault}{m}{sl}
\SetMathAlphabet{\mathsfit}{bold}{\encodingdefault}{\sfdefault}{bx}{n}
\def\A{{\bf A}}
\def\a{{\bf a}}
\def\B{{\bf B}}
\def\D{{\bf D}}
\def\E{{\bf E}}
\def\G{{\bf G}}
\def\H{{\bf H}}
\def\I{{\bf I}}
\def\M{{\bf M}}
\def\Q{{\bf Q}}
\def\R{{\bf R}}
\def\U{{\bf U}}
\def\u{{\bf u}}
\def\V{{\bf V}}
\def\W{{\bf W}}
\def\w{{\bf w}}
\def\X{{\bf X}}
\def\x{{\bf x}}
\def\Y{{\bf Y}}
\def\y{{\bf y}}
\def\Z{{\bf Z}}
\def\O{{\bf O}}
\def\0{{\bf 0}}
\def\1{{\bf 1}}
\def\DM{{\mathcal D}}
\def\FM{{\mathcal F}}
\def\IM{{\mathcal I}}
\def\OM{{\mathcal O}}
\def\RM{{\mathcal R}}
\def\RB{{\mathbb R}}
\def\EB{{\mathbb E}}
\def\PB{{\mathbb P}}
\def\tiQ{{ \widetilde{\Q}}}
\def\tiR{{ \widetilde{\R}}}
\def\tiU{{ \widetilde{\U}}}
\def\bY{{\overline \Y}}
\def\bZ{{\overline \Z}}
\def\bR{{\overline \R}}
\def\Si{\mbox{\boldmath$\Sigma$\unboldmath}}
\def\argmax{\mathop{\rm argmax}}
\def\argmin{\mathop{\rm argmin}}
\def\sgn{\mathrm{sgn}}
\def\orth{\mathsf{orth}}
\newcommand{\DPI}{\texttt{DPI} }
\newcommand{\LP}{\texttt{LocalPower} }
\DeclareMathOperator{\dist}{dist}
\DeclareMathOperator{\gap}{gap}
\icmltitlerunning{Communication-Efficient Distributed SVD via 
	Local Power Iterations}
\begin{document}

\twocolumn[
\icmltitle{Communication-Efficient Distributed SVD via 
	Local Power Iterations}



\icmlsetsymbol{equal}{*}

\begin{icmlauthorlist}
\icmlauthor{Xiang Li}{sms}
\icmlauthor{Shusen Wang}{steve}
\icmlauthor{Kun Chen}{sms}
\icmlauthor{Zhihua Zhang}{sms}
\end{icmlauthorlist}

\icmlaffiliation{sms}{School of Mathematical Sciences, Peking University, China}
\icmlaffiliation{steve}{Department of Computer Science, Stevens Institute of Technology, USA}

\icmlcorrespondingauthor{Xiang Li}{lx10077@pku.edu.cn}

\icmlkeywords{Communication Efficiency, Distributed SVD, Power Iterations, Local Updates}

\vskip 0.3in
]



\printAffiliationsAndNotice{} 

\begin{abstract}
We study distributed computing of the truncated singular value decomposition problem.
We develop an algorithm that we call \texttt{LocalPower} for improving communication efficiency.
Specifically, we uniformly partition the dataset among $m$ nodes and alternate between multiple (precisely $p$) local power iterations and one global aggregation.
In the aggregation, we propose to weight each local eigenvector matrix with orthogonal Procrustes transformation (OPT).
As a practical surrogate of OPT, sign-fixing, which uses a diagonal matrix with $\pm 1$ entries as weights, has better computation complexity and stability in experiments. 
We theoretically show that under certain assumptions \texttt{LocalPower} lowers the required number of communications by a factor of $p$ to reach a constant accuracy.
We also show that the strategy of periodically decaying $p$ helps obtain high-precision solutions.
We conduct experiments to demonstrate the effectiveness of \texttt{LocalPower}.
\end{abstract}

\section{Introduction}

In this paper we consider the truncated singular value decomposition (SVD) which has broad applications in machine learning, such as dimension reduction~\cite{wold1987principal}, matrix completion~\cite{candes2009exact}, and information retrieval~\cite{deerwester1990indexing}.
Let $\a_1, \cdots, \a_n \in \RB^d$ be sampled i.i.d.\ from some fixed but unknown distribution.
The goal is to compute the $k$ ($k < \min \{ d, n \}$) singular vectors of $\A \triangleq [\a_1, \ldots, \a_n]^\top \in \RB^{n \times d}$.
Let $\V_k \in \RB^{d \times k}$ contain the top $k$ singular vectors.
The power iteration and its variants such as Krylov subspace iterations are common approaches to the truncated SVD. 
They have $\OM (n d)$ space complexity and $\OM (n d k)$ per-iteration time complexity.
They take $\tilde{\OM} ( \log \frac{d}{\epsilon} ) $ iterations to converge to $\epsilon$ precision, where $\tilde{\OM}$ hides the spectral gap and constants \cite{golub2012matrix,saad2011numerical}.

When either $n$ or $d$ is big, the data matrix $\A \in \RB^{n\times d}$ may not fit in the memory, making standard single-machine algorithms infeasible.
A distributed power iteration is feasible and practical for large-scale truncated SVDs.
In particular, we partition the rows of $\A $ among $m$ worker nodes (see Figure~\ref{fig:illustrate}(a)) and let the nodes perform power iterations in parallel (see Figure~\ref{fig:illustrate}(b)).
In every iteration, every node performs $\OM (\frac{n d k}{m})$ FLOPs (suppose the load is balanced), while the server performs only $\OM (d k^2)$ FLOPs.

When solving large-scale matrix computation problems, communication costs are not negligible; in fact, communication costs can outweigh computation costs.
The large-scale SVD experiments in \cite{gittens2016matrix,wang2019scalable} show that the runtime caused by communication and straggler's effect\footnote{Straggler's effect means that one outlier node is tremendously slower than the rest, and the system waits for the slowest to complete.} can exceed the computation time.
Due to the communication costs and other overheads, parallel computing can even demonstrate anti-scaling; that is, when $m$ is big, the overall wall-clock runtime increases with $m$.
Reducing the frequency of communications will reduce the communication and synchronization costs and thereby improving the scalability.

\begin{figure*}[tp]
\centering
\subfigure[Partition.]{\includegraphics[height=40mm] {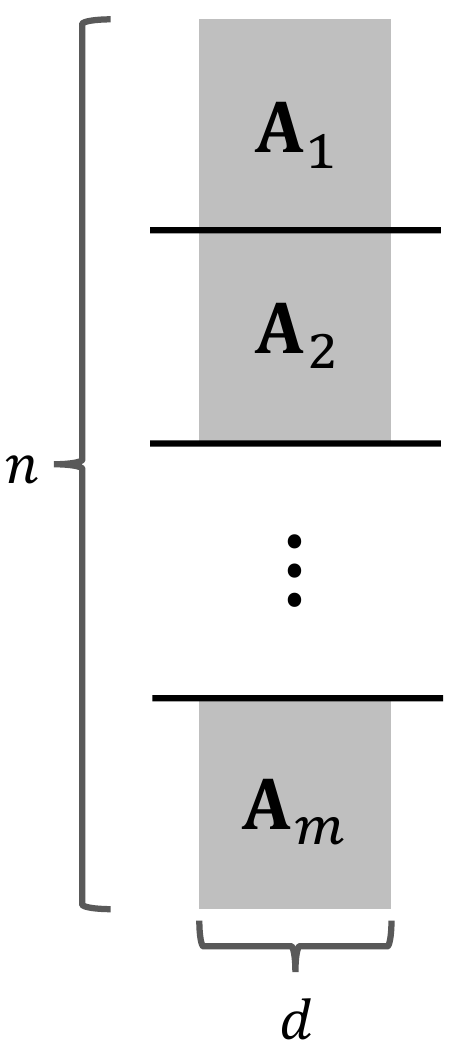}}
\subfigure[Standard parallel power iteration (\texttt{DPI}).]{\includegraphics[height=45mm] {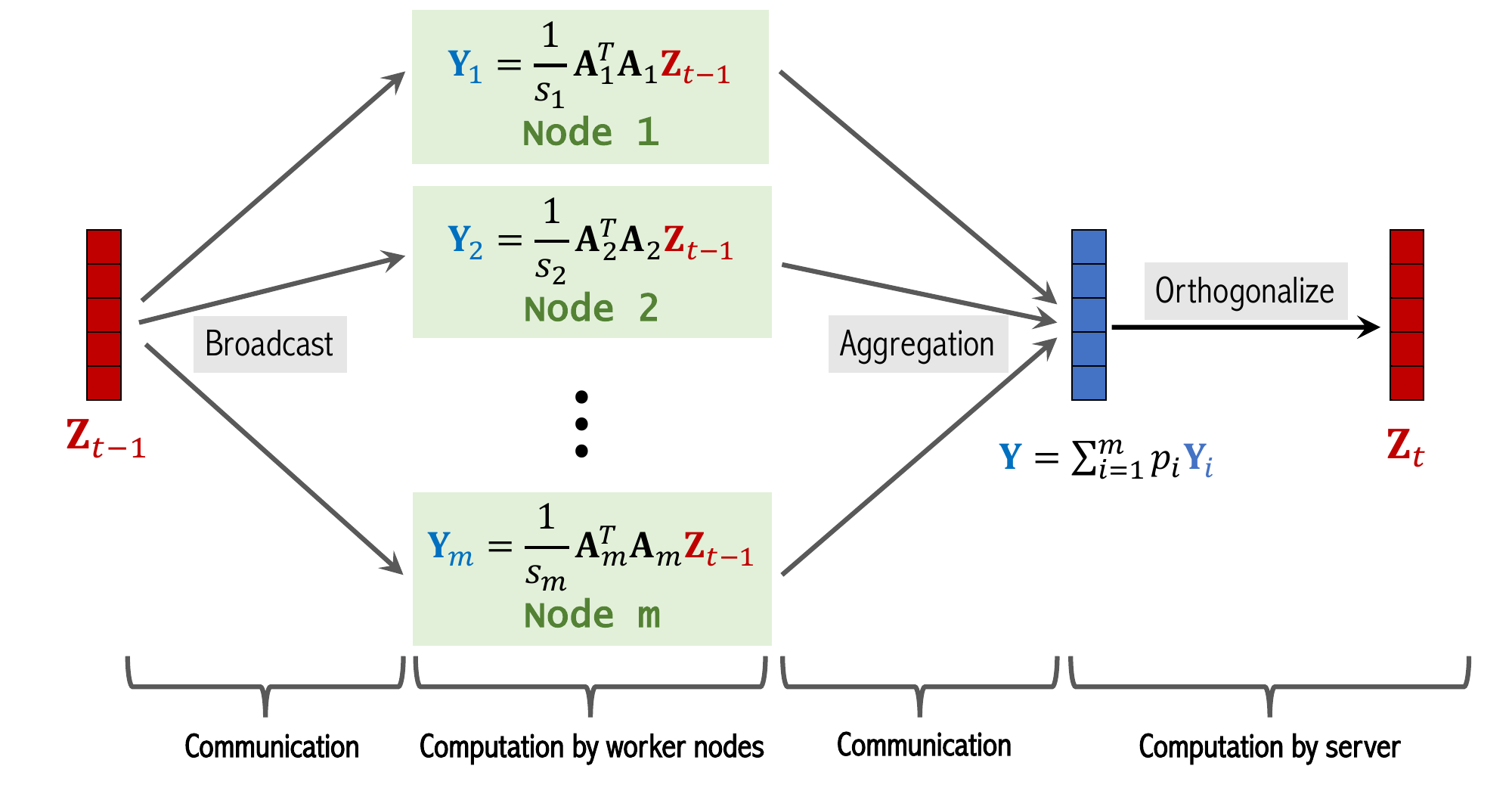}} 
\subfigure[Commonly used symbols]{\includegraphics[height=40mm] {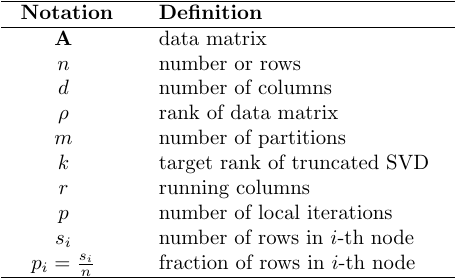}\label{fig:notation}}
\caption{(a) The $n\times d$ data matrix $\A$ is partitioned among $m$ worker nodes.
(b) In every iteration of the distributed power iteration, there are two rounds of communications. Most of the computations are performed by the worker nodes.
(c) Commonly used symbols.}
\label{fig:illustrate}
\end{figure*}

\subsection{Our Contributions}

Inspired by the \texttt{FedAvg} algorithm \cite{mcmahan2017communication}, we propose an algorithm called \texttt{LocalPower} to improve communication-efficiency.
\texttt{LocalPower} is based on the distributed power iteration (\texttt{DPI}) described in Figure~\ref{fig:illustrate}.
The difference is that \texttt{LocalPower} makes every node locally perform orthogonal iterations using its own data for $p$ iterations.
In the case for $p=1$, \texttt{LocalPower} degenerates to \texttt{DPI}.
When $p \ge 2$, local updates are employed to reduce communication frequency.


In practice, a naive implementation of the proposed \texttt{LocalPower} does not work very well.
We propose three effective techniques for improving \texttt{LocalPower}:
\begin{itemize}
    \item
    We propose to decay the communication interval, $p$, over time.
    In this way, the loss drops fast in the beginning and converge to the optimal solution in the end.
    Without the decay strategy, \texttt{LocalPower} is not guaranteed to converge to the optimum.
    \item 
    Orthogonal Procrustes transformation (OPT) post-processes the output matrices of the $m$ nodes after each iteration so that the $m$ matrices are close to each other. 
    OPT makes \texttt{LocalPower} stable at the cost of more computation.
    \item
    To reduce the computation of OPT, we replace its orthogonal space to the set of all diagonal matrices with $\pm 1$ entries.
    In this way, OPT becomes the sign-fixing technique which is stable (slightly worse than OPT) and efficient.
    Sign fixing was originally proposed by~\citet{garber2017communication} for the special case of $k=1$, while we generalize sign-fixing to $k > 1$. 
\end{itemize}
In summary, this work's contributions include the new algorithm, \texttt{LocalPower}, its convergence analysis, and the effective techniques for improving \texttt{LocalPower}.

The remainder of this paper is organized as follows.
In Section~\ref{sec:pre}, we define notation and give preliminary backgrounds on the orthogonal Procrustes problem and the distributed power iteration.
In Section~\ref{sec:localpower}, we propose \texttt{LocalPower} and its variants and then provide theoretical analysis in Section~\ref{sec:theory}.
In Section~\ref{sec:exp}, we conduct experiments to illustrate the effectiveness of \texttt{LocalPower} and to validate our theoretical results.
In Section~\ref{sec:discuss}, we give further discussions on some aspects of \texttt{LocalPower}.
All proof details can be found at Appendix~\ref{appen:proof}.
In Appendix~\ref{sec:related}, we discuss related work on SVD and parallel algorithms.

\section{Preliminary}
\label{sec:pre}

\paragraph{Notation.}
For any $\A \in \RB^{n \times d}$,  we use $\| \A \|_2$ and $\| \A \|_F$ to denote its spectral norm and Frobenius norm.
Let $\A^{\dagger} \in \RB^{d \times n}$ denote the Moore-Penrose pseudo-inverse of $\A$.
For any positive integer $T$, let $[T]=\{1, 2, \cdots, T\}$.
$\OM_{d \times k}$ is the set of all $d \times k$ column orthonormal matrices ($1 \le k \le d$). 
$\OM_k$, short for $\OM_{k \times k}$, denotes the set of $k \times k$ orthogonal matrices.
$\RM(\U)$ denotes the subspace spanned by the columns of $\U$.
The commonly used notation is summarized in Figure~\ref{fig:notation}.

\paragraph{Power iteration.}
The top $k$ right singular vectors of $\A$ can be obtained by the subspace iteration that repeats
\begin{equation} \label{eq:power}
\Y \: \longleftarrow \:
\M \Z 
\quad \textrm{ and } \quad
\Z \: \longleftarrow \:
\orth \big( \Y \big), 
\end{equation}
where $\M = \frac{1}{n} \A^\top \A$.
In every power iteration, computing $\Y$ has $\OM ( n d  k )$ time complexity, and orthogonalizing $\Y$ has $\OM (d k^2 ) $ time complexity. 
It is well known that the tangent of principle angles between $\RM(\Z)$ and $\RM(\U_k)$ converges to zero geometrically \cite{arbenz2012lecture,saad2011numerical} and thus so their projection distance.


\paragraph{Distributed power iteration (\texttt{DPI})} is a direct distributed variant of power iteration.
Consider data parallelism and partition the data (rows of $\A$) among $m$ worker nodes. 
See Figure~\ref{fig:illustrate}(a) for the illustration.
We partition $\A$ as $\A^\top = [\A_1^\top, \cdots, \A_m^\top]$ where $\A_i \in \RB^{s_i \times d}$ contains $s_i$ rows of $\A$.
Using $m$ worker nodes and data parallelism, one power iteration works in four steps.
First, the server broadcasts $\Z$ to the workers, which has $\OM (d k)$ or $\OM (d k m)$ communication complexity (depending on the network structure).
Second, every worker (say, the $i$-th) locally computes  
\begin{equation} \label{eq:power_yi}
\Y_i = \M_i \Z \in \RB^{d\times k}
\quad \textrm{ with } \quad
\M_i = \frac{1}{s_i} \A_i^\top \A_i,
\end{equation}
which has $\OM (d^2 k)$ or $\OM (s_i d k)$ time complexity.
Third, the server aggregates $\Y_i$, for all $i \in [m]$, to obtain $\Y = \sum_{i=1}^m p_i \Y_i $;
this step is equivalent to $\Y = \M \Z$, where $\M  =\sum_{i=1}^m p_i \M_i$ with $p_i = \frac{s_i}{n}$.
It has $\OM (d k)$ or $\OM (d k m)$ communication complexity.
Last, the server locally orthogonalizes $\Y$ to obtain $\Z = \orth (\Y)$, which has merely $\OM (d k^2)$ time complexity.
The algorithm is described in Figure~\ref{fig:illustrate}(b).
The following lemma is a well-known result~\cite{arbenz2012lecture,saad2011numerical}.

\begin{lem}
	\label{lem:informal_dpm}
	To obtain a column-orthonormal matrix $\Z$ such that the subspace distance $\text{dist}(\Z, \U_k) \le \epsilon$ (see Definition~\ref{def:proj} for detail), with high probability, the communication needed by \DPI is 
	\begin{equation}
	\label{eq:T0}
	\Omega \left(   \frac{\sigma_k}{\sigma_k - \sigma_{k+1}} \log\left(\frac{ d}{\epsilon}\right)  \right).
	\end{equation}
	Here, $\sigma_j$ is the $j$-th largest singular value of the matrix $\M$. 
\end{lem}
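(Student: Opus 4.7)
The plan is to exhibit a worst-case input on which the distributed power method provably cannot converge faster than the stated rate. Because the broadcast--compute--aggregate--orthogonalize loop described in (\ref{eq:power_yi}) performs exactly one server/worker synchronization per power iteration, the number of communication rounds $T$ differs from the number of iterations $t$ by only a small constant factor, so it suffices to lower-bound $t$.

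First I would reduce the distributed iteration to the centralized one: since $\sum_i p_i \M_i \Z = \M \Z$ is an \emph{exact} identity, the distributed algorithm produces the same subspace sequence as the single-machine block power method, namely $\RM(\Z_t) = \RM(\M^t \Z_0)$. Next I would pick the hard instance $\M = \diag(\underbrace{\sigma_k,\ldots,\sigma_k}_{k},\underbrace{\sigma_{k+1},\ldots,\sigma_{k+1}}_{d-k})$, for which the target subspace is $\U_k = [\I_k;\,\0]^\top$. Splitting $\Z_0 = [\Z_0^{(1)};\,\Z_0^{(2)}]^\top$ with $\Z_0^{(1)} \in \RB^{k \times k}$ and diagonalizing $\M^t\Z_0$ block-wise yields the exact identity
\begin{equation*}
\tan \theta_k(\Z_t, \U_k) \;=\; \left(\frac{\sigma_{k+1}}{\sigma_k}\right)^{t} \tan\theta_k(\Z_0, \U_k),
\end{equation*}
so convergence on this instance is governed precisely by the gap ratio and the initial angle.

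For the $\log d$ factor I would use an anti-concentration argument on the random initializer. Taking $\Z_0$ to be the orthonormalization of a standard Gaussian matrix, classical bounds on the smallest singular value of a square Gaussian matrix give $\sigma_{\min}(\Z_0^{(1)}) = \Omega(1/\sqrt{d})$ with probability at least $1-\delta$, while $\|\Z_0^{(2)}\| \le 1$ trivially; hence $\tan\theta_k(\Z_0,\U_k) = \|\Z_0^{(2)}(\Z_0^{(1)})^{-1}\| = \Omega(\sqrt{d})$ with high probability. Combining with $\dist(\Z_t,\U_k) = \sin\theta_k(\Z_t,\U_k) \ge \tfrac{1}{\sqrt 2}\tan\theta_k(\Z_t,\U_k)$ for $\theta_k \le \pi/4$, the stopping criterion $\dist(\Z_t,\U_k) \le \epsilon$ forces
\begin{equation*}
t \;\ge\; \frac{\log(c\sqrt{d}/\epsilon)}{\log(\sigma_k/\sigma_{k+1})} \;\ge\; \frac{\sigma_k}{2(\sigma_k - \sigma_{k+1})}\,\log\frac{d}{\epsilon},
\end{equation*}
where the last inequality uses $\log(1+x)\le x$ applied to $x = (\sigma_k-\sigma_{k+1})/\sigma_{k+1}$ and then $\sigma_{k+1}\le\sigma_k$ to replace $\sigma_{k+1}$ by $\sigma_k/2$ (up to constants). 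This is exactly the claimed $\Omega(\cdot)$ bound.

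The main obstacle is making the anti-concentration quantitative: I need the top $k \times k$ block of a random orthonormal matrix to have smallest singular value $\Omega(1/\sqrt{d})$ with high probability, and this is where the ``with high probability'' qualifier in the lemma originates. Everything else is a short linear-algebra calculation, made clean by the fact that in the exact (noiseless) distributed setting the distributed and centralized block power iterations produce identical subspace trajectories.
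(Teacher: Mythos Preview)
The paper does not actually prove Lemma~\ref{lem:informal_dpm}; it is stated as ``a well known result'' with citations to standard numerical-linear-algebra texts. More importantly, in context the $\Omega$ here is not meant as a hardness lower bound but as a \emph{sufficient condition}: after $T=\Omega\bigl(\tfrac{\sigma_k}{\sigma_k-\sigma_{k+1}}\log(d/\epsilon)\bigr)$ iterations the distributed power method has $\dist(\Z_T,\U_k)\le\epsilon$ with high probability. This reading is forced by the way the paper uses the lemma---Theorem~\ref{thm:informal_main} literally says ``when $T$ is greater than~\eqref{eq:T0}'', and the formal Theorem~\ref{thm:main} writes its own iteration requirement as $T=\Omega(\cdot)$ with exactly the same convention. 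The cited references prove precisely this upper-bound direction, and that is what the paper needs as the baseline against which \texttt{LocalPower} is compared.

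Your proposal instead treats the statement as a worst-case lower bound and builds a two-eigenvalue diagonal instance on which the block power method cannot converge faster. That is a different (complementary) claim; it is not what the paper states, cites, or uses. The construction itself is reasonable, but there is also a logical slip in your anti-concentration step: from $\sigma_{\min}(\Z_0^{(1)})=\Omega(1/\sqrt d)$ and $\|\Z_0^{(2)}\|\le 1$ you can only conclude $\tan\theta_k(\Z_0,\U_k)=\|\Z_0^{(2)}(\Z_0^{(1)})^{-1}\|=O(\sqrt d)$, which is the wrong direction for a lower bound. To get $\tan\theta_k=\Omega(\sqrt d)$ you need the \emph{upper} bound $\sigma_{\min}(\Z_0^{(1)})=O(1/\sqrt d)$ (equivalently $\cos\theta_k=O(1/\sqrt d)$); the clean way is to pass to the unnormalized Gaussian via $\Z_0^{(2)}(\Z_0^{(1)})^{-1}=G^{(2)}(G^{(1)})^{-1}$ and use $\|G^{(2)}\|=\Theta(\sqrt d)$ together with $\sigma_{\max}(G^{(1)})=\Theta_k(1)$.
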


\section{Algorithms}
\label{sec:localpower}

\begin{algorithm}[!t]
	\caption{\textbf{\texttt{LocalPower}}} 
	\label{alg:local_power} 
	\begin{algorithmic}[1]
		\STATE {\bfseries Input:} distributed dataset $\{\A_i\}_{i=1}^m$, target rank $k$, iteration rank $r \ge k$, number of iterations $T$. 
		\STATE {\bfseries Initialization:} generate a standard Gaussian matrix, $\Y_0$;
		\FOR{$t=0$ {\bfseries to} $T$}
		\STATE \textbf{Broadcast:} If $t \in \IM_T$, the server sends $\Y_t$ to workers; let $\Y_t^{(i)} \leftarrow \Y_t$;
		\STATE \textbf{Local computation:} For all $i \in [m]$, the $i$-th worker locally computes 
		\vspace{-2mm}
		\begin{equation*}
		    \Z_t^{(i)} = \orth (\Y_t^{(i)})  
		    \quad \textrm{and} \quad 
		    \Y_{t+1}^{(i)} = \tfrac{1}{s_i}  \A_i^\top \A_i \Z_{t}^{(i)} ;
		    \vspace{-2mm}
		\end{equation*}
		\STATE \textbf{Aggregation:} If $(t+1) \in \IM_T$, the server computes $\Y_{t+1} =  \sum_{i=1}^m p_i \Y_{t+1}^{(i)}$; 
		\ENDFOR
		\STATE {\bfseries Output:} $\orth ( \Y_{t+1} )$.
	\end{algorithmic}
\end{algorithm}

\paragraph{\texttt{LocalPower}} is a new algorithm that we propose for improving communication efficiency.
It is described in Algorithm~\ref{alg:local_power}.
Its basic idea is to trade more local power iterations for fewer communications via reducing the communication frequency.
Between two communications, every worker node locally runs \eqref{eq:power_yi} for multiple times.
We let the set $\IM_T$ ($\subseteq [T]$) index the iterations that perform communications; for example, 
\begin{equation*}
    \IM_T \; = \; \big\{ 0, p , 2p , \cdots , T \big\}
\end{equation*}
means that the algorithm communicates once after $p$ lower power iterations.
The cardinality $|\IM_T |$ is the total number of communications.

Suppose \texttt{LocalPower} performs one communication every $p$ iterations.
In $T$ iterations, each worker performs $\OM (s_i d k T)$ FLOPs, the server performs $\OM (d k^2 T/p)$ FLOPs, and the overal communication complexity is $\OM (d k T / p)$.
The standard distributed power iteration is a special case of \texttt{LocalPower} with $p = 1$, that is, $\IM_T = \{0, 1, 2, \cdots , T\}$.\footnote{The reason why we average $\Y_t^{(i)}$ instead of $\Z_t^{(i)}$ is that we hope \LP is reduced to \DPI when $p=1$.}
One-shot SVD, aka divide-and-conquer SVD, \cite{liang2014improved,garber2017communication,fan2019distributed}, is a special case of \texttt{LocalPower} with $p = T$, that is, $\IM_T = \{ 0, T \}$.

\paragraph{Decaying $p$.}
In practice, it is helpful to use a big $p$ in the beginning but let $p=1$ in the end.
For example, we can decrease $p$ by half every few communications.
The rationale is that the error of \texttt{LocalPower} does not converge to zero if $p$ is big; see the theoretical analysis in the next section.
Our empirical observation confirms the theories: if $p$ is set big, then the error drops very fast in the beginning, but it does not vanish with the iterations.

\paragraph{Orthogonal Procrustes Transformation.}
In Algorithm~\ref{alg:local_power}, the $i$-th worker locally computes
\begin{equation*}
    \Y_{t+1}^{(i)} 
    \; = \; \tfrac{1}{s_i}  \A_i^\top \A_i \Z_{t}^{(i)} .
\end{equation*}
When it comes to the time of communication (i.e., $t+1 \in \IM_T$), we replace the equation by the following steps.
First, we choose the device which has the maximum number of samples as a base.
Without loss of generality, we can assume the first device is selected (which indicates $1 = \argmin_{i \in [m]} p_i)$.
Second, we compute
\begin{equation} \label{eq:opt:o}
    \O_t^{(i)}
    \; = \; \argmin_{\O \in \OM_k} \; \big\| \Z^{(i)}_t \O - \Z^{(1)}_t \big\|_F^2.
\end{equation}
Eqn.~(\ref{eq:opt:o}) is a classic matrix approximation problem in linear algebra, named as the Procrustes problem~\cite{schonemann1966generalized,cape2020orthogonal}.
The solution to \eqref{eq:opt:o} is referred to as orthogonal Procrustes transformation (OPT) and has a closed form:
\begin{equation*}
    \O_t^{(i)} \; = \; \W_1\W_2^\top, 
\end{equation*}
where $\W_1 \Si \W_2^\top$ is the SVD of $(\Z^{(i)}_t )^\top\Z^{(1)}_t $.
Finally, we compute
\begin{equation*}
    \Y_{t+1}^{(i)} 
    \; = \; \tfrac{1}{s_i}  \A_i^\top \A_i \Z_{t}^{(i)} \O_t^{(i)}.
\end{equation*}

\begin{remark}
Intuitively, such $\O_t^{(i)}$ adjusts $\Z_t^{(i)}$ such that it aligns with $\Z_t^{(1)}$ better.
In an ideal case, all $\Z_t^{(i)}$'s would be identical with $\Z_t^{(1)}$ and thus the aggregation step (line 6 in Algorithm~\ref{alg:local_power}) would be the same as that in \DPI.
From our theory, it is important to use OPT.
It weakens the assumption on the smallness of a residual error which is incurred by local computation.
From our experiments, it stabilizes vanilla \texttt{LocalPower} and achieves much smaller errors. 
\end{remark}

\begin{remark}
To compute such $\O_t^{(i)}$, the $i$-th client should communicate $\Z_t^{(i)}$ to the server, which results in additional communication cost.
However, the cost is the same  in magnitude as that of sending $\Y_{t+1}^{(i)}$ in the aggregation step.
Besides, the computation of $\O_t^{(i)}$ as well as the communication of $\Y_{t+1}^{(i)}$ only happens when $t+1 \in \IM_T$.
These make the additional communication cost affordable.
\end{remark}

\paragraph{Sign-Fixing.}
While OPT makes \texttt{LocalPower} more stable in practice, OPT incurs more local computation.
Specifically, it has time complexity $\OM(dk^2)$ via calling the SVD of $(\Z^{(i)}_t )^\top\Z^{(1)}_t$.
To attain both efficiency and stability, we propose to replace the $k\times k$ matrix $\O^{(i)}$ in \eqref{eq:opt:o} by
\begin{equation} \label{eq:opt:d}
    \D_t^{(i)}
    \; = \; \argmin_{\D \in \DM_k} \; \big\| \Z^{(i)}_t \D - \Z^{(1)}_t \big\|_F^2 ,
\end{equation}
where $\DM_k$ denotes all the $k \times k$ diagonal matrices with $\pm 1$ diagonal entries.
$\D_t^{(i)}$ can be computed in $\OM (k d)$ time by 
\begin{equation*}
    \D_t^{(i)} [j, j]
    \; = \; \sgn \Big( \, \Big\langle  \Z^{(i)}_t [:, j] \, , \: \Z^{(1)}_t [:, j]  \Big\rangle \, \Big), 
    \quad \forall \; j \in [k] .
\end{equation*}
We empirically observe that sign-fixing serves as a good practical surrogate of OPT; it maintains good stability and achieves comparably small errors.
\begin{remark}
If we decay $p$,   $p$ will drop to one after a few communications.
When $p=1$, we stop using OPT (or sign-fixing); we simply set $\O_t^{(i)} $ (or $\D_t^{(i)} $) to the identity matrix.
\end{remark}

\begin{remark}
The technique of sign-fixing has been proposed in the setting of $k=1$ by~\citet{garber2017communication}.
In the $k=1$ setting, OPT and sign-fixing coincide with each other.
In~\eqref{eq:opt:d}, we provide a simple way to extend it to high-dimensional $k > 1$.
We compute $\D_t^{(i)}$ that simultaneously adjusts the signs of columns of $\Z_t^{(i)}$ and $\Z_t^{(1)}$.
There exists other way to handle the high-dimensional sign-fixing problem.
For example, if first $k$ eigenvalues are well-separated from others, we can reduce the top-$k$ sign-fixing problem to the one-dimensional sign-fixing problem instanced $k$ times.
\end{remark}

\section{Convergence Analysis}\label{sec:theory}
In this section we analyze the convergence of \texttt{LocalPower} and show the benefit of OPT under an ideal setting.
We use the projection distance of two subspaces as the metric for convergence evaluation.

\begin{defi} [Projection Distance]
\label{def:proj}
    Let $\U, \tiU \in \OM_{d \times k}$ be any matrices with orthonormal columns.
    The projection distance\footnote{Unlike the spectral norm or the Frobenius norm, the projection norm will not fall short of accounting for global orthonormal transformation. Check~\citet{ye2014distance} to find more information about distance between two spaces.} between them is
    \[
    \dist(\U, \tiU) \; \triangleq \; \big\| \U \U^\top - \tiU \tiU^\top \big\|_2.
    \]
\end{defi}

Projection distance is equivalent to
$ \dist(\U, \tiU) =\sin \theta_k(\U, \tiU)$
where $\theta_k(\U, \tiU)$ denotes the $k$-th largest principal angle between the subspaces spanned by $\U$ and $\tiU$.
Principal angles quantify how different two subspaces are.
We can actually calculate 
\begin{equation*}
    \theta_1 \big(\U, \tiU \big), \;\;
    \theta_2 \big(\U, \tiU \big), \;\;
    \cdots , \;
    \theta_k \big(\U, \tiU \big) 
\end{equation*}
via the SVD of $\U^\top\tiU$.
The $l$-th largest singular value of $\U^\top\tiU$ is equal to $\cos \theta_l (\U, \tiU)$ for all $l = 1, \cdots , k$.

\begin{defi}[Local Approximation] \label{def:eta}
    Let $\M_i = \frac{1}{s_i} \A_i^\top \A_i$ be hosted by the $i$-th worker.
    Let $\M = \frac{1}{n} \sum_{i=1}^m \A_i^\top \A_i = \sum_{i=1}^m p_i \M_i$.
    Define
	\begin{equation*}
	\eta \; \triangleq \; \max_{i \in [m] } \frac{ \|\M_i - \M \|_2 }{  \|\M\|_2 },
	\end{equation*}
\end{defi}
which measures how far the local matrices, $\M_1, \cdots , \M_m$, are from $\M$.
If $s_i = p_i n$ is sufficiently larger than $d$, then $\eta$ is sufficiently small.

\begin{defi} [Residual Error]  \label{def:rho}
If OPT is not used, define
\begin{equation*}
    \rho_t \; \triangleq \; 
    \max_{i \in [m]} \big\|\Z_t^{(i)} - \Z_t^{(1)} \big\|_2.
\end{equation*}
If OPT is used, define
\begin{equation*}
    \rho_t \; \triangleq \; 
    \max_{i \in [m]} 
    \min_{\O \in \OM_k}
    \big\|\Z_t^{(i)} \O - \Z_t^{(1)} \big\|_2.
\end{equation*}
\end{defi}
The residual error $\rho_{t}$  measures how the local top-$k$ eigenspace estimator varies across the $m$ worker.
Based on the definition, using OPT makes $\rho_t$ smaller than without using OPT.
When $t \in \IM_T$, $\Z_t^{(1)} = \cdots =  \Z_t^{(m)}$ and thus $\rho_t = 0$.
When $t \notin \IM_T$, each local update would enlarge $\rho_t$.
Hence, intuitively $\rho_t$ depends on $p$, i.e., the local iterations between two communications.
However, later we will show that with OPT $\rho_t$ does not depend on $p$ (when $p$ is sufficiently large) while it depends on $p$ without OPT.
A residual error is inevitable in previous literature of empirical risk minimization that uses local updates to improve communication efficiency~\cite{stich2018local,wang2018cooperative,yu2019parallel,li2019convergence,li2019communication}.
In our case, it takes the form of $\rho_t$.

\begin{assumption}[Uniformly small residual errors]
\label{assum:error_condition}
Let $r$ be the running column number, $\sigma_j$ be the $j$-th largest singular value of $\M$, and $\epsilon \in (0, 0.5)$ be a constant. 
Assume $\eta \leq \frac{1}{3 \kappa }$ where $\kappa = \|\M\|\|\M^{\dagger}\|$ is the condition number of $\M$.
Assume for all $t \in [T]$,
\begin{equation}
\label{eq:error_cond}
    \eta \cdot 1_{t \notin \IM_T} 
    + (1-p_{\max}) (\rho_t + \rho_{t-1}1_{t \notin \IM_T} )
    \; = \; \OM (\epsilon_0 ),
\end{equation}
where $p_{\max} = \max_{i \in [m]}p_i$,  $1_{t \notin \IM_T}$ is the indication function of the event $\{t \notin \IM_T\}$, and
\[
\epsilon_0 
\; \triangleq \;
\frac{ \sigma_k - \sigma_{k+1}}{\sigma_1 \kappa} \min \left\{ \frac{\sqrt{r} - \sqrt{k-1}}{\tau \sqrt{d}}, \epsilon   \right\} 
\]	
for  some small constant $\tau>0$.
\end{assumption}



\begin{theorem}[Convergence for \texttt{LocalPower}]
	\label{thm:main}
	Let $\tau$ be a positive constant, and Assumption~\ref{assum:error_condition} hold.
	Then, after $ | \IM_T|$ rounds of communication where
	\[ T = \Omega \left(   \frac{\sigma_k}{\sigma_k - \sigma_{k+1}} \log\left(\frac{\tau d}{\epsilon}\right)  \right), \]
	with probability at least $1 - \tau^{-\Omega(r+1-k)} - e^{-\Omega(d)}$, we have
	\[ \dist(\Z_T, \U_k) = \sin\theta_k (\Z_T, \U_k) \le \epsilon.   \]
\end{theorem}

Theorem~\ref{thm:main} shows \LP takes $T = \widetilde{\Theta} \left(   \frac{\sigma_k}{\sigma_k - \sigma_{k+1}} \right)$ iterations to obtain an $\epsilon$-optimal solution, the same quantity required by \DPI.
However, \LP uses less communications.
For example, with $\IM_T = \{0,p, 2p, \cdots, T\}$, \LP makes only 
$ | \IM_T| \: = \: \widetilde{\Theta}\left( \frac{1}{p}  \frac{\sigma_k}{\sigma_k - \sigma_{k+1}} \right) $
communications, saving a factor of $p$ than \DPI.

Theorem~\ref{thm:main} depends on Assumption~\ref{assum:error_condition} which requires~\eqref{eq:error_cond} holds for all $t \in [T]$.
What's more, the final error $\epsilon$ is positively related to $\eta$ and $\rho_t$ via~\eqref{eq:error_cond}.
The first part of~\eqref{eq:error_cond} (i.e., $\eta \cdot 1_{t \notin \IM_T} $) is incurred by the variety of $\M_i$'s.
So, if all devices have access to $\M$ (which implies $\M_1=\cdots = \M_m$), then it would vanish.
The second part~\eqref{eq:error_cond} is brought by intermittent communication.
Indeed, if communication happens at iteration $t$ (i.e., $t \in \IM_T$), we have $\rho_t = 0$ and $1_{t \notin \IM_T}=0$, implying ~\eqref{eq:error_cond} holds obviously.
Without communication, $\rho_t$ is likely to grow continually, which is harmful to obtaining an accurate solution.
Therefore, the assumption actually requires the communication interval $p$ is not too large.
From another hand, when $p$ is fixed, the assumption instead imposes restriction on $\eta$ when $t \notin \IM_T$, because we show in Theorem~\ref{thm:rho} that $\rho_t$ is bounded by a function of $\eta$.
If OPT is used, then $\rho_t = \OM(\eta)$, without dependence on $p$.
However, if OPT is not used, then $\rho_{t} = \OM(\sqrt{k} p \kappa^p \eta)$ has an exponential dependence on $p$.



\begin{theorem}[Benefits of OPT]
\label{thm:rho}
Let $\tau(t) \in \IM_T$ be the nearest communication time before $t$ and $p=t - \tau(t)$. 
Let $\mathrm{e}$ be the natural constant.
Assume $\eta \le \min(\frac{1}{3\kappa}, \frac{1}{p})$.
\begin{itemize}
	\item With OPT, $\rho_t$ is bounded by
	\[
	\min \left\{2\mathrm{e}^2\kappa^pp \eta, \frac{\eta\sigma_1}{\delta_k} + 2 \gamma_k^{p/4}  C_t \right\} = \OM(\eta),
	\]
	where $\gamma_k \in (0, 1)$,  $\delta_k = \Theta(\sigma_{k}-\sigma_{k+1})$, and $\limsup_{t} C_t = \OM(\eta+\epsilon)$.
	\item Without OPT, $\rho_t$ is bounded by
	\[
	4\mathrm{e}\sqrt{k}p \kappa^p \eta=\OM(\sqrt{k}p \kappa^p \eta).
	\]
\end{itemize}
\end{theorem}

Why using OPT has such an exponential improvement on dependence on $p$ in theory?
This is mainly because of the property of OPT.
Let $\O^* = \argmin_{\O \in \OM_{k}} \|\U - \tiU \O\|_F$ for $\U, \tiU \in \OM_{d \times k}$.
Then, up to some universal constant, we have
$\|\U - \tiU \O^*\|_2 \approxeq \dist(\U, \tiU).$
See Lemma~\ref{lem:orth} in  Appendix for a formal statement and detailed proof.
It implies up to a tractable orthonormal transformation, the difference between the orthonormal bases of two subspaces is no larger than the projection distance between the subspaces.
By the Davis-Kahan theorem (see Lemma~\ref{lem:DK}), their projection distance is not larger than $\OM(\eta)$ up to some problem-dependent constants.
However, without OPT, we have to use perturbation theory to bound $\rho_t$, which inevitably results in exponential dependence on $p$.

\section{Experiments}
\label{sec:exp}

\paragraph{Settings.}
We use 15 datasets available on the LIBSVM website.\footnote{This page contains them all. \url{https://www.csie.ntu.edu.tw/~cjlin/libsvmtools/datasets/}. See Table~\ref{tab:dataset} in the Appendix for $n, d$ information.}
The $n$ data samples are randomly shuffled and then partitioned among $m$ nodes so that each node has $s = \frac{n}{m}$ samples.
We set  $m= \max(\lfloor\frac{n}{1000}\rfloor, 3)$ so that each node has $s=1,000$ samples, unless $n$ is too small.
The features are normalized so that all the values are between $-1$ and $1$.
All the algorithms start from the same initialization $\Y_0$.
We fix the target rank to $k=5$.
Our focus is on communication efficiency, so we use communication rounds for evaluating the compared algorithms.
Due to the space limit, we defer more experiment details and additional experiment results to Appendix~\ref{appen:exp_all}.

\paragraph{Compared algorithms.}
We evaluate three variants of \LP: the vanilla version, with OPT, and with sign-fixing.
We compare our algorithms with one-shot algorithms, UDA \cite{fan2019distributed}, WDA \cite{bhaskara2019distributed}, and DR-SVD\footnote{It is a direct distributed variant of Randomized SVD, the latter proposed by~\citet{halko2011finding}.};
the algorithms are described in Appendix~\ref{appen:baseline}.

\paragraph{Final precision.}
In this set of experiments, we study the precision when the algorithms converge.
For three variants of \LP\, we fix $p=4$ (without decaying $p$).
We run each algorithm 10 times and report the mean and standard deviation (std) of the final errors.
Due to limited space, Table~\ref{table:baseline} shows the results on 7 datasets.
Table~\ref{table:baseline_additional} and Figure~\ref{fig:error_box} (in the appendix) present all the results on the 15 datasets.
Out of the 15 datasets, \LP has the smallest error mean and std on 12 datasets.
The results indicate that one-shot methods do not find high-precision solutions unless the local data size is sufficiently large.

The final error depends on $p$.
With $p > 1$, the final error, $\lim_{t \rightarrow \infty} \sin\theta_k(\Z_t, \U_k)$, does not convergence to zero; instead, it remains to be a constant after a number of iterations.
Figure~\ref{fig:sinp} shows that the final error depends on $p$: the bigger $p$ is, the bigger the final error is.
The final error is not sensitive to $p$.
The final error stops growing with $p$ when $p$ is sufficiently large.
Note that \LP as $p \rightarrow \infty$ becomes a one-shot algorithm, that is, the algorithm performs only one aggregation.\footnote{The one-shot method is different from those we introduced in related work. It simply averages local top-$k$ eigenvectors rather than distributed averaging methods (see Algorithm~\ref{alg:unweited_distributed} and~\ref{alg:weited_distributed}). }
One-shot algorithms typically have reasonable empirical performance and theoretical bounds.

The final error depends on $m$.
Big $m$ means smaller local sample size, $s = \frac{n}{m}$, and thereby big matrix approximation error, $\eta$ (in Definition~\ref{def:eta}).
Our theory indicates that big $m$ (and thereby big $\eta$) is bad for the final error.
The empirical results in Figure~\ref{fig:sinp} corroborate our theories.


\begin{table*}[htbp]
	\newcommand{\tabincell}[2]{\begin{tabular}{@{}#1@{}}#2\end{tabular}}  
	\caption{We report the errors of three proposed algorithms and three baselines methods on seven datasets.
	We show the mean errors of ten repeated experiments with its standard deviation enclosed in parentheses.
	The result of full fifteen datasets is shown in Table~\ref{table:baseline_additional}.
	}
	\label{table:baseline}
	\vspace{3mm}
	\centering 
		\resizebox{\textwidth}{18mm}{
\begin{tabular}{c|c|c|c|c|c|c}
	\toprule
	{\multirow{2}{*}{Datasets}}& 
	\multicolumn{3}{c}{\LP $(p=4)$} \vline&
	{\multirow{2}{*}{\texttt{DR-SVD}}}& 
	{\multirow{2}{*}{\texttt{UDA}}}& 
	{\multirow{2}{*}{\texttt{WDA}}}\\
	\cline{2-4}
	& OPT & Sign-fixing & Vanilla &  &  & \\
	\hline
	A9a& \textbf{4.09e-03} (\textbf{4.20e-4}) & 5.82e-03 (1.41e-3) &8.13e-02 (3.44e-2)&4.63e-02 (9.24e-3)& 2.64e-02 (1.58e-2) &2.40e-02 (1.50e-2)\\
	Abalone & \textbf{3.16e-03} (2.89e-3)& 3.85e-03 (\textbf{2.54e-3}) &3.03e-02 (5.70e-2)& 3.20e-01 (2.30e-1)& 1.03e-01 (9.38e-2)&1.03e-01 (9.18e-2)\\
	Acoustic & \textbf{1.83e-03} (4.40e-4)& 2.03e-03 (\textbf{3.90e-4})& 2.38e-03 (8.5e-4)& 1.54e-02 (6.59e-3) &7.76e-03 (2.64e-3)& 6.67e-03 (2.42e-3)\\
	Combined &6.01e-03 (1.59e-3)& \textbf{5.57e-03} (\textbf{1.05e-3})&2.47e-02 (3.40e-2)& 5.19e-02 (6.23e-3) &4.63e-02 (2.97e-3)&4.16e-02 (2.76e-2)\\
	Connect-4 &  \textbf{1.27e-02} (4.52e-3)&1.81e-02 (\textbf{3.79e-3})&1.70e-02 (4.35e-3)& 1.61e-02 (2.96e-3)& 1.65e-01 (3.48e-2)&1.56e-0 1(3.26e-2)\\
	Covtype & 7.38e-03 (6.50e-4)& \textbf{6.23e-03} (\textbf{4.70e-4}) & 1.28e-02 (1.88e-3)& 1.82e-01 (8.73e-2)& 6.09e-02 (9.70e-3)& 5.60e-02 (9.41e-3)\\
	MSD & 9.90e-03 (1.21e-3)& \textbf{9.62e-03} (\textbf{5.20e-4})&1.44e-02 (1.58e-3)& 3.01e-02 (9.64e-3)& 1.55e-02 (1.39e-3)&1.92e-02 (1.14e-3)\\
	\bottomrule
\end{tabular}}
\vspace{-0.1in}
\end{table*}
\begin{table*}[!ht]
    \vspace{3mm}
 	\begin{minipage}[b]{0.595\textwidth}
	\caption{Error comparison of \texttt{LocalPower} with decay strategy under the same setting of Table~\ref{table:baseline}.
	See Table~\ref{table:baseline_decay_additional} for full results.
	In theory, \LP with decay strategy achieves zero error.
	}
	\label{table:baseline_decay}
	\vspace{3mm}
	\centering 
			\resizebox{\textwidth}{18.5mm}{
	\begin{tabular}{c|c|c|c}
		\toprule
		{\multirow{2}{*}{Datasets}}& 
		\multicolumn{3}{c}{\LP with $p=4$ and the decay strategy}\\
		\cline{2-4}
		& OPT & Sign-fixing & Vanilla  \\
		\hline
		A9a& 4.84e-03 (1.40e-02)& 1.52e-03 (4.08e-03) &\textbf{3.11e-04} (\textbf{4.84e-04}) \\
		Abalone  &
		\textbf{3.50e-10} (4.10e-10)&4.14e-10  (\textbf{4.00e-10})&6.12e-10 (6.77e-10)\\
		Acoustic &
		\textbf{1.40e-05} (\textbf{2.16e-05}) &1.92e-05 (3.72e-05) &2.28e-05 (4.91e-05)\\
		Combined &
		3.68e-03 (5.63e-03) &7.74e-03 (1.70e-02)& \textbf{2.99e-03} (\textbf{3.88e-03}) \\
		Connect-4 &
		4.90e-03 (8.47e-03)&3.58e-03 (4.35e-03)&\textbf{3.09e-03} (\textbf{3.16e-03})\\
		Covtype &
		5.57e-04 (1.55e-03)&\textbf{4.95e-05} (\textbf{5.40e-05}) &8.01e-05 (8.62e-05)\\
		MSD & 2.75e-05 (3.34e-05)&\textbf{2.47e-05} (3.27e-05)& 3.02e-05 (\textbf{2.10e-05})\\
		\bottomrule
	\end{tabular}}
\end{minipage} 
\hfill
\begin{minipage}[b]{0.395\textwidth}
\caption{
	The value of $\eta$ under uniform partitions on some datasets. 
	It can be seen that for a fixed $n$, the larger $m$, the larger $\eta$.
	Full results see Table~\ref{table:2}.
}
\label{table:eta}
	\vspace{3mm}
\centering 
\resizebox{\textwidth}{18.5mm}{
\begin{tabular}{c|c|c|c}
	\toprule
	Dataset & $m=20$ & $m=40$ & $m=60$\\
	\midrule
	A9a&0.034 & 0.0563 &0.0701\\
	Abalone&0.1089 &0.23 &  0.2458 \\
	Acoustic&0.0063 &0.0107 &0.0134 \\
	Combined&0.006 & 0.0089 &0.0113 \\
	Connect-4&0.0376& 0.054 & 0.0771 \\
	Covtype&0.0078 &0.011 & 0.0159 \\
	MSD&0.0007& 0.0009 &0.0012\\
	\bottomrule
\end{tabular}}
\end{minipage}
\end{table*}


\paragraph{Effect of local power iterations.}
In this set of experiments, we set $p$ to $1$, $2$, $4$, or $8$ (without decaying $p$) and compare the convergence curves.
Note that \LP with $p = 1$ is the standard distributed power iteration (\texttt{DPI}).
We plot the error, $\sin\theta_k (\Z_t, \U_k)$, against communications.
The convergence curves indicate how $p$ affects the communication efficiency.
Figure~\ref{fig:p} shows the experimental results on one dataset.
Due to page limit, the results on the other datasets are left to the appendix; see Figures~\ref{fig:p_orth},~\ref{fig:p_sign}, and~\ref{fig:p_iden}.
In all the experiments, large $p$ leads to fast convergence in the beginning but has a nonvanishing error in the end.

Some machine learning tasks, such as principal component analysis and latent semantic analysis~\cite{deerwester1990indexing}, do not require high-precision solutions.
In this case, \LP is advantageous over \texttt{DPI}, as \LP finds a satisfactory solution using very few communications.
For two-stages methods like~\cite{garber2017communication}
It is also implied that \LP helps
If a higher precision is required, we can decay $p$ so that \LP will have the same precision as \texttt{DPI}.
While one-shot algorithms are more communication-efficient, their precision is too low unless each node has a large sample size.


\paragraph{The decay strategy.}
We have observed that large $p$ fastens initial convergence but enlarges the final error.
By contrast, $p=1$ has the lowest error (which actually can be zero) but also the lowest convergence rate.
Similar phenomena have been previously observed in distributed empirical risk minimization~\cite{wang2018adaptive,li2019communication}.
To allow for both fast initial convergence and vanishing final error, we are motivated to decay $p$ gradually.
We halve $p$ every iteration until it reaches $1$.
We apply the decay strategy to the three variants of \texttt{LocalPower}.
For each setting and each dataset, we repeat the experiment 10 times and report the mean and std.
Table~\ref{table:baseline_decay} and Figure~\ref{fig:decay} show the results on some datasets.
The results on all the 15 datasets are left to the appendix; see Table~\ref{table:baseline_decay_additional}, Figures~\ref{fig:p_orth_decay},~\ref{fig:p_sign_decay}, and~\ref{fig:p_iden_decay}.
The decay strategy not only makes convergence faster but also improves the final precision well.


\paragraph{Stability.}
In almost all the  experiments, \LP with OPT has smaller std and more stable convergence curves than \LP without OPT.
Why does OPT improve stability.
Theorem~\ref{thm:rho} shows that with OPT, $\rho_t$ (in Definition~\ref{def:rho}) has a linear function of $p$.
Even if $p$ is large, Assumption~\ref{assum:error_condition} can be satisfied, and thus Theorem~\ref{thm:main} guarantees the convergence of \LP with OPT.
However, Theorem~\ref{thm:rho} shows that without using OPT, $\rho_t$ is an exponential function of $p$.
If $p$ is large, Assumption~\ref{assum:error_condition} is violated, and thus the convergence of \LP without OPT is not guaranteed.


Sign-fixing is practical alternative to OPT.
Table~\ref{table:baseline} and Figures~\ref{fig:p_orth} and~\ref{fig:p_sign} show that sign-fixing has comparable stability as OPT.
To explain why sign-fixing works, we first explain what causes instability.
Note that if we flip the signs of some columns of $\Z_{t}^{(i)}$, the subspace $\RM(\Z_{t}^{(i)})$ remains the same.
During the local power iterations on the $i$-th node, the signs of the columns of $\Z_{t}^{(i)}$ can flip.
While the sign flipping does not affect $\RM(\Z_{t}^{(i)})$, it changes the outcome of the aggregation of $\Z_{t}^{(1)}, \cdots, \Z_{t}^{(m)}$.
The sign-fixing method can counteract sign flippings and thereby stabilizes \texttt{LocalPower}. 


Table~\ref{table:baseline_decay} shows that \LP with decaying $p$ has better stability.
With the decaying strategy used, $p$ will drop to $1$ after several communications, and \LP becomes the standard \texttt{DPI} which does not suffer from the instability issue.

\begin{figure*}[!ht]
	\centering
	\hspace{-0.1in}
	\subfigure[The performance of \LP with different $p$ on Covtype dataset]{
		\includegraphics[height=36mm]{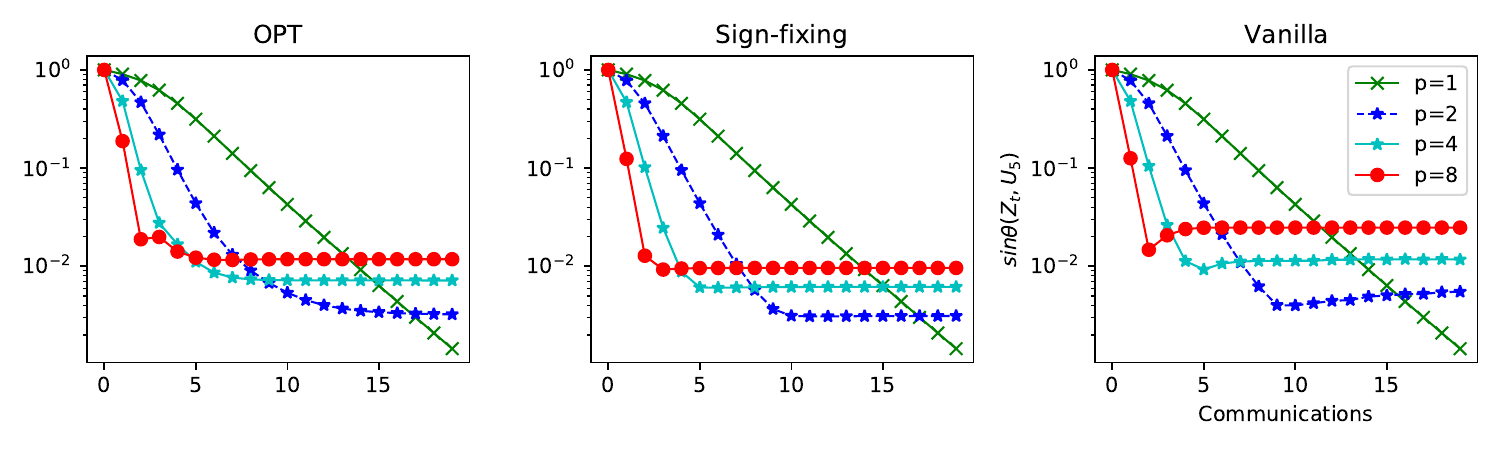}
		\label{fig:p}
	}\hspace{-0.1in}
	\subfigure[Stability on A9a dataset]{
		\includegraphics[height=36mm] {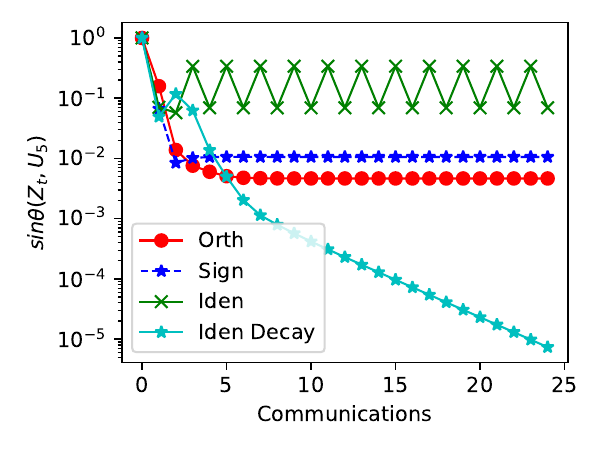}
		\label{fig:stable}
	}\hspace{-0.1in}
		\vspace{-0.1in}
	\caption{(a) We illustrate the convergence of \LP with different $\FM$'s and various $p$ on Covtype dataset where $\A \in \RB^{581,012 \times 54}$.
	See Figure~\ref{fig:p_orth},~\ref{fig:p_sign} and~\ref{fig:p_iden} for full results.
	(b) The vanilla \LP sometimes fluctuates and even diverges (see Figure~\ref{fig:p_iden} for full results). We can stabilize it in two ways: (i) use $\OM_{k}$ or $\DM$ instead or (ii) use the decay strategy.}
	\label{fig:p_all}
\end{figure*}

\paragraph{Effect of local sample size.}
Since the $n$ data samples are partitioned among $m$ nodes uniformly at random, every node holds $s=\frac{n}{m}$ samples.
Figure~\ref{fig:m} shows that small $m$, equivalently, big $s$, is good for \texttt{LocalPower}.
We use $\eta = \max_{i \in [m]} \|\M_i -\M\|_2/\|\M\|_2$ to measure the difference between a local covariance matrix and the full one.
We give the values of $\eta$ under different uniform partitions in Table~\ref{table:eta}.
It shows that if $s$ is large (so $m$ is small), $\eta$ is small, which implies $\M_1, \cdots , \M_m$ well approximate the global matrix $\M$, and the residuals accumulated by the local iterations are small.
It in turn makes the curves with small $m$ have small errors.
This can be explained by our theories.

\begin{figure*}[!ht]
    \vspace{3mm}
	\centering
	\subfigure[Decay strategy]{
		\includegraphics[height=36mm] {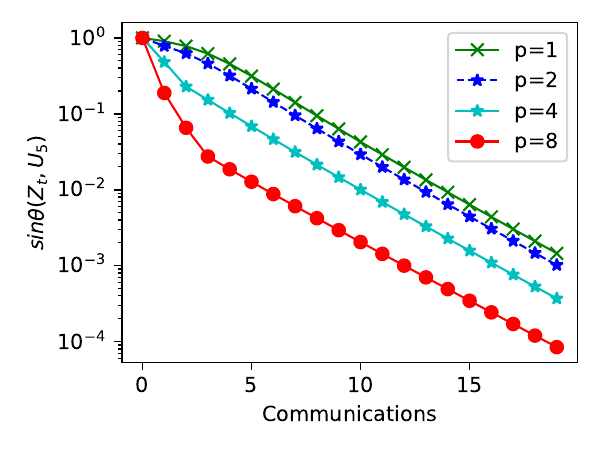}
		\label{fig:decay}
	}
	\subfigure[Vary device number $m$]{
		\includegraphics[height=36mm] {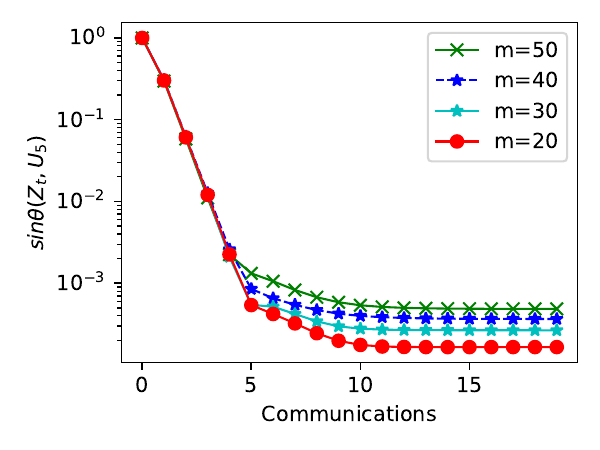}
		\label{fig:m}
	}
	\subfigure[Error dependence on $p$ and $m$]{
		\includegraphics[height=36mm] {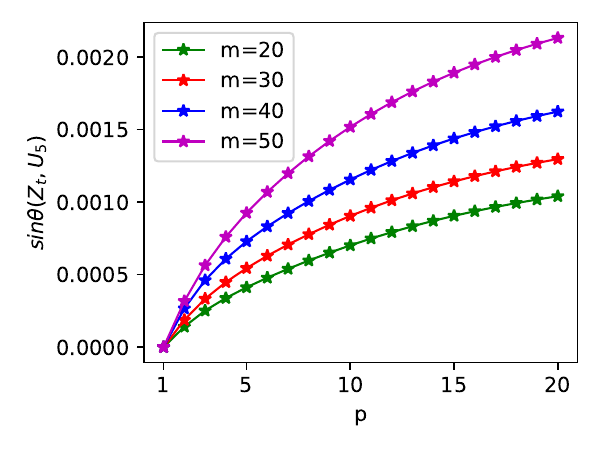}
		\label{fig:sinp}
	}
	\vspace{-1mm}
	\caption{Some results on Covtype dataset.
		(a) A typical convergence curve of the decay strategy.
		See Figure~\ref{fig:p_orth_decay},~\ref{fig:p_sign_decay} and~\ref{fig:p_iden_decay} for full results.
		(b) The smaller $m$, the faster convergence as well as the smaller error. See Figure~\ref{fig:m_orth} and \ref{fig:m_sign} for full results.
		(c) The error depends positively on $p$ and $m$.
		See Figure~\ref{fig:sin_m_p} for full results.
	}
	\label{fig:ver}
\end{figure*}

\section{Discussion}
\label{sec:discuss}
\paragraph{Smallness on $\eta$.}
Theorem~\ref{thm:main} requires $\eta = \OM(\frac{1}{\kappa})$ which might be too stringent in practice. 
If we use a refined analysis just like~\citet{guo2021privacy}, it can be relaxed to $\eta = \OM({1})$ as well as $\epsilon_0$ whose dependence on $\kappa$ can be removed.\footnote{In particular,~\citet{guo2021privacy} analyzes the convergence of the virtual sequence in a form of $\overline{\Z}_t = \sum_{i=1}^np_i\Z_t^{(i)}\D_t^{(i)}$, while we focus on the weighted $Y_t^{(i)}$, i.e., $\overline{\Y}_t= \sum_{i=1}^np_i\Y_t^{(i)}\D_t^{(i)}$.
Roughly speaking, $\|\overline{\Y}_t\|$ is about $\|\M\|_2$ larger than $\|\overline{\Z}_t\|$, while $\|\overline{\Y}_t^\dagger\|$ is about $\|\M^\dagger\|_2$ smaller than $\|\overline{\Z}_t^\dagger\|$.
It leads to an additional factor $\kappa = \|\M\|_2\|\M^\dagger\|_2$.
 } 
 Besides, the concurrent work \citep{charisopoulos2020communication} provides sharper analysis on one-shot average via OPT, which might be used to refine our analysis and relax the strictness on $\eta$ further.
 
 \paragraph{Increase local sample size.}
In addition to  OPT or the decay strategy, we find that increasing local data size also reduces the final error.
Intuitively, if $s_i$ is sufficiently large, then $\M_i = \frac{1}{s_i}\A_i^\top\A_i$ will be very close to $\M = \frac{1}{n}\A^\top\A$.
Actually, this is true if we construct each $\A_i$ by sampling uniformly from the overall data $\A$ (see Lemma~\ref{lem:uniform}).
Therefore, to make $\eta$ sufficiently small, we can increase local data size.
If the total number of rows $n$ is fixed in advance, increasing each $s_i$ is equivalent to decreasing the number of worker nodes $m$.

The term $\eta = \max_{i \in [m]} \|\M_i -\M\|_2/\|\M\|_2$ is commonly used to analyze matrix approximation problems.
It aims to ensure each $\A_i$ is a typical representative of the whole dataset $\A$.
Prior work \cite{gittens2013revisiting,woodruff2014sketching,wang2016spsd} showed that uniform sampling and the partition size in Lemma~\ref{lem:uniform} suffice for that $\M_i$ well approximates $\M$.
The proof is based on matrix Bernstein~\cite{tropp2015introduction}.
Therefore, under uniform sampling, the smallness of $\eta$ means sufficiently large local dataset size (or equivalently a small number of worker nodes).
This can be also seen in Table~\ref{table:eta}.

One may doubt the motivation of each device anticipating the cooperated eigenspace estimation due to the large local dataset assumption.
Here we focus on the empirical PCA rather than the population PCA.
This implies we inevitably suffer a statistic error that will diminish if we have an infinite number of total samples.
As a result, if $m$ devices participate in the training with comparable local data size, the statistical error can be reduced by a factor of $\sqrt{m}$.
See Appendix~\ref{appen:stat_error} for more details.

\begin{lem}[Uniform sampling.]
	\label{lem:uniform}
	Let $\epsilon, \delta \in (0, 1)$.
	Assume the rows of $\A_i$ are sampled from the rows of $\A$ uniformly at random.
	Assume each node has sufficiently many samples, that is, for all $i \in m$, 
	\[
	s_i  \geq \frac{3 \mu \rho}{\epsilon^2}\log \big( \frac{\rho m}{{\delta}} \big),
	\]
	where $\rho= \mathrm{rank}(\A)$ and $\mu$ is the row coherence of $\A$.\footnote{The row coherence of $\A$ is defined by $\mu(\A) = \frac{n}{d} \max_{j} \| \u_j \|_2^2 \in [1, \frac{n}{d}]$ where $\u_j$ comes from the column orthonormal bases of $\A$.}
	With probability greater than $1 - \delta$, we have 
	\[
	\eta = \max_{i \in [m]} \|\M_i -\M\|_2/\|\M\|_2 \le \epsilon.
	\]
\end{lem}

\paragraph{Error dependence.}
The choice of $\IM_T$ determines the frequency \LP communicates.
We explore the use of $\IM_T=\{0, p, 2p, \cdots, p\}$ and the decay strategy in experiments.
When $p = 1$, \texttt{LocalPower} reduces to \texttt{DPI}.
As a result, both the residual errors $\varPsi_t$ and $\varOmega_t$ vanish.
As shown in Lemma~\ref{lem:informal_dpm}, \DPI converges to zero error.
When $p \ge 2$, the error $\sin \theta_k$ typically increases with $p$ and is non-zero.
Corollary~\ref{col:error} depicts the relationship between the error and  problem-dependent parameters including $n, m, p$.
The proof is provided in Appendix~\ref{append:proof-coro}.
It can be proved by Theorem~\ref{thm:rho} and Lemma~\ref{lem:uniform}. 
\begin{col}
	\label{col:error}
	Under uniform sampling and assuming $s_i = \Theta(\frac{n}{m})$ and $n$ is sufficiently large, 
, with probability 1- $\delta$, \LP with OPT has an asymptotic error satisfying
	\[  \limsup_{t \to \infty} \sin\theta_k (\Z_T, \U_k) = \OM\left( h_p\left(\sqrt{\frac{m}{n}}\right)  \right), 
	\]
	where $h_p(x)$ is non-negative and increasing in (typically both $p$ and) $x$, and it satisfies $h_1(x) = 0$ as well as $0 \le h_p(x) \le C x$ for some $C$.
	We hide constants $\sigma_k, k, d, \rho, \kappa, \delta$ in the big-$\OM$ notation and $h_p(\cdot)$.
	However, with any decay strategy in which $p$ converges to $1$ finally, \LP achieves zero error asymptotically.
\end{col}

Corollary~\ref{col:error} says that when $p$ goes to infinity, the error is saturated and has a finite limit, because $h_p(\cdot)$ is  bounded.
The curve of error v.s. $p$ and $m$ in Figure~\ref{fig:sinp} validates the conclusion.
Indeed, the extreme case of super large $p$ means \texttt{LocalPower} reduces to the one-shot method, which has a non-zero optimization error typically.
Corollary~\ref{col:error} also reveals methods to reduce error.
To that end, we can (i) use the decay strategy ($p \downarrow$) to achieve arbitrary error or (ii) reduce the number of devices ($m \downarrow$) or collect more data points $n \uparrow$.
Both methods work in experiments empirically.

\paragraph{Dependence on $\sigma_{k} {-}\sigma_{k+1}$.}
Our result depends on $\sigma_k {-} \sigma_{k+1}$ even when $r>k$ where $r$ is the number of columns used in subspace iteration.
If we borrow the tool of \citet{balcan2016improved} rather than that of \citet{hardt2014noisy}, we can improve the result to a slightly milder dependency on  $\sigma_k{-}\sigma_{q+1}$, where $q$ is any intermediate integer between $k$ and $r$.
In particular, the required iteration $T$ will decrease from $\widetilde{\OM} \left(   \frac{\sigma_k}{\sigma_k - \sigma_{k+1}}\right)$ to $\widetilde{\OM} \left(   \frac{\sigma_k}{\sigma_k - \sigma_{q+1}}  \right)$.
It means using additional columns fastens convergence.
For a formal statement, please refer to Appendix~\ref{appen:gap}.

\paragraph{Further extensions.}
Our proposed \LP is simple, effective and well-grounded.
While we analyze it only on the centralized setting, \texttt{LocalPower} can be extended to broader settings, such as decentralized setting~\cite{gang2019fast} and streaming setting~\cite{raja2020distributed}.
To further reduce the communication complexity, we can combine \texttt{LocalPower} with sketching techniques~\cite{boutsidis2016optimal,balcan2016communication}.
For example, we could sketch each $\Y_t^{(i)}$ and communicate the compressed iterates to a central server in each iteration.
We leave the extensions to our future work.
Besides, in typical federated learning structures, real systems clients might not correspond to the central server due to connection failure.
It is also possible to consider partial participation of clients and the optimal way of client selection~\citep{reisizadeh2020fedpaq,chen2020optimal}.
\citet{guo2021privacy} makes an attempt towards the direction.


\section{Conclusion}
We have developed a communication-efficient distributed algorithm named \texttt{LocalPower} to solve the truncated SVD.
Every worker machine performs multiple (say $p$) local power iterations between two consecutive communications.
We have theoretically shown that \texttt{LocalPower} converges $p$ times faster (in terms of communication) than the baseline distributed power iteration, if the residual error is uniformly small.
To reduce the residual error, we can (i) use OPT or sign-fixing, (ii) make use of a decay strategy that halves $p$ gradually, and (iii) increase local data size.
Both OPT and sign-fixing are more stable, while sign-fixing additionally is computationally efficient. 
The strategy is motivated by an experimental phenomenon that large $p$ often leads to a quick initial drop of loss but a higher final error.
The decay strategy obtains zero error asymptotically in theory and has better convergence performance in experiments.
We have conducted the thorough experiments to show the effectiveness of \texttt{LocalPower} and all the theories are agree with our empirical experiments.

\section*{Acknowledgement}
Li, Chen and Zhang have been supported by the National Key Research and Development Project of China (No. 2018AAA0101004 \& 2020AAA0104400),  and Beijing Academy of Artificial Intelligence (BAAI).

\bibliographystyle{icml2021}
\bibliography{bib/svd,bib/distributed,bib/optimization,bib/matrix,bib/system,bib/ml,bib/decentralized}

\newpage
\appendix

\begin{appendix}
	\onecolumn
	\begin{center}
		{\huge \textbf{Appendix}}
	\end{center}

\section{Proof for Section~\ref{sec:theory}}
\label{appen:proof}

\subsection{Angles Between Two Equidimensional Subspaces}
In this section, we introduce full definitions and lemmas on metrics between two subspaces, which will be useful in our following proof.

\paragraph{Principal Angles.} 
Given two matrices $\U, \tiU \in \OM_{d \times k}$ which are both full rank with $1 \le k \le d$, we define the $i$-th ($1 \le i \le k$) principal angle between $\U$ and $\tiU$ in a recursive manner:
\begin{equation}
\label{eq:theta}
\theta_i(\U, \tiU)  = \min \bigg\{  \arccos\left(  \frac{\x^\top\y}{\|\x\|\|\y\|} \right): \x \in \RM(\U), \y \in \RM(\tiU), \x \perp \x_j, \y \perp \y_j, \forall  j < i  \bigg\}
\end{equation}
where $\RM(\U)$ denotes by the space spanned by all columns of $\U$.
In this definition, we require that $0 \le \theta_1 \le \cdots \le \theta_k \le \frac{\pi}{2}$ and that $\{\x_1, \cdots, \x_k\}$ and $\{\y_1, \cdots, \y_k\}$ are the associated principal vectors.
Principal angles can be used to quantify the differences between two given subspaces.

We have following facts about the $k$-th principal angle between $\U$ and $\tiU$:
\begin{fact}
\label{fact}
Let $\U^{\perp}$ denote by the complement subspace of $\U$ (so that $[\U, \U^{\perp}] \in \RB^{d \times d}$ forms an orthonormal basis of $\RB^d$) and so dose $\tiU^{\perp}$,
	\begin{enumerate}
		\item $ \sin \theta_k(\U, \tiU) =  \| \U^\top \tiU^{\perp}\| = \| \tiU^\top \U^{\perp}\|$;
		\item $ \tan \theta_k(\U, \tiU) = \| \left[ (\U^{\perp})^{\top}\tiU\right] (\U^{\top} \tiU)^{\dagger}\| $ where $\dagger$ denotes by the Moore–Penrose inverse.
		\item For any reversible matrix $\R \in \RB^{k \times k}$, $ \tan \theta_k(\U, \tiU) = \tan \theta_k(\U, \tiU \R)$.
	\end{enumerate}
\end{fact}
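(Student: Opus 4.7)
The plan is to reduce all three identities to the singular value characterization of principal angles: if $\U^\top \tiU = \Q_1 C \Q_2^\top$ is a thin SVD with $C = \diag(\cos\theta_1,\ldots,\cos\theta_k)$ and $\cos\theta_1 \geq \cdots \geq \cos\theta_k$, then the diagonal entries are exactly the cosines of the principal angles from~\eqref{eq:theta}. I would invoke this as a standard fact (e.g., Golub--Van Loan) or derive it by a short induction: the maximizing pair $(\x_1,\y_1)$ in~\eqref{eq:theta} corresponds to the leading right/left singular vectors of $\U^\top \tiU$, and imposing orthogonality to the previous principal vectors reduces each subsequent stage to the deflated matrix.

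For part 1, I would start from $\tiU\tiU^\top + \tiU^\perp(\tiU^\perp)^\top = \I_d$. Sandwiching by $\U^\top$ and $\U$ and using $\U^\top\U=\I_k$ yields $\I_k = (\U^\top\tiU)(\U^\top\tiU)^\top + (\U^\top\tiU^\perp)(\U^\top\tiU^\perp)^\top$. Consequently $(\U^\top\tiU^\perp)(\U^\top\tiU^\perp)^\top$ has eigenvalues $1-\cos^2\theta_i=\sin^2\theta_i$, whose largest is $\sin^2\theta_k$ since the $\theta_i$ are sorted in increasing order. Hence $\|\U^\top\tiU^\perp\| = \sin\theta_k$. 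Swapping the roles of $\U$ and $\tiU$ and invoking the symmetry $\theta_k(\U,\tiU)=\theta_k(\tiU,\U)$ immediately gives the companion equality $\|\tiU^\top\U^\perp\| = \sin\theta_k$.

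For part 2, I would apply the CS decomposition to $[\U,\U^\perp]^\top\tiU$. This produces orthogonal matrices $\Q_1\in\RB^{k\times k}$, $\Q_2\in\RB^{k\times k}$, and $\Q_3\in\RB^{(d-k)\times(d-k)}$ such that $\U^\top\tiU = \Q_1 C \Q_2^\top$ and $(\U^\perp)^\top\tiU = \Q_3\begin{bmatrix}S\\ 0\end{bmatrix}\Q_2^\top$, where $S=\diag(\sin\theta_1,\ldots,\sin\theta_k)$ (the zero block is empty when $d\le 2k$). A direct computation then yields $(\U^\perp)^\top\tiU (\U^\top\tiU)^\dagger = \Q_3\begin{bmatrix}S C^\dagger\\ 0\end{bmatrix}\Q_1^\top$, whose spectral norm equals the largest diagonal entry of $S C^\dagger$, namely $\max_i \sin\theta_i/\cos\theta_i = \tan\theta_k$, with the convention $\tan(\pi/2)=\infty$ when $C$ loses rank.

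For part 3, the cleanest argument is geometric: the recursive definition~\eqref{eq:theta} depends only on the subspaces $\RM(\U)$ and $\RM(\tiU)$, and since $\R$ is invertible, $\RM(\tiU\R) = \RM(\tiU)$, so $\theta_k(\U,\tiU) = \theta_k(\U,\tiU\R)$. An algebraic check when $\U^\top\tiU$ is invertible is equally quick: $(\U^\top\tiU\R)^\dagger = \R^{-1}(\U^\top\tiU)^{-1}$, so the $\R$ and $\R^{-1}$ cancel inside $(\U^\perp)^\top(\tiU\R)(\U^\top\tiU\R)^\dagger$, leaving the expression for $\tan\theta_k(\U,\tiU)$ unchanged. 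The main obstacle is the pseudoinverse step in part 2 when $\theta_k=\pi/2$: one has to justify $(\Q_1 C \Q_2^\top)^\dagger = \Q_2 C^\dagger \Q_1^\top$, which does hold because $\Q_1,\Q_2$ are orthogonal and hence the standard obstruction to a pseudoinverse of a product vanishes, but this is the one technical point that needs a careful verification rather than a one-line identity.
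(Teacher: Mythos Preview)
The paper does not prove this statement at all: it is stated as a ``Fact'' without proof, treated as standard background on principal angles, and then invoked later (e.g., in the proof of Lemma~\ref{lem:error}). So there is no paper proof to compare against.

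Your argument is correct and is exactly the standard route. The SVD/CS-decomposition characterization of principal angles is the right foundation; your derivation of part~1 from the completeness identity $\tiU\tiU^\top + \tiU^\perp(\tiU^\perp)^\top = \I_d$ is clean, and the CS decomposition handles part~2 directly. For part~3 the subspace-invariance observation is the simplest and fully rigorous justification. The only delicate point you already flagged: when $\theta_k = \pi/2$ the Moore--Penrose inverse in part~2 zeros out the singular direction rather than producing $\infty$, so the identity is really a statement for $\theta_k < \pi/2$ (which is the regime the paper actually uses it in). That caveat aside, nothing is missing.
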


 \paragraph{Projection Distance.} 
  Define the projection distance\footnote{Unlike the spectral norm or the Frobenius norm, the projection norm will not fall short of accounting for global orthonormal transformation. Check~\cite{ye2014distance} to find more information about distance between two spaces.} between two subspaces by 
\begin{equation}
\label{eq:dist}
\dist(\U, \tiU) = \| \U \U^\top - \tiU \tiU^\top\|.
\end{equation}
This metric has several equivalent expressions:
\[ \dist(\U, \tiU) = \| \U^\top \tiU^{\perp}\| = \| \tiU^\top \U^{\perp}\| =\sin \theta_k(\U, \tiU).   \]
More generally, for any two matrix $\A, \B \in \RB^{d \times k}$, we define the projection distance between them as
\[
\dist(\A, \B) = \| \U_\A \U_\A^\top - \U_\B \U_\B^\top\|
\]
where $\U_\A, \U_\B$ are the orthogonal basis of $\RM(\A)$ and $\RM(\B)$ respectively.

\paragraph{Orthogonal Procrustes.}
Let $\U, \tiU \in \RB^{d \times k}$ be two orthonormal matrices.
$\RM(\U)$ is close to $\RM(\tiU)$ does not necessarily imply $\U$ is close to $\tiU$, since any orthonormal invariant of $\U$ forms a base of $\RM(\U)$.
However, the converse is true.
If we try to map $\tiU$ to $\U$ using an orthogonal transformation, we arrive at the following optimization
\begin{equation}
\label{eq:best_O}
\O^* = \argmin_{\O \in \OM_k} \|\U -\tiU \O\|_F,
\end{equation}
where $\OM_k$ denotes the set of $k \times k$ orthogonal matrices.
The following lemma shows there is an interesting relationship between the subspace distance and their corresponding basis matrices.
It implies that as a metric on linear space, $\dist(\U, \tiU)$ is equivalent to $\|\U -\tiU \O^*\|_2$ (or $\min_{\O \in \OM_k} \|\U -\tiU \O\|_2$) up to some universal constant.
The optimization problem involved in is named as the orthogonal procrustes problem and has been well studied~\cite{schonemann1966generalized,cape2020orthogonal}.

\begin{lem}
	\label{lem:orth}
	Let $\U, \tiU \in \OM_{d \times k}$ and $\O^*$ is the solution of~\eqref{eq:best_O}.
	Then we have
	\begin{enumerate}
		\item $\O^*$ has a closed form given by $\O^* = \W_1\W_2^\top$ where $\tiU^\top\U = \W_1 \Si \W_2^\top$ is the singular value decomposition of $\tiU^\top\U$.
		\item Define $d(\U, \tiU) :=  \|\U -\tiU \O^*\|_2$ where $\|\cdot\|_2$ is the spectral norm.
		Then we have
		\[
		d(\U, \tiU) = \sqrt{2-2\sqrt{1- \dist(\U, \tiU)^2 }}  = 2\sin\frac{ \theta_k(\U, \tiU)}{2}.
		\]
		\item $d(\U_1, \U_2) = d(\U_2, \U_1)$ for any $\U_1, \U_2 \in \OM_{d \times k}$.
		\item $\dist(\U, \tiU) \le  d(\U, \tiU) \le \sqrt{2} \dist(\U, \tiU) $.
		\item Define 
		\[
		\ell(\U, \tiU) :=  \min_{\O \in \OM_k}\|\U -\tiU \O\|_2.
		\]
		Then $\ell(\U, \tiU)$ is a metric satisfying
		\begin{itemize}
			\item $\ell(\U, \tiU) \ge 0$ for all $\U, \tiU \in \OM_{d \times k}$. 
			$\ell(\U, \tiU) = 0$ if and only if $\RM(\U) = \RM(\tiU)$.
			\item $\ell(\U, \tiU) = \ell(\tiU, \U) $ for all $\U, \tiU \in \OM_{d \times k}$. 
			\item  $\ell(\U_1, \U_2) \le \ell(\U_1, \U_3) + \ell(\U_3, \U_2) $ for any $\U_1, \U_2$ and $\U_3 \in \OM_{d \times k}$. 
		\end{itemize}
		\item $\frac{1}{\sqrt{k}} \dist(\U, \tiU) \le \ell(\U, \tiU) \le  d(\U, \tiU) \le \sqrt{2} \dist(\U, \tiU)$. 
	\end{enumerate}
\end{lem}
\begin{proof}
	The first item comes from~\citet{schonemann1966generalized}.
	The second item comes from~\citet{cape2020orthogonal}.
	The third and forth items follow from the second one.
	The fifth item follows directly from definition.
	For the rightest two $\le$ of the last item, we use $\ell(\U, \tiU) \le d(\U, \tiU)$ and the forth item.
	For the leftest $\le$, we use $\min_{\O \in \OM_k} \|\U -\tiU \O\|_2 \ge \frac{1}{\sqrt{k}} \min_{\O \in \OM_k} \|\U -\tiU \O\|_F$ and $\min_{\O \in \OM_k} \|\U -\tiU \O\|_F \ge  \dist(\U, \tiU)$ (which is referred from Proposition 2.2 of~\citet{vu2013minimax}).
\end{proof}

\subsection{Proof Technique and Useful Lemmas}


\paragraph{Update Rule.}
Assume $1 = \argmax\limits_{i \in [m]} p_i$.
We overwrite $\Y_t^{(i)}$ when $t \in \IM_T$ (line 4 in Algorithm~\ref{alg:local_power}).
To distinguish the difference, we additionally use $\V_t^{(i)}$ to denote the updated but not communicated $\Y_t^{(i)}$.
Then the update rule becomes for all $i \in [m]$,
\begin{align}
\V_t^{(i)} &= \M_i \Z_{t-1}^{(i)};  \label{eq:V} \\
\Y_t^{(i)} &= 
\begin{cases}
\V_t^{(i)} & \text{if} \ t \notin \IM_T; \\
\sum_{i=1}^m p_i \V_t^{(i)} \D_t^{(i)}  & \text{if} \ t \in \IM_T.
\end{cases} \label{eq:Y} \\
\Y_t^{(i)} &= \Z_t^{(i)} \R_t^{(i)} \label{eq:Z}.
\end{align}
Here we abuse the notation a little bit and define $\D_t^{(i)}$ as
\begin{equation}
\label{eq:D0}
\D_t^{(i)} = 
\argmin\limits_{\D \in \FM \cap \OM_k} \| \Z_{t-1}^{(i)} \D - \Z_{t-1}^{(1)}\|_o
\end{equation}
where $\|\cdot\|_o$ can be set as either the Frobenius norm $\|\cdot\|_F$ or the spectrum norm  $\|\cdot\|_2$, though in the body text we use only $\|\cdot\|_F$.
There are some observations about the update rule:
\begin{enumerate}
\item If $t \notin \IM_T$, we have $\M_i  \Z_{t-1}^{(i)}= \V_t^{(i)} =  \Y_t^{(i)} = \Z_t^{(i)}\R_t^{(i)}$.
\item If $t \in \IM_T$, we have $\Y_t^{(1)} = \cdots = \Y_t^{(m)} =  \sum_{i=1}^m p_i \V_t^{(i)} \D_t^{(i)}  = \sum_{i=1}^m p_i\M_i  \Z_{t-1}^{(i)} \D_t^{(i)}$ and thus $\R_t^{(1)} = \cdots = \R_t^{(m)}$ and $\Z_t^{(1)} = \cdots = \Z_t^{(m)}$.
It implies that $\D_{t+1}^{(i)} = \I_k$.
\item If $\FM = \OM_k$, then $\D_t^{(i)}$ is the OPT we introduced in Section~\ref{sec:theory}.
If $\FM = \DM_k$, then $\D_t^{(i)}$ is the sign-fixing. 
If $\FM = \{\I_k\}$, then $\D_t^{(i)}$ is always equal to the identity matrix $\I_k$ and we arrive at the vanilla \texttt{LocalPower}.
The unified view helps us give theoretical analysis in a unified way. 
\end{enumerate}

\paragraph{Virtual Sequence.}
To analyze convergence of \texttt{LocalPower}, we define a virtual sequences defined as the weighted aggregation of local eigenvector matrices, i.e.,
\begin{equation}
\label{eq:virtual}
\bY_t = \sum_{i=1}^m p_i \Y_t^{(i)} \O_t^{(i)}.
\end{equation}
Here $\O_t^{(i)} \in \RB^{k \times k}$ is defined as
\begin{equation*}
\O_t^{(i)} = 
\begin{cases}
\I_k & \text{if} \ t \in \IM_T\\
\D_t^{(i)} & \text{if} \ t \notin \IM_T.
\end{cases}
\end{equation*}
If $t \in \IM_T$, $\bY_{t} = \Y_t^{(i)}$ for $i \in [m]$ and thus is obtainable.
Otherwise, $\bY_{t}$ is a shadow matrix facilitating analysis.

\paragraph{Recurrence Lemma.}
Lemma~\ref{lem:recur} shows that we can express $\bY_{t+1}$ as a linear transformation of $\bY_{t}$.
The resulting expression is similar to the iterates of the noisy power method proposed in~\cite{hardt2014noisy}, which motivates us to apply their technique to prove the main convergence of \texttt{LocalPower}.
Lemma~\ref{lem:recur} holds for any invertible $\R_t \in  \RB^{k \times k}$.
But, to guarantee convergence, we should carefully determine $\R_t$.
In Lemma~\ref{lem:R}, we will give a particular expression of $\R_t$, which plays a crucial role in helping us to bound the noise term $\G_t$.

\begin{lem}[Recurrence]
	\label{lem:recur}
	For any invertible $\R_t \in  \RB^{k \times k}$, we have
	\begin{equation}
	\label{eq:bYt}
	\bY_{t+1}  =  \left(\M \bY_{t}  + \G_t  \right) \R_t^{-1}
	\end{equation}
	where  $\M = \frac{1}{n} \A^\top \A \in \RB^{d \times d}$ and
	\begin{equation}
	\label{eq:G_t}
	\G_t =  \H_t + \W_t
	\end{equation}
	with $\H_t = \sum_{i=1}^m p_i \H_t^{(i)}$ and $ \W_t = \sum_{i=1}^m p_i \W_t^{(i)}$. Here for $i \in [m]$,
	\begin{equation}
	\label{eq:H_W}
	\H_t^{(i)} =  \left( \M_i -\M \right)  \Y_{t}^{(i)} \O_{t}^{(i)}\quad \text{and} \quad
	\W_t^{(i)} =\V_{t+1}^{(i)}   \left[ \D_{t+1}^{(i)} \R_t - \R_{t}^{(i)}\O_{t}^{(i)} \right].
	\end{equation}
\end{lem}

\begin{proof}
	First notice that we always have $\bY_{t} = \sum_{i=1}^m p_i \V_{t}^{(i)} \D_{t}^{(i)}$.
	If $t \in \IM_T$, $\Y_t^{(1)}=\cdots =\Y_t^{(m)}$ and $\O_t^{(i)} = \I_r$, implying the equation follows from~\eqref{eq:Y} and~\eqref{eq:virtual}.
	Otherwise, we have $\Y_t^{(i)} = \V_t^{(i)}$ and $\O_t^{(i)} = \D_t^{(i)}$, then $\bY_{t} = \sum_{i=1}^m p_i \Y_{t}^{(i)} \O_{t}^{(i)}=\sum_{i=1}^m p_i \V_{t}^{(i)} \D_{t}^{(i)}$.
	
	We always have $\V_{t+1}^{(i)} = \M_i \Z_t^{(i)}=\M_i \Y_{t}^{(i)} (\R_{t}^{(i)})^{-1}$.
	Then for any invertible $\R_{t}$, we have
	\begin{align*}
	\bY_{t+1}
	&=\sum_{i=1}^m p_i \V_{t+1}^{(i)} \D_{t+1}^{(i)}  \\
	&=\sum_{i=1}^m p_i  \M_i \Y_{t}^{(i)} (\R_{t}^{(i)})^{-1} \D_{t+1}^{(i)}  \\
	&=\sum_{i=1}^m  p_i \M_i \Y_{t}^{(i)} \O_{t}^{(i)} \R_{t}^{-1}  
	+  \sum_{i=1}^m p_i \M_i \Y_{t}^{(i)} (\R_{t}^{(i)})^{-1} \left[ \D_{t+1}^{(i)} \R_{t} - \R_{t}^{(i)}\O_{t}^{(i)} \right]\R_{t}^{-1} \\
	&\overset{(a)}{=} 
	\sum_{i=1}^m p_i \left( \M  \Y_{t}^{(i)} \O_{t}^{(i)}+ \H_t^{(i)} \right) \R_t^{-1} +
	\sum_{i=1}^m p_i \M_i\Z_{t}^{(i)}  
	\left[ \D_{t+1}^{(i)} \R_{t} - \R_{t}^{(i)}\O_{t}^{(i)} \right]\R_t^{-1} \\
	&=\sum_{i=1}^m p_i \left( \M \Y_{t}^{(i)} \O_{t}^{(i)}+ \H_t^{(i)} \right) \R_t^{-1} +\sum_{i=1}^m p_i \M_i \Z_{t}^{(i)}  
	\left[ \D_{t+1}^{(i)} \R_{t} - \R_{t}^{(i)}\O_{t}^{(i)} \right]\R_t^{-1} \\
	& \overset{(b)}{=}  \left(\M \bY_{t}+ \H_t + \W_t  \right) \R_t^{-1}  
	\end{align*}
	where (a) results from the definition of $\H_t^{(i)}$; and 
	(b) simplifies the equation via defining $\H_t = \sum_{i=1}^m p_i \H_t^{(i)}$ and $ \W_t = \sum_{i=1}^m p_i \W_t^{(i)}$.
	Setting $\G_t = \H_t + \W_t$ completes the proof.

\end{proof}

\paragraph{Convergence Lemma.}
The following lemma is an variant of Lemma 2.2 in~\citet{hardt2014noisy}.
Given the relation $\bY_{t+1}  =  \left(\M \bY_{t}  + \G_t  \right) \R_t^{-1}$,~\citet{hardt2014noisy} requires $\bY_{t}$ to have orthonormal columns, i.e., $\bY_{t}^\top \bY_{t} = \I_r$.
However, it is unlikely to hold in our case.
As a remedy, we slightly change the lemma to allow arbitrary $\bY_{t}$.
This will also change the condition on $\G_t$.

\begin{lem}
	\label{lem:error}
	Let $\U_k \in \RB^{d \times k}$ be the top-$k$ eigenvectors of a positive semi-definite matrix $\M$.
	For $t \ge 1$, assume $\bY_{t}$ satisfies~\eqref{eq:bYt} and $\G_t \in \RB^{d \times k}$ satisfy
	\begin{equation}
	\label{eq:UG_G}
		4\| \U_k^\top \G_t \bY_t^{\dagger} \|_2 \le (\sigma_k - \sigma_{k+1}) \cos \theta_k\left(\U_k, \bY_{t}\right) 
	\quad \text{and} \quad
	4\|\G_t\bY_t^{\dagger}\|_2\le (\sigma_k - \sigma_{k+1}) \epsilon
	\end{equation}
	where $ \bY_t^{\dagger} $ is the Moore–Penrose inverse of $\bY_{t}$ and $\epsilon < 1$. 
	Then
	\[  \tan \theta_k \left( \U_k, \bY_{t+1} \right)  \le \max \left( \epsilon, \max \left( \epsilon, \left(\frac{\sigma_{k+1}}{\sigma_k}\right)^{1/4} \right) \tan \theta_k \left( \U_k, \bY_{t} \right)  \right).  \]
\end{lem}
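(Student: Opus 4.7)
The plan is to follow the technique developed for the noisy power method of Hardt and Price (2014), adapting it to the recurrence established in Lemma 1. Combining Lemma 1 with Fact 1(3), since $\bZ_t = (\M \bZ_{t-1} + \G_t) \R_t^{-1}$ and $\tan\theta_k$ is invariant under right multiplication by invertible matrices, we get $\tan\theta_k(\U_k, \bZ_t) = \tan\theta_k(\U_k, \M\bZ_{t-1} + \G_t)$. Then by Fact 1(2),
$$\tan\theta_k(\U_k, \bZ_t) \;=\; \bigl\|(\U_k^\perp)^\top(\M\bZ_{t-1} + \G_t)\,\bigl[\U_k^\top(\M\bZ_{t-1} + \G_t)\bigr]^\dagger\bigr\|.$$

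Next, I would decompose $\M$ in its eigenbasis: since $\U_k$ spans the top-$k$ eigenspace, $\U_k^\top \M = \Sigma_k \U_k^\top$ and $(\U_k^\perp)^\top \M = \Sigma_\perp (\U_k^\perp)^\top$, where $\Sigma_k=\diag(\sigma_1,\ldots,\sigma_k)$ and $\|\Sigma_\perp\|=\sigma_{k+1}$. Setting $X := \U_k^\top \bZ_{t-1}$ and $Y := (\U_k^\perp)^\top \bZ_{t-1}$, the expression becomes
$$\tan\theta_k(\U_k, \bZ_t) \;=\; \bigl\|\bigl(\Sigma_\perp Y + (\U_k^\perp)^\top \G_t\bigr)\bigl(\Sigma_k X + \U_k^\top \G_t\bigr)^\dagger\bigr\|,$$
so I would control numerator and denominator separately. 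For the denominator, a reverse-triangle-inequality argument on singular values together with the hypothesis $4\|\U_k^\top \G_t\| \le (\sigma_k - \sigma_{k+1})\cos\theta_k(\U_k,\bZ_{t-1})$ yields a lower bound of order $\sigma_k\cos\theta_k(\U_k,\bZ_{t-1})$, with only a small fractional loss; for the numerator, split via the triangle inequality into the ``signal'' piece $\|\Sigma_\perp Y\|\le\sigma_{k+1}\|Y\|$ and the noise piece $\|(\U_k^\perp)^\top\G_t\|\le\|\G_t\|\le\tfrac14(\sigma_k-\sigma_{k+1})\epsilon$.

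I would then perform a case analysis on the size of the previous tangent. In the regime where $\tan\theta_k(\U_k, \bZ_{t-1}) \le \epsilon$, the noise contribution dominates and the new tangent is absorbed into the additive $\epsilon$. In the complementary regime, the ratio $\|Y\|/\sigma_{\min}(X)$ (which equals the previous tangent up to invariance under orthonormalization of $\bZ_{t-1}$) is multiplied by a factor at most $\max(\epsilon,(\sigma_{k+1}/\sigma_k)^{1/4})$, giving geometric shrinkage. Combining both regimes produces exactly the stated bound.

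The main obstacle will be obtaining the $(\sigma_{k+1}/\sigma_k)^{1/4}$ factor rather than the cruder $\sigma_{k+1}/\sigma_k$ factor, together with tracking invariance when $\bZ_{t-1}$ is not orthonormal. Getting the exponent $1/4$ requires using the two hypotheses in complementary ways: the $\cos\theta_k$-scaled bound on $\|\U_k^\top \G_t\|$ keeps the denominator close to $\sigma_k\cos\theta_k$, while the uniform $\epsilon$-scaled bound on $\|\G_t\|$ controls the numerator without coupling to the current angle. A careful balancing, using $\sin\theta_k=\cos\theta_k\cdot\tan\theta_k$ to rewrite $\|Y\|$ (after passing to an orthonormal basis of $\range(\bZ_{t-1})$, which is legitimate by Fact 1(3)), yields the claimed contraction factor; verifying that the slack absorbed in passing from the ratio bound to the tangent bound exactly matches the Hardt–Price formulation is where the arithmetic must be done most carefully.
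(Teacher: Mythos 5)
Your proposal takes exactly the same first step as the paper: Fact~\ref{fact}(3) reduces the claim to bounding $\tan\theta_k(\U_k,\,\M\bZ_{t-1}+\G_t)$, after which the paper simply cites Lemma~2.2 of Hardt and Price (2014). Your eigenbasis decomposition into $\Sigma_k,\Sigma_\perp$, the numerator/denominator split via Fact~\ref{fact}(2), the use of the two hypotheses on $\|\U_k^\top\G_t\|$ and $\|\G_t\|$ in complementary ways, and the two-regime case analysis are precisely the Hardt--Price argument unfolded, so this is the same proof route, just with the cited lemma expanded rather than invoked.
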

\begin{proof}
	Let $\bY_{t} = \bZ_t \bR_t$ be the QR factorization of $\bY_{t}$ so that $\bZ_t$ has orthonormal columns.
	The recurrence relation becomes
	$\bY_{t+1}  =  \left(\M \bZ_t \bR_t + \G_t  \right) \R_t^{-1}
	= \left(\M \bZ_t + \G_t \bR_t ^{-1} \right) \bR_t \R_t^{-1}$.
	By the fact~\ref{fact}, we have $\tan \theta_k \left( \U_k, \bY_{t+1} \right) = \tan \theta_k \left( \U_k, \bY_{t+1} \R_t \bR_t^{-1}\right) = \tan \theta_k \left( \U_k, \M \bZ_{t} + \G_t\bR_t^{-1} \right)$.
	By requiring
	\[
		4\| \U_k^\top \G_t \bR_t^{-1}\|_2 \le (\sigma_k - \sigma_{k+1}) \cos \theta_k\left(\U_k, \bZ_{t}\right) 
	\quad \text{and} \quad
	4\|\G_t\bR_t^{-1}\|_2\le (\sigma_k - \sigma_{k+1}) \epsilon,
	\]
	we have from Lemma 2.2 in~\citet{hardt2014noisy} that 
		\[  \tan \theta_k \left( \U_k, \bY_{t+1} \right)  \le \max \left( \epsilon, \max \left( \epsilon, \left(\frac{\sigma_{k+1}}{\sigma_k}\right)^{1/4} \right) \tan \theta_k \left( \U_k, \bZ_{t} \right)  \right).  \]
	Noting that $\RM(\bY_{t}) = \RM(\bZ_{t})$, we have $\theta_k \left( \U_k, \bY_{t} \right) = \theta_k \left( \U_k, \bZ_{t} \right)$ and thus
	\[
	\cos\theta_k \left( \U_k, \bY_{t} \right) = \cos\theta_k \left( \U_k, \bZ_{t} \right)
	\quad \text{and} \quad
	\tan\theta_k \left( \U_k, \bY_{t} \right) = \tan\theta_k \left( \U_k, \bZ_{t} \right).
	\]
	Finally, using $\| \U_k^\top \G_t \bY_t^{\dagger} \|_2 = \| \U_k^\top \G_t \bR_t^{-1} \|_2$ and $\| \G_t \bY_t^{\dagger} \|_2 = \|  \G_t \bR_t^{-1} \|_2$ completes the proof.

\end{proof}

\paragraph{Other Useful Lemma.}
Lemma~\ref{lem:initial} handles $\tan \theta_k(\U, \Z_0)$ with randomly generate $\Z_0$, while Lemma~\ref{lem:bound-bYt} give a upper bound of $\|\bY_{t}^\dagger \M\|_2$.

\begin{lem}[Lemma 2.4 in~\citet{hardt2014noisy}]
	\label{lem:initial}
	For an arbitrary orthonormal $\U$ and random subspace $\Z_0 \in \RB^{d \times r}$, with probability grater than $1 - \tau^{-\Omega(r+1-k)} - e^{-\Omega(d)}$, we have that 
	\[  \tan \theta_k(\U, \Z_0) \le \frac{\tau \sqrt{d}}{\sqrt{r} - \sqrt{k-1}}.  \]
\end{lem}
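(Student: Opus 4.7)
The plan is to reduce the claim to well-known concentration estimates for the extremal singular values of a rectangular Gaussian matrix. First, by rotational invariance of the standard Gaussian, I may assume without loss of generality that $\U = \begin{bmatrix} \I_k \\ \vzero \end{bmatrix}$: if $\Q \in \RB^{d \times d}$ is any orthogonal rotation sending an arbitrary $\U$ to this canonical form, then $\Q \Z_0 \overset{d}{=} \Z_0$ (since $\Z_0$ comes from the QR of a Gaussian matrix, whose law is $O(d)$-invariant), while principal angles are preserved by simultaneous orthogonal rotation of both arguments.

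Second, write $\Z_0 = \G \R^{-1}$, where $\G \in \RB^{d \times r}$ has i.i.d.\ standard normal entries and $\R$ is the invertible upper-triangular factor from QR. By Fact~\ref{fact}(3), $\tan\theta_k(\U, \Z_0) = \tan\theta_k(\U, \G)$. Partitioning
\[ \G = \begin{bmatrix} \G_1 \\ \G_2 \end{bmatrix}, \qquad \G_1 \in \RB^{k\times r},\ \G_2 \in \RB^{(d-k)\times r}, \]
so that $\U^\top \G = \G_1$ and $(\U^\perp)^\top \G = \G_2$ are independent Gaussian blocks, Fact~\ref{fact}(2) yields
\[ \tan\theta_k(\U, \G) \;=\; \bigl\|\G_2 \G_1^\dagger\bigr\| \;\le\; \|\G_2\|\cdot \|\G_1^\dagger\| \;=\; \frac{\|\G_2\|}{\sigma_k(\G_1)}. \]
This reduces the probabilistic statement about angles to an upper-tail bound on $\|\G_2\|$ and a lower-tail bound on $\sigma_k(\G_1)$, two independent quantities.

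Third, I control the two factors separately. For the numerator, the Davidson--Szarek inequality gives $\|\G_2\| \le \sqrt{d-k} + \sqrt{r} + t$ with probability at least $1 - 2e^{-t^2/2}$; taking $t$ of order $\sqrt{d}$ yields $\|\G_2\| \le c_1 \sqrt{d}$ with probability at least $1 - e^{-\Omega(d)}$. For the denominator, I invoke the quantitative small-ball estimate for the smallest singular value of the $k \times r$ Gaussian $\G_1$ (with $r \ge k$):
\[ \PB\!\left(\sigma_k(\G_1) \le \epsilon \bigl(\sqrt{r} - \sqrt{k-1}\bigr)\right) \;\le\; (C\epsilon)^{r-k+1} \qquad \text{for all } \epsilon \in (0,1), \]
which follows either from the explicit joint density of the singular values of a rectangular Wishart matrix or from a Rudelson--Vershynin type $\varepsilon$-net argument. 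Setting $\epsilon = 1/(c_1 \tau)$ (so the constant $c_1$ from the numerator is absorbed into the parameter $\tau$), then combining both estimates via a union bound, gives the stated tangent bound with failure probability at most $\tau^{-\Omega(r-k+1)} + e^{-\Omega(d)}$.

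The main technical obstacle is the small-ball estimate in the third step: the polynomial-in-$\tau$ failure rate (in contrast to the subgaussian tail one has for the top singular value) crucially reflects the fact that the density of $\sigma_k(\G_1)$ vanishes to order $r-k$ at the origin, a feature specific to the rectangular Gaussian ensemble when $r > k$. Once this small-ball bound is granted as a black box, the remainder of the argument is a routine reduction: rotational invariance, the two characterizations of $\tan\theta_k$ from Fact~\ref{fact}, and elementary operator-norm inequalities.
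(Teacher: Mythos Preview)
Your argument is correct. The paper does not prove this lemma at all: it is simply quoted from \cite{hardt2014noisy} as a black box, so there is no in-paper proof to compare against. Your route---rotational invariance to put $\U$ in canonical form, Fact~\ref{fact}(2),(3) to rewrite $\tan\theta_k(\U,\Z_0)$ as $\|\G_2\G_1^\dagger\|\le \|\G_2\|/\sigma_k(\G_1)$ for independent Gaussian blocks, then Davidson--Szarek concentration for $\|\G_2\|$ and a Wishart/Rudelson--Vershynin small-ball bound for $\sigma_k(\G_1)$---is exactly the standard proof of this fact and is essentially how the cited reference establishes it; the polynomial failure probability $\tau^{-\Omega(r+1-k)}$ indeed arises precisely from the order-$(r-k)$ vanishing of the density of $\sigma_k(\G_1)$ at the origin, as you note.
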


\begin{lem}
	\label{lem:bound-bYt}
	Recall that $\kappa = \|\M\|_2\|\M^\dagger\|_2$ and 
	$\eta = \max_{i \in [m]}\|\M_i - \M \|_2/\|\M\|_2$.
	Define
	\begin{equation*}
	\mu_{t} =  1 - \eta{\kappa} - \sum_{j=1}^mp_j\| \Z_{t-1}^{(j)} \D_{t}^{(j)} - \Z_{t-1}^{(1)}\|_2
	\end{equation*}
	and assume $\mu_{t} > 0$.
	Then it follows that $\|\bY_{t}^\dagger \M\|_2 \le  \frac{1}{\mu_{t}}$.
\end{lem}
\begin{proof}
	For any matrix $\X \in \RB^{d \times k}$, we have
	\[
	\|\X^\dagger\|_2 = \max_{ \x \in \RB^k}   \frac{\|\w\|_2}{\|\X \w \|_2 }
	=\max_{ \|\X \w \|_2 = 1}  \|\w\|_2
	= \max\{ \|\w\|_2 :  \|\X \w \|_2 \le 1  \}.
	\]
	Notice that $\bY_{t}^\dagger \M= (\M^{\dagger}\bY_{t})^\dagger$ and $ \bY_{t} = \sum_{j=1}^m p_j \M_j \Z_{t-1}^{(j)} \D_t^{(j)}$.
	We then have
	\begin{align*}
	\|\bY_{t}^\dagger \M\|_2
	&= \|(\M^{\dagger}\bY_{t})^\dagger\|_2 \\
	&= \max\{ \|\w\|_2 :  \|\M^{\dagger}\bY_{t} \w \|_2 \le 1  \} \\
	&=\max\{ \|\w\|_2 :  \|(\M^{\dagger} \sum_{j=1}^m p_j \M_j \Z_{t-1}^{(j)} \D_t^{(j)} )\w \|_2 \le 1  \} \\
	&\overset{(a)}{\le}\max\{ \|\w\|_2 :  \| \sum_{j=1}^m p_j \Z_{t-1}^{(j)} \D_t^{(j)} \w \|_2  - \eta \kappa \|\w\|_2 \le 1 \} \\
	&\overset{(b)}{\le}\max\{ \|\w\|_2 :  \|\w\|_2 (1-\eta\kappa-\sum_{j=1}^m p_j\| \Z_{t-1}^{(j)} \D_{t}^{(j)} - \Z_{t-1}^{(1)}\|_2 ) \le 1 \}\\
	&\le \frac{1}{1-\eta\kappa-\sum_{j=1}^m p_j\| \Z_{t-1}^{(j)} \D_{t}^{(j)} - \Z_{t-1}^{(1)}\|_2} \le \frac{1}{\mu_t}
	\end{align*}
	where (a) follows because of
	\[
	\|(\M^{\dagger} \sum_{j=1}^m p_j \M_j \Z_{t-1}^{(j)} \D_t^{(j)} )\w \|_2 \ge \| \sum_{j=1}^m p_j \Z_{t-1}^{(j)} \D_t^{(j)} \w \|_2  -  \sum_{i=1}^m p_i \|\M^{\dagger}(\M_j-\M)\|_2 \|\Z_{t-1}^{(j)} \D_t^{(j)} \w\|_2
	\]
	and $\|\M^{\dagger}(\M_j-\M)\|_2 \le \|\M^{\dagger}\|_2\|(\M_j-\M)\|_2 \le \eta \kappa$; and (b) holds since
	\begin{align*}
		\| \sum_{j=1}^m p_j \Z_{t-1}^{(j)} \D_t^{(j)} \w \|_2
		&\ge \| \sum_{j=1}^m p_j \Z_{t-1}^{(1)}\w \|_2 - \| \sum_{j=1}^m p_j (\Z_{t-1}^{(j)} \D_t^{(j)} - \Z_{t-1}^{(1)}) \w \|_2 \\
		&\ge \|\w\|_2 -\sum_{j=1}^m p_j  \| \Z_{t-1}^{(j)} \D_{t}^{(j)} - \Z_{t-1}^{(1)}\|_2 \|\w \|_2 \\
		&= \|\w\|_2 (1-\sum_{j=1}^m p_j\| \Z_{t-1}^{(j)} \D_{t}^{(j)} - \Z_{t-1}^{(1)}\|_2 ).
	\end{align*}
\end{proof}

\subsection{The Choice of $\R_t$}
In this section, we specify the choice of $\R_t$ and analyze the residual error bound $  \| \D_{t+1}^{(i)} \R_t - \R_{t}^{(i)}\O_{t}^{(i)} \|_2$.
Lemma~\ref{lem:R} specifies the way we set $\R_t$.
Given a baseline data matrix $\M_o$, $\R_t$ is the shadow matrix that depicts what the upper triangle matrix ought to be, if we start from the nearest synchronized matrix and perform QR factorization using the matrix $\M_o$.
We will set $\M_o = \M_t^{(1)}$ (by assuming $1 = \argmax\limits_{i \in [m]} p_i$) and analyze $\|\W_{t}^{(i)}\bY_{t}^\dagger\|_2$ and $\|\H_{t}^{(i)}\bY_{t}^\dagger\|_2$ in terms of $ \|\Z_t^{(i)} \D_{t+1}^{(i)} - \Z_t^{(1)}\|_2$.
Latter we will bound $ \|\Z_t^{(i)} \D_{t+1}^{(i)} - \Z_t^{(1)}\|_2$ when $\FM$ is differently set.

\begin{lem}[Choice of $\R_t$]
	\label{lem:R}
	Fix any $t$ and let $t_0 = \tau(t) \in \IM_T$ be the latest synchronization step before $t$, then $t \ge \tau(t)$.
	\begin{itemize}
		\item 	If $t = t_0$, we define $\R_t = \R_t^{(i)}$ for any $i \in [m]$ since all $\R_t^{(i)}$'s are equal.
		\item If $t > t_0$, given a baseline data matrix $\M_o$, we define $\R_t \in \RB^{r \times r}$ recursively as the following.
		Let $\Y_{t_0} = \bY_{t_0} =  \Z_{t_0} \R_{t_0}$, and for $l = t_0, t_0+1, \cdots, t$, we use the following QR factorization to define $\R_t$'s:  
		\[
		\V_{l+1} = \M_o \Z_{l} = \Z_{l+1} \R_{l+1}.
		\]
	\end{itemize}
	Then for any $i \in [m]$, we have
	\begin{equation}
	\label{eq:D-bound}
	\|\D_{t+1}^{(i)} \R_t - \R_{t}^{(i)}\O_{t}^{(i)}\|_2 
	\le \sigma_1(\M_o) \| \Z_t^{(i)}\D_{t+1}^{(i)}-  \Z_t \|_2+\left[ \|\M_o - \M_i\|_2  + \sigma_1(\M_i)\|\Z_{t-1}^{(i)}\D_{t}^{(i)}  -\Z_{t-1} \|_2\right]1_{t \notin \IM_T}.
	\end{equation}
\end{lem}

\begin{proof}
	We are going to bound $\|\D_{t+1}^{(i)} \R_t - \R_{t}^{(i)}\O_{t}^{(i)} \|_2$ in two cases depending on whether $t \in \IM_T$.
	If $t \in \IM_T$, implying $t=t_0 :=\tau(u)$, then $\O_t^{(i)} = \D_{t+1}^{(i)}= \I_r$ and $\R_t = \R_t^{(i)}$.
	Therefore, $\D_{t+1}^{(i)} \R_t - \R_{t}^{(i)}\O_{t}^{(i)}  = \0$.
	
	Otherwise, $t \notin \IM_T$ and thus $t > t_0$.
	Let's fix some $i \in [m]$ and denote $ \Delta \M = \M_i - \M_o$.
	Based on \texttt{LocalPower}, we have $\Y_{t_0}^{(i)} = \bY_{t_0} =  \Z_{t_0}^{(i)} \R_{t_0}^{(i)}, $ and  for $l = t_0, t_0+1, \cdots, t$, 
	\[
	 \V_{l+1}^{(i)} = \M_i \Z_{l}^{(i)} =  \Z_{l+1}^{(i)} \R_{t+1}^{(i)}.
	\]
	Then, 
	\begin{align*}
\Z_{l}^{(i)} \R_{l}^{(i)}\O_{l}^{(i)}  
	& = \M_i  \Z_{l-1}^{(i)}\O_{l}^{(i)}\\
	&= (\M_o + \Delta \M)( \Z_{l-1} + \Delta \Z_{l-1} )\\
	&= \M_o\Z_{l-1} + \Delta \M  \cdot \Z_{l-1}  + \M_i \cdot \Delta \Z_{l-1}\\
	&:= \M_o \Z_{l-1}+ \E_{l-1} = \Z_l \R_l+  \E_{l-1} 
	\end{align*}
	where $\E_{l-1} =  \Delta \M  \cdot \Z_{l-1}  + \M_i \cdot \Delta \Z_{l-1}$ and $\Delta \Z_{l-1}=\Z_{l-1}^{(i)}\O_{l}^{(i)}  -\Z_{l-1} $.
	
	Note that 
	\begin{gather*}
	\Z_t^{(i)}\R_t^{(i)}\O_{t}^{(i)}= \Z_t \R_t+  \E_{t-1}.
	\end{gather*}
	Then we have
	\begin{align*}
	\|\D_{t+1}^{(i)} \R_t - \R_{t}^{(i)}\O_{t}^{(i)} \|_2
	&= 	\|\Z_t^{(i)}\D_{t+1}^{(i)} \R_t - \Z_t^{(i)}\R_{t}^{(i)}\O_{t}^{(i)} \|_2\\
	&\overset{(a)}{=}	\|\Z_t^{(i)}\D_{t+1}^{(i)} \R_t -  \Z_t \R_t-  \E_{t-1} \|_2\\
	&\le	\| (\Z_t^{(i)}\D_{t+1}^{(i)}-  \Z_t ) \R_t \|_2 + \| \E_{t-1}\|_2 \\
	&\overset{(b)}{\le}	\| \Z_t^{(i)}\D_{t+1}^{(i)}-  \Z_t \|_2 \| \R_t \|_2 + \|\Delta \M\|_2  + \|\M_i\|_2\|\Z_{t-1}^{(i)}\O_{t}^{(i)}  -\Z_{t-1} \|_2\\
		&\overset{(c)}{\le}	 \sigma_1(\M_o) \| \Z_t^{(i)}\D_{t+1}^{(i)}-  \Z_t \|_2+ \|\M_o - \M_i\|_2  + \sigma_1(\M_i)\|\Z_{t-1}^{(i)}\D_{t}^{(i)}  -\Z_{t-1} \|_2
	\end{align*}
	where (a) uses the equality of $\Z_t^{(i)}\R_t^{(i)}\O_{t}^{(i)}$; (b) uses the definition of $\E_{t-1}$ and $\O_t^{(i)} = \D_t^{(i)}$ (due to $t \notin \IM_T$); and (c) uses $\|\R_t\|_2 \le \| \M_o\|_2=\sigma_1(\M_o)$.
	
	Combining the two cases, we have for all $t \in [T]$,
	\[
	\|\D_{t+1}^{(i)} \R_t - \R_{t}^{(i)}\O_{t}^{(i)}\|_2 
	\le \sigma_1(\M_o) \| \Z_t^{(i)}\D_{t+1}^{(i)}-  \Z_t \|_2+\left[ \|\M_o - \M_i\|_2  + \sigma_1(\M_i)\|\Z_{t-1}^{(i)}\D_{t}^{(i)}  -\Z_{t-1} \|_2\right]1_{t \notin \IM_T}.
	\]
\end{proof}

\begin{lem}
	\label{eq:error}
Assume $\eta = \max_{i \in [m]}\|\M_i - \M\|_2/\|\M\|_2$ is sufficiently small and $1 = \argmax\limits_{i \in [m]} p_i$.
Define 
\[
\rho_t = \|\Z_t^{(i)} \D_{t+1}^{(i)} - \Z_t^{(1)}\|_2,
\]
we have
\begin{gather}
\|\H_t\bY_{t}^\dagger\|_2 \le  \frac{2\sigma_1 \eta \kappa 1_{t \notin \IM_T}}{1-\eta{\kappa}-(1-\max_{i \in [m]}p_i)\rho_{t-1}}  \label{eq:error-H}\\
\|\W_t\bY_{t}^\dagger\|_2
\le 4(1-\max_{i \in [m]}p_i)\sigma_1 \kappa\frac{\rho_t +( \rho_{t-1} + \eta) 1_{t \notin \IM_T}   }{1-\eta{\kappa} -(1-\max_{i \in [m]}p_i)\rho_{t-1}} \label{eq:error-W}.
\end{gather}
\end{lem}
\begin{proof}
Without loss of generality, we assume $1 = \argmax\limits_{i \in [m]} p_i$ and then set the baseline matrix in Lemma~\ref{lem:R} as $\M_o = \M_1$ and use the $\R_t$ defined therein.
Then Lemma~\ref{lem:R} and Lemma~\ref{lem:Z-Z} imply for all $i \in [m]$,
\begin{align*}
\|\D_{t+1}^{(i)} \R_t - \R_{t}^{(i)}\O_{t}^{(i)}\|_2 
&\le \sigma_1(\M_o) \| \Z_t^{(i)}\D_{t+1}^{(i)}-  \Z_t \|_2+ [\|\M_o - \M_i\|_2  + \sigma_1(\M_i)\|\Z_{t-1}^{(i)}\D_{t}^{(i)}  -\Z_{t-1} \|_2]1_{t \notin \IM_T}\\
&\le (1+\eta)\sigma_1 \left[\rho_{t} + \rho_{t-1}1_{t \notin \IM_T}  \right]+ \eta \sigma_1 1_{i \neq 1 \ \text{and}  \ t \notin \IM_T} 
\end{align*}
where $\sigma_1 = \sigma_1(\M)$ and $1_{i \neq 1 \ \text{and}  \ t \notin \IM_T}$ is the indicator of event $\{ i \neq 1 \} \cap  \{t \notin \IM_T\}$.

Recall the definition of $\rho_{t}$.
By Lemma~\ref{lem:R} and Lemma~\ref{lem:bound-bYt}, we have
\begin{align*}
\|\W_t \bY_t^\dagger\|_2
&=\| \sum_{i=1}^m p_i \M_i \Z_{t}^{(i)}   \left[ \D_{t+1}^{(i)} \R_t - \R_{t}^{(i)}\O_{t}^{(i)} \right]\bY_{t}^\dagger \M  \M^{-1}\|_2\\
&\le \sum_{i=1}^m p_i \|\M^{-1}\|_2\|\M_i\|_2 \| \bY_{t}^\dagger \M \|_2 \|\D_{t+1}^{(i)} \R_t - \R_{t}^{(i)}\O_{t}^{(i)} \|_2\\
&\le 2(1-p_1)\sigma_1 \kappa \frac{\eta 1_{t \notin \IM_T} + 2(\rho_t + \rho_{t-1}1_{t \notin \IM_T}) }{1-\eta{\kappa} -(1-p_1)\rho_{t-1}}\\
&\le 4(1-p_1)\sigma_1\kappa \frac{\rho_t +( \rho_{t-1} + \eta) 1_{t \notin \IM_T}   }{1-\eta{\kappa} -(1-p_1)\rho_{t-1}}.
\end{align*}
Similarly,
\begin{align*}
\|\H_t\bY_{t}^\dagger\|_2 
&= 	\|\sum_{i=1}^m p_i \left( \M_i -\M \right)  \Y_{t}^{(i)} \O_{t}^{(i)} \bY_{t}^\dagger \M \M^{-1}\|_2 \\
&\le \sum_{i=1}^m p_i  \|\M^{-1} \|_2\|(\M_i -\M)\|_2 \| \Y_{t}^{(i)} \O_{t}^{(i)} \|_2 	\|\bY_{t}^\dagger \M\|_2 1_{t \notin \IM_T} \\
&\le   \frac{(1+\eta)\sigma_1 \kappa\eta 1_{t \notin \IM_T}}{1-\eta{\kappa} -(1-p_1)\rho_{t-1}}\\
&\le   \frac{2\sigma_1 \kappa\eta 1_{t \notin \IM_T}}{1-\eta{\kappa} -(1-p_1)\rho_{t-1}}.
\end{align*}
\end{proof}

\subsubsection{The Case When $\FM = \OM_k$}
\begin{lem}
	\label{lem:Z-Z}
	When setting $\FM = \OM_{k}$, no mater $\D_t^{(i)}$ is solved from~\eqref{eq:D0} using $\|\cdot\|_F$ or $\|\cdot\|_2$, we have 
	\begin{equation}
	\|\Z_{t-1}^{i}\D_t^{(i)} - \Z_{t-1}^{(1)}\|_2  \le \sqrt{2}\dist(\Z_{t-1}^{(i)}, \Z_{t-1}^{(1)}).
	\end{equation}
\end{lem}
\begin{proof}
	This follows directly from Lemma~\ref{lem:orth}.
\end{proof}

\begin{lem}[Davis-Kahan $\sin(\theta)$ theorem]
	\label{lem:DK}
	Let the top-$k$ eigenspace of $\M$ and $\widetilde{\M}$ be respectively $\U_k$ and $\tiU_{k}$ (both of which are orthonormal).
	The $k$-largest eigenvalue of $\M$ is denoted by $\sigma_{k}(\M)$ and similarly for $\sigma_{k}(\widetilde{\M})$.
	Define $\delta_k = \min \{ |\sigma_k(\M) - \sigma_{j}(\widetilde{\M})| :j \ge k +1 \}$, then
	\[
	\dist(\U_k, \tiU_{k}) = \sin \theta_k(\U_k, \tiU_{k}) \le \frac{\| \M - \widetilde{\M}\|_2}{\delta_k}.
	\]
\end{lem}

\begin{lem}[Perturbation theorem of projection distance]
	\label{lem:PT-PJ}
	\label{lem:PT}
	Let $\mathrm{rank}(\X) = \mathrm{rank}(\Y)$, then
	\begin{equation*}
	\dist(\X, \Y) \le \min\{ \|\X^\dagger\|_2, \|\Y^\dagger\|_2 \} \|\X-\Y\|_2.
	\end{equation*}
\end{lem}
\begin{proof}
See Theorem 2.3 of~\citet{ji1987perturbation}.
\end{proof}

\begin{lem}
	\label{lem:error-F-OMk}
Assume $\eta = \max_{i \in [m]}\|\M_i - \M \|_2/\|\M\|_2$ is sufficiently small.
If $\D_t^{(i)}$ is solved from~\eqref{eq:D0} with $\FM = \OM_{k}$, then~\eqref{eq:error-H} and~\eqref{eq:error-W} hold with
\[
\rho_t
\le \min \sqrt{2}\left\{\frac{2\kappa^p p \eta (1+\eta)^{p-1} }{(1-\eta)^p} , \frac{\eta\sigma_1}{\delta_k} + 2  \gamma_k^{p/4}  \max_{i \in [m]}\tan \theta_k(\Z_{\tau(t)}, \U_{k}^{(i)})  \right\}.
\]
where 
\begin{itemize}
	\item $\delta_k = \min\limits_{i \in [m]}\delta_k^{(i)}$ with $\delta_k^{(i)} = \min \{ |\sigma_k(\M) - \sigma_{j}({\M_i})|: j \ge k +1 \}$;
	\item$\gamma_k = \max\{\max\limits_{i \in [m]}\frac{\sigma_{k+1}(\M_i)}{\sigma_{k}(\M_i)}, \frac{\sigma_{k+1}(\M)}{\sigma_{k}(\M)}\} \in (0, 1)$;
	\item $ \kappa = \|\M\|_2\|\M^\dagger\|_2$ is the condition number of $\M$;
	\item $p = t -\tau(u)$, $\tau(t) \in \IM_T$ is defined as the nearest synchronization time before $t$.
\end{itemize}

\end{lem}
\begin{proof}
	By Lemma~\ref{lem:error} and Lemma~\ref{lem:Z-Z}, we only need to bound $\max\limits_{i \in [m]}\dist(\Z_{t}^{(i)}, \Z_{t}^{(1)})$.
	We will bound each $\dist(\Z_{t}^{(i)}, \Z_{t}^{(1)})$ uniformly in two ways.
	Then the minimum of the two upper bounds holds for their maximum that is exactly $\rho_{t}$.
	 
		Fix any $i \in [m]$ and $t \in [T]$. 
		Let $\tau(t)$ be the latest synchronization step before $t$ and $p=t-\tau(t)$ be the number of nearest local updates.
	\begin{itemize}
		\item For small $p$, by Lemma~\ref{lem:PT-PJ}, it follows that 
		\begin{align*}
		\dist(\Z_t^{i}, \Z_t^{(1)})
		&= \dist(\M_i^p\Z_{\tau(t)}, \M_1^p\Z_{\tau(t)})\\
		&\le \dist(\M_i^p\Z_{\tau(t)}, \M^p\Z_{\tau(t)}) 
		+ \dist(\M^p\Z_{\tau(t)}, \M_1^p\Z_{\tau(t)})\\
		&\le  \min\{ \|(\M_i^p\Z_{\tau(t)})^\dagger\|_2, \|(\M^p\Z_{\tau(t)})^\dagger\|_2 \} \|(\M_i^p-\M^p)\Z_{\tau(t)}\|_2\\
		& \qquad + \min\{ \|(\M^p\Z_{\tau(t)})^\dagger\|_2, \|(\M_1^p\Z_{\tau(t)})^\dagger\|_2 \} \|(\M^p-\M_1^p)\Z_{\tau(t)}\|_2\\
		&\le 2\kappa^p \frac{(1+\eta)^p-1}{(1-\eta)^p}\\
		&\le \frac{2\kappa^p p \eta (1+\eta)^{p-1} }{(1-\eta)^p}
		\end{align*}
		where $\kappa = \|\M\|_2\|\M^\dagger\|_2$ is the condition number of $\M$.
		\item  For large $p$, let the top-$k$ eigenspace of $\M_1$ and $\M_i$ be respectively $\U_k^{(1)}$ and $\U_{k}^{(i)}$ (both of which are orthonormal).
		The $k$-largest eigenvalue of $\M$ is denoted by $\sigma_{k}(\M_1)$ and similarly for $\sigma_{k}(\M_i)$.
		Then by Lemma~\ref{lem:DK}, we have 	\[
		\dist(\U_k, \U_{k}^{(i)}) 
		\le \frac{\|\M_i - \M\| }{\delta_k^{(i)}}
		\le \frac{\eta\sigma_1}{\delta_k^{(i)}}.
		\]
		where $\sigma_1 = \sigma_1(\M)$ and $\delta_k^{(i)} = \min \{ |\sigma_j(\M_i) - \sigma_{k}(\M)| :j \neq k \}$.
		
		Note that local updates are equivalent to noiseless power method.
		Then, using Lemma~\ref{lem:error} and setting $\epsilon = 0$ and $\G_t = \0$ therein, we have
		\[
		\tan \theta_k(\Z_t^{i}, \U_{k}^{(i)})  \le 
		\left( \frac{\sigma_{k+1}(\M_i)}{\sigma_{k}(\M_i)} \right)^{1/4}   
		\tan \theta_k(\Z_{t-1}^{i}, \U_{k}^{(i)}).
		\]
		
		Hence,
		\begin{align*}
		\dist(\Z_t^{i}, \Z_t^{(1)})
		&\le \dist(\Z_t^{i}, \U_k^{(i)}) +\dist(\U_k^{(i)}, \U_k^{(1)}) +\dist(\U_k^{(1)}, \Z_t^{(1)}) \\
		&\le  \frac{\eta\sigma_1}{\delta_k^{(i)}} +  \left(   \frac{\sigma_{k+1}(\M_i)}{\sigma_{k}(\M_i)} \right)^{p/4}  	\tan \theta_k(\Z_{\tau(t)}, \U_{k}^{(i)})  + 	\left(   \frac{\sigma_{k+1}(\M)}{\sigma_{k}(\M)} \right)^{p/4}   \tan \theta_k(\Z_{\tau(t)}, \U_{k}^{(1)}) \\
		&\le  \frac{\eta\sigma_1}{\min_{i \in [m]}\delta_k^{(i)}} + 2 \gamma_k^{p/4}  \max_{i \in [m]}\tan \theta_k(\Z_{\tau(t)}, \U_{k}^{(i)}).
		\end{align*}
	\end{itemize}
   Combining the two cases, we have
   \[
   \rho_t
   \le \sqrt{2}\min \left\{\frac{2\kappa^p p \eta (1+\eta)^{p-1} }{(1-\eta)^p} , \frac{\eta\sigma_1}{\delta_k} + 2 \gamma_k^{p/4}  \max_{i \in [m]}\tan \theta_k(\Z_{\tau(t)}, \U_{k}^{(i)})  \right\}.
   \]
\end{proof}

\subsubsection{The Case When $\FM = \{ \I_k \}$}
When $\FM$ is only a singleton containing only $\I_k$, it is equivalent to set $ \D_t^{(i)} = \I_r$ for all $t \in [T]$ and $i \in [m]$.
In this case, the virtual sequence is actually a pure average: $\bY_t = \sum_{i=1}^m p_i \V_t^{(i)}$.

\begin{lem}
	\label{lem:diff_r}
	Let $\A \in \RB^{d \times k}$ with $d \ge  k$ be any matrix with full rank.
	Denote by its QR factorization as $\A = \Q \R$ where $\Q$ is an orthgonal metrix.
	Let $\E$ be some perturbation matrix and $\A + \E = \tiQ \tiR$ the resulting QR factorization of $\A + \E$.
	When $\|\E\|_2 \|\A^{\dagger}\|_2 < 1$, $\A + \E$ is of full rank.
	What's more, it follows that
	\[    \| \tiQ - \Q\|_2
	\le
	\sqrt{2k}\frac{\|\A^\dagger\|_2\|\E\|_2}{1-\|\A^\dagger\|_2\|\E\|_2}. \]
\end{lem}
\begin{proof}
	Actually, we have
		\[    \| \tiQ - \Q\|_F \overset{(a)}{\le} 
	\frac{\sqrt{2}\|\E\|_F}{\|\E\|_2} \ln \frac{1}{1-\|\A^\dagger\|_2\|\E\|_2}
	\overset{(b)}{\le} 
	\sqrt{2}\frac{\|\A^\dagger\|_2\|\E\|_F}{1-\|\A^\dagger\|_2\|\E\|_2}
	\overset{(c)}{\le} 
	\sqrt{2k}\frac{\|\A^\dagger\|_2\|\E\|_2}{1-\|\A^\dagger\|_2\|\E\|_2} \]
	where (a) comes from Theorem 5.1 in~\citet{sun1995perturbation};
	(b) uses $\ln(1+x) \le x$ for all $x > -1$; 
	and (c) uses $\|\E\|_F \le \sqrt{k}\|\E\|_2$.
\end{proof}

\begin{lem}
	\label{lem:error-F-Ik}
	Let $\eta = \max_{i \in [m]}\|\M_i - \M \|_2/\|\M\|_2$ be sufficiently small.
	If $\D_t^{(i)}$ is solved from~\eqref{eq:D0} with $\FM = \{ \I_k \}$, then~\eqref{eq:error-H} and~\eqref{eq:error-W} hold with
	\[
	\rho_t \le 4\sqrt{2k}p \kappa^p \eta (1+\eta)^{p-1}
	\]
where $\kappa = \|\M\|_2\|\M^\dagger\|_2$ is the condition number of $\M$, $p = t -\tau(u)$, $\tau(t) \in \IM_T$ is defined as the nearest synchronization time before $t$.
\end{lem}

\begin{proof}
	By Lemma~\ref{lem:error}, we are going to bound $\rho_{t} = \max\limits_{i \in [m]}\|\Z^{(i)}-\Z_t^{(1)}\|_2$.
	Fix any $i \in [m]$ and $t \in [T]$.
	We will bound $\|\Z^{(i)}-\Z_t^{(1)}\|_2$ uniformly so that the bound holds for their maximum.
	
	Fix any $i \in [m]$ and $t \in [T]$.
	Let $\tau(t)$ be the latest synchronization step before $t$ and $p=t-\tau(t)$ be the number of nearest local updates.
	Note that $\Z_{t}^{(i)}$ and $\Z_{t}^{(1)}$ are the $Q$-factor of the QR factorization of $\M_i^p\Z_{\tau(t)}$ and $\M_1^p\Z_{\tau(t)}$.
	Let $\Z_t$ be the  $Q$-factor of the QR factorization of $\M^p\Z_{\tau(t)}$.
	Then Lemma~\ref{lem:diff_r} yields
	\[
	\|\Z_{t}^{(i)}- \Z_{t}\|_2 
	\le  \sqrt{2k} \frac{\|(\M^p\Z_{\tau(t)})^\dagger\|_2\|(\M_i^p-\M^p)\Z_{\tau(t)} \|_2 }{1- \|(\M^p\Z_{\tau(t)})^\dagger\|_2\|(\M_i^p-\M^p)\Z_{\tau(t)} \|_2}
	:= \sqrt{2k}  \frac{\omega}{1-\omega}
	\]
	where $\omega = \|(\M^p\Z_{\tau(t)})^\dagger\|_2\|(\M_i^p-\M^p)\Z_{\tau(t)} \|_2$ for short.
	If $\omega \le 1/2$, then we have $\|\Z_{t}^{(i)}- \Z_{t}\|_2  \le 2\sqrt{2k} \omega$.
	Otherwise, we have $\omega \ge 1/2$ and $\|\Z_{t}^{(i)}- \Z_{t}\|_2  \le 2 \le \sqrt{2k} \le 2\sqrt{2k} \omega$.
	Then we have for all $i \in [m]$,
	\[
		\|\Z_{t}^{(i)}- \Z_{t}\|_2  \le 2\sqrt{2k} \|(\M^p\Z_{\tau(t)})^\dagger\|_2\|(\M_i^p-\M^p)\Z_{\tau(t)} \|_2.
	\]
	
	Hence,
	\begin{align*}
		\rho_t &=  \|\Z_{t}^{(i)}- \Z_{t}^{(1)}\|_2 \\
		&\le \|\Z_{t}^{(i)}- \Z_{t}\|_2  + \|\Z_{t}- \Z_{t}^{(1)}\|_2 \\
		&\le 2\sqrt{2k}  \left[ \|(\M^p\Z_{\tau(t)})^\dagger\|_2\|(\M_i^p-\M^p)\Z_{\tau(t)} \|_2 +
		\|(\M^p\Z_{\tau(t)})^\dagger\|_2\|(\M_1^p-\M^p)\Z_{\tau(t)} \|_2 \right]\\
		&\le 4\sqrt{2k} \kappa^p \left[ (1+\eta)^p -1 \right]\\
			&\le 4\sqrt{2k}p \kappa^p \eta (1+\eta)^{p-1} 
	\end{align*}
			where $\kappa = \|\M\|_2\|\M^\dagger\|_2$ is the condition number of $\M$.
\end{proof}

\subsection{Proof of Theorem~\ref{thm:main} and Theorem~\ref{thm:rho}}
\begin{proof}
	We provide a proof in four steps. 
	
	\paragraph{First step: Perturbed iterate analysis.}
	Recall that we defined a virtual sequence by 
	\[ \bY_t  = \sum_{i=1}^m p_i \Y_t^{(i)}\O_t^{(i)}.  \]
	Notice that this sequence never has to be computed explicitly, it is just a virtual sequence we use in the analysis.
	 From Lemma~\ref{lem:recur}, we construct the iteration of the virtual sequence $\{\bY_{t}\}$ as
	\[\bY_{t+1}  =  \left(\M \bY_{t}  + \G_t  \right) \R_t^{-1}\]
	where $\M = \frac{1}{n} \A^\top \A \in \RB^{d \times d}$, $\G_t$ is the noise term inccured by the variance among different nodes, and $\R_t$ is chosen according to Lemma~\ref{lem:R}.
	Recall that $\G_t=\H_t + \W_t$ is given in~\eqref{eq:G_t} with $\H_t = \sum_{i=1}^m p_i \H_t^{(i)}$ and $ \W_t = \sum_{i=1}^m p_i \W_t^{(i)}$.

	\paragraph{Second step: Bound the noise term $\G_t$.}
	Let $p = \gap(\IM_T)$ denotes by the longest interval between subsequent synchronization steps.
	In order to guarantee convergence, we should make sure the noise term $\G_t$ is small enough.
	In particular, we require
	\begin{equation}
	\label{eq:constaint}
	\| \G_t\bY_{t}^\dagger\|_2 \le\frac{ \sigma_k - \sigma_{k+1}}{5} \min \left( \frac{\sqrt{r} - \sqrt{k-1}}{\tau \sqrt{d}}, \epsilon   \right)  
	\end{equation}
	By Lemma~\ref{lem:error-F-OMk} or~\ref{lem:error-F-Ik}, we always have
	\begin{gather*}
	\|\H_t\bY_{t}^\dagger\|_2 \le  \frac{2\sigma_1 \kappa \eta 1_{t \notin \IM_T}}{1-\eta{\kappa} -(1-\max_{i \in [m]}p_i)\rho_{t-1}} \\
	\|\W_t\bY_{t}^\dagger\|_2
	\le 4(1-\max_{i \in [m]}p_i)\sigma_1 \kappa\frac{\rho_t +( \rho_{t-1} + \eta)1_{t \notin \IM_T}  }{1-\eta{\kappa} -(1-\max_{i \in [m]}p_i)\rho_{t-1}} 
	\end{gather*}
	We assume $\eta {\kappa} \le 1/3$ and additionally assume $(1-\max_{i \in [m]}p_i)\rho_{t-1} \le \frac{1}{3}$.
	Then the last two inequalities become
	\begin{gather*}
	\|\H_t\bY_{t}^\dagger\|_2 \le 6\sigma_1 \kappa \eta 1_{t \notin \IM_T}   := 6\sigma_1\kappa \varPsi_t \\ \\
	\|\W_t\bY_{t}^\dagger\|_2
	\le 12(1-\max_{i \in [m]}p_i)\sigma_1 \kappa \left[\rho_t +( \rho_{t-1} + \eta)1_{t \notin \IM_T}  \right] := 12\sigma_1 \kappa \varOmega_t
	\end{gather*}	
	Then in order to ensure~\eqref{eq:constaint}, we only need to ensure
	\[
	6\sigma_1\varPsi_t + 12\sigma_1 \varOmega_t
	\le \frac{ \sigma_k - \sigma_{k+1}}{5 \kappa } \min \left( \frac{\sqrt{r} - \sqrt{k-1}}{\tau \sqrt{d}}, \epsilon   \right).  
	\]
A sufficient condition to that is 
	\begin{equation}
	\label{eq:error_consition_append}
		\varPsi_t +\varOmega_t
		\le \frac{1}{60}  \frac{ \sigma_k - \sigma_{k+1}}{\sigma_1 \kappa} \min \left( \frac{\sqrt{r} - \sqrt{k-1}}{\tau \sqrt{d}}, \epsilon   \right) = \OM(\epsilon_0).  
	\end{equation}
	Finally, we argue that the condition $(1-\max_{i \in [m]}p_i)\rho_{t-1} \le \frac{1}{3}$ is indicated in the uniform boundedness of~\eqref{eq:error_consition_append} (i.e.,~\eqref{eq:error_consition_append} holds for all $t \in [T]$).
	This is because
	\[
	(1-\max_{i \in [m]}p_i)\rho_{t-1}  
	\le  \varOmega_{t-1} \le \varPsi_{t-1} +\varOmega_{t-1} \le  \frac{\epsilon_0}{60} < \frac{1}{3}.
	\]
	
	\paragraph{Third step: Bound $\rho_{t}$.}
	Let $\kappa = \|\M\|_2\|\M^\dagger\|_2$ be the condition number of $\M$ and $p = t -\tau(u)$ with $\tau(t) \in \IM_T$ defined as the nearest synchronization time before $t$.
	Then, we can prove Theorem~\ref{thm:rho} now.
	\begin{itemize}
		\item If $\FM = \OM_{k}$, then 
		\[
		\rho_t
		\le \sqrt{2} \min\left\{\frac{2\kappa^ p \eta (1+\eta)^{p-1} }{(1-\eta)^p} , \frac{\eta\sigma_1}{\delta_k} + 2 \gamma_k^{p/4}  \max_{i \in [m]}\tan \theta_k(\Z_{\tau(t)}, \U_{k}^{(i)})  \right\}.
		\]
		with the parameters $\delta_k, \gamma_k$ given in Lemma~\ref{lem:error-F-OMk}.
		By requiring $\eta \le 1/p$, we have
		$\frac{ (1+\eta)^{p-1}}{(1-\eta)^p} \le \frac{ (1+1/p)^{p-1}}{(1-1/p)^p} \le \mathrm{e}^2$.
		Define $C_t = \max_{i \in [m]}\tan \theta_k(\Z_{\tau(t)}, \U_{k}^{(i)})$. 
		Latter we will show that since \LP converges under Assumption~\ref{assum:error_condition}, then $\lim\limits_{t \to \infty} \sin \theta_k(\Z_{\tau(t)}, \U_{k}) \le \epsilon$.
		Then, we have
		\begin{align*}
		  \limsup\limits_{t \to \infty} C_t 
		  &= 
		 \limsup\limits_{t \to \infty} \max_{i \in [m]}  \tan \theta_k(\Z_{\tau(t)}, \U_{k}^{(i)} )\\
		 &=\limsup\limits_{t \to \infty} \max_{i \in [m]} \tan \arg\sin \sin\theta_k(\Z_{\tau(t)}, \U_{k}^{(i)} )\\
		 &\le\limsup\limits_{t \to \infty} \max_{i \in [m]} \tan \arg\sin (\sin\theta_k(\Z_{\tau(t)}, \U_{k})+ \sin\theta_k(\U_{k}, \U_{k}^{(i)} ))\\
		&\le\max_{i \in [m]}  \tan \arg\sin (\frac{\eta\sigma_1}{\delta_k} + \epsilon)
		= \OM(\eta+\epsilon).
		\end{align*}
		It can be seen that when $p$ is sufficiently large, $\rho_t = \OM(\eta)$ which is independent with $p$.
		\item If $\FM = \{\I_k\}$, then
		\[
		\rho_t \le 4\sqrt{2k}p \kappa^p \eta (1+\eta)^{p-1}
		\le 4\mathrm{e}\sqrt{2k}p \kappa^p \eta.
		\]
	\end{itemize}
   
	Simply put together, if $\varPsi + \varOmega \le \epsilon_0$, we can firmly ensure~\eqref{eq:constaint} holds.

	\paragraph{Forth step: Establish convergence.}
	Let's first assume ~\eqref{eq:UG_G} holds.
	With~\eqref{eq:UG_G}, the following argument is quite similar to~\citet{hardt2014noisy}.
	Note that 
	Specifically, we will see that at every step $t$ of the algorithm,
	\[\tan \theta_k(\U_k, \bY_t) \le \max\left( \epsilon, \tan \theta_k(\U_k, \Z_0) \right),\]
	which implies for $\epsilon \le \frac{1}{2}$ that
	\[ \cos \theta_k(\U_k, \bZ_t) \ge \min \left(  1 - \epsilon^2/2, \cos \theta_k(\U_k, \Z_0) \right) \ge \frac{7}{8} \cos \theta_k(\U_k, \Z_0)  \]
	so Lemma~\ref{lem:error} applies at every step. 
	This means that
	\[\tan \theta_k(\U_k, \bY_{t+1}) \le \max\left( \epsilon, \delta \tan \theta_k(\U_k, \bY_{t}) \right)\]
	for $\delta = \max(\epsilon, ({\sigma_{k+1}}/{\sigma_{k}})^{1/4})$. 
	After $T \ge \log_{1/\delta} \frac{\tan \theta_k(\U_k, \Z_0)}{\epsilon}$ steps, the tangent will reach the accuracy $\epsilon$ and remain there. 
	So we have
	\[  \| ( \I-\Z_{T} \Z_{T}^{\top}) \U\| 
	= \sin \theta_k(\U_k, \bY_T) 
	\le   \tan \theta_k(\U_k, \bY_T) \le 
	 \epsilon.  \]
	 Plus the observation that 
	 \[ \log(1/\delta) 
	 \ge c \min (\log(1/\epsilon), \log(\sigma_{k}/\sigma_{k+1})) 
	 \ge c \min\left(1, \log\frac{1}{1-\gamma}\right) 
	 \ge  c\min (1, \gamma) =  c \gamma  \]
	 where $\gamma = 1 - \sigma_{k+1}/\sigma_{k}$ and $c = \frac{1}{4}$, we can set $T \in \IM_T$ and
	 \[ T =  \Omega\left(  \frac{\sigma_{k}}{\sigma_{k}-\sigma_{k+1}} \log (d \tau / \varepsilon) \right). \]
	 
	 Finally we are going to show that once the noise term $\G_t$ is bounded as~\eqref{eq:constaint},~\eqref{eq:UG_G} would naturally hold.
	 From Lemma~\ref{lem:initial}, we have 
	 	\[  \tan \theta_k(\U, \Z_0) \le \frac{\tau \sqrt{d}}{\sqrt{r} - \sqrt{k-1}}  \]
	 with all but $\tau^{-\Omega(p+1-k)}+e^{-\Omega(d)}$ probability.
	 Hence 
	 \[  \cos \theta_k(\U, \Z_0) \ge \frac{1}{1 + \tan \theta_k(\U, \Z_0)} \ge \frac{\sqrt{r}-\sqrt{k-1}}{2\tau\sqrt{d}}.  \]
\end{proof}

\subsection{Proof of Corollary~\ref{col:error}}
\label{append:proof-coro}
\begin{proof}
From Theorem~\ref{thm:main}, our algorithm has error no larger than $\epsilon$.
We then find the minimum $\epsilon$ that is a function of $m, n, p$ by combining Theorem~\ref{thm:rho} and Lemma~\ref{lem:uniform}.
For a fixed $n/m$, Lemma~\ref{lem:uniform} bounds $\eta$ in terms of $s$ or equivalently $n/m$, implies $\eta = \sqrt{\frac{3 \mu \rho}{s_i}\log \big( \frac{\rho m}{{\delta}} \big)} =
\widetilde{\Theta}(\sqrt{\frac{\mu\rho}{s}})= \widetilde{\Theta}(\sqrt{\frac{m\mu\rho}{n}})$.
For sufficiently small $\epsilon$, we have $\epsilon = \frac{\sigma_1 \kappa }{\sigma_k-\sigma_{k+1}} \epsilon_0$.
Let $\epsilon$ be sufficiently small such that~\eqref{eq:error_cond} just holds.
Then we have
\[
\epsilon = \Theta(\epsilon_0) = \Theta(\eta + \sup_{t}(\rho_t + \rho_{t-1}) ) = \OM(h_p(\eta)+\eta)
\]
where the last equality follows from Theorem~\ref{thm:rho} which bounds $\rho_t$ in terms of $\eta$.
It is in the form of $\rho_t \le h_p(\eta)$ where $h_1(\cdot) = 0$ and $h_p(\eta)$ typically increases in $p$ and $\eta$.
With OPT, $h_p(\eta) = O(\eta)$, while without OPT, $h_p(\eta) = \OM(\sqrt{k}p \kappa^p \eta).$

If we use any decay strategy in which $p$ converges to $1$ finally, then \LP is reduced to \DPI finally and thus of course achieves zero error asymptotically.
\end{proof}

\end{appendix}

\section{Statistical Error Between the Empirical Matrix and the Population One}
\label{appen:stat_error}
Recall that $\M = \frac{1}{n} \A \A^\top = \frac{1}{n} \sum_{i=1}^n \x_i \x_i^\top$ is the empirical correlation matrix and $\M_{*} = \EB_{\x \sim \DM} \x \x^\top$ is the population one.
By Matrix Hoeffding theorem, we can bound $\|\M - \M_{*}\|$ in terms of samples.
\begin{lem}[Matrix Hoeffding inequality~\citet{tropp2012user}]
Let $\DM$ be a distribution over vectors with squared $\ell_2$
norm at most $b$.
Let $\M_{*} = \EB_{\x \sim \DM} \x \x^\top$ and $\M = \frac{1}{n} \sum_{i=1}^n \x_i \x_i^\top$ where $\x_1, \cdots, \x_n $ are sampled
i.i.d. from $\DM$, then it holds that
\[
\PB\left(   \| \M_{*} - \M \| \ge t  \right) \le d \cdot \exp\left( - \frac{t^2 n}{16b^2}  \right).
\]
\end{lem}

Let the top-$k$ eigenspace of $\M$ and $\M_{*}$ be respectively $\V_k$ and $\V_{k, *}$ (both of which are orthonormal).
Let $\widehat{\V}$ be any estimated top-$k$ eigenvector matrix (for example, $\Z_T$ produced by \texttt{LocalPower}).
If we care how accurately $\widehat{\V}$ approximate $\V_{k, *}$, by the triangle inequality, 
\[
\dist(\widehat{\V}, \V_{k, *})
\le \underbrace{ \dist(\widehat{\V}, \V_{k})}_{\text{optimization \ error }}
+ \underbrace{ \dist(\V_{k}, \V_{k,*})}_{\text{statistical \ error }}
\]
Theorem~\ref{thm:main} characterizes the diminishing speed of the optimization term, however, has nothing to do with the statistical error.
The latter is controlled by the available samples through the combination of the Davis-Kahan $\sin(\theta)$ theorem (Lemma~\ref{lem:DK}) and $\| \M_{*} - \M \|$.
In particular, with probability greater than $1-\delta$, the statistical error is no larger than
\[
\frac{1}{\delta_k} 4b\sqrt{ \frac{\ln \frac{d}{\delta}}{n}}.
\]
If only a single machine attends the training, $n=s$, while if $m$ machines cooperate, $n = ms$.
From the last inequality, the statistical error is reduced by a factor of $\sqrt{m}$.

\section{Dependence on $\sigma_k - \sigma_{k+1}$}
\label{appen:gap}
Our result depends on $\sigma_k - \sigma_{k+1}$ even when $r>k$ where $r$ is the number of columns used in subspace iteration.
This is mainly because we borrow tools from~\citet{hardt2014noisy} to prove the theory.
In the analysis of~\citet{hardt2014noisy}, the required iteration depends on the consecutive eigengap $\sigma_k - \sigma_{k+1}$ even when $r>k$ where $r$ is the number of columns used in subspace iteration.
Note that $\sigma_k - \sigma_{k+1}$ can be unimaginably small in practical large-scale problems.
\citet{balcan2016improved} improved the result to a slightly milder dependency on  $\sigma_k-\sigma_{q+1}$ by proposing a novel characterization measuring the discrepancy between the running rank-$r$ subspace $\Z_t$ and target top-$k$ eigenspace $\U_k$, where $q$ is any intermediate integer between $k$ and $r$.
If we borrow the idea from the improved analysis of \citet{balcan2016improved}, we can refine the result.
In that case, the needed computation rounds will depend on $\sigma_k-\sigma_{q+1}$ as a result.
All the above discussion can be easily parallel.

\begin{theorem}
   Let Assumption~\ref{assum:error_condition} hold with sufficiently small $\eta \kappa\le\frac{1}{3}$ where $\kappa = \|\M\|\|\M^{\dagger}\|$ is the condition number of $\M$.
    Let Assumption~\ref{assum:error_condition} holds with $\tau > 0$ and the following $\epsilon_0$
    \[
		\epsilon_0 = \frac{\sqrt{r} - \sqrt{q-1}}{\tau \sqrt{d}}
		 \min \left\{ \frac{ \sigma_k - \sigma_{q+1}}{\sigma_1}\epsilon, \frac{\sigma_q}{\sigma_1}   \right\}.
	\]
	Let $k \le q \le r$.
	If we borrow the refined analysis in~\citet{balcan2016improved}, then for sufficiently small
	$\epsilon$ satisfying
	\[ \epsilon = 
	\OM\left( \frac{\sigma_q}{\sigma_k} \cdot \min \bigg\{  \frac{1}{\log(\sigma_k/\sigma_q)}, \frac{1}{\log(\tau d)} \bigg\}     \right),  \]
	when
	\[ T = \Omega \left(   \frac{\sigma_k}{\sigma_k - \sigma_{q+1}} \log\left(\frac{\tau d}{\epsilon}\right)  \right) \]
	after $ | \IM_T|$ rounds of communication, with probability at least $1 - \tau^{-\Omega(r+1-q)} - e^{-\Omega(d)}$, we have
	\[  \dist(\Z_T, \U_k) = \sin\theta_k (\Z_T, \U_k) = \| \left( \I_d - \Z_T\Z_T^\top \right)\U_k\| \le \epsilon.   \]
\end{theorem}
\begin{proof}
We use Corollary A.1 in~\citet{balcan2016improved} instead of Lemma~\ref{lem:error} in the third step of the proof of Theorem~\ref{thm:main}.
\end{proof}

\section{Related Work}
\label{sec:related}
Truncated SVD or principal component analysis (PCA) is one of the most important and popular techniques in data analysis and machine learning.
A multitude of researches focus on iterative algorithms such as power iterations or its variants~\citep{golub2012matrix,saad2011numerical}.
These deterministic algorithms inevitably depends on the spectral gap, which can be quite large in large scale problems.
Another branch of algorithm seek alternatives in stochastic and incremental algorithms~\cite{oja1985stochastic,arora2013stochastic,shamir2015stochastic,shamir2016convergence,de2018accelerated}.
Some work could achieve eigengap-free convergence rate and low-iteration-complexity~\citep{musco2015randomized,shamir2016convergence,allen2016lazysvd}.

Large-scale problems necessitate cooperation among multiple worker nodes to overcome the obstacles of data storage and heavy computation.
For a review of distributed algorithms for PCA, one could refer to~\citet{wu2018review}.
One feasible approach is divide-and-conquer algorithm which performs a one-shot averaging of the individual top-$k$ eigenvectors (or subspace) returned by worker nodes~\citep{garber2017communication,fan2019distributed,bhaskara2019distributed,charisopoulos2020communication}.
In particular, the concurrent work~\citep{charisopoulos2020communication} proposes to average local eigenvector matrices via OPT as ours, though they focus on one-shot scenario and obtain better error analysis.
The divide-and-conquer algorithms have only one round of communication. 
To reach a certain accuracy, it often requires that the per-machine sample size $s$ to grow with the number of machines $m$~\citep{garber2017communication}, which means it is only effective in large local dataset regime.


Another line of results for distributed eigenspace estimation uses iterative algorithms that perform multiple communication rounds.
They require much smaller sample size and can often achieve arbitrary accuracy.
For example, in our work, we only require the per-machine sample size $s$ depends on $m$ in a very mild way like $\OM(\ln m)$, however,~\citet{garber2017communication} requires $s = \widetilde{\OM}(m)$ to reach a comparable result.
Some works make use of shift-and-invert framework (S\&I) for PCA~\citep{garber2015fast,garber2016faster,allen2016lazysvd}.
S\&I methods turn the problem of computing the leading eigenvector to that of approximately solving a small system of linear equations.
This, in turn, could be solved by arbitrary convex solvers~\citep{xu2018gradient}, and, therefore, can be extended in distributed settings naturally.
~\citet{garber2017communication} coupled S\&I methods with a distributed  first-order convex solver, giving guarantees in terms of communication costs.
~\citet{gang2019fast} turns the problem of distributed PCA into a constraint optimization problem (by letting each device hold a independent parameter and adding a constraint that all local parameter should be same), and then uses gradient-based methods to solve it iteratively. 
~\citet{chen2021distributed} combined S\&I methods with a distributed approximate Newton method where the communication cost is saved by only using the Hessian information on the first machine.
Very recently,~\citet{grammenos2019federated} proposed a federated, asynchronous, and differential privacy algorithm for distributed PCA. Methodologically, the algorithm is not power-iteration-based. 
Instead, their algorithm incrementally computes local model updates using streaming procedure and adaptively estimates its leading principle components. 
In particular, they assume the clients are arranged in a tree-like structure, while we did not make such assumption.

Recently, the technique of local updates emerges as a simple but powerful tool in distributed empirical risk minimization~\citep{mcmahan2017communication,zhou2017convergence,stich2018local,wang2018cooperative,yu2019parallel,li2019convergence,li2019communication,khaled2019first}.
Distributed algorithms with local updates typically alternate between local computation and periodical communication.
Therefore, local updates allow less frequent communication but incur more computation due to the inevitably accumulated residual errors.
This paper uses local updates for the distributed power iteration.
However, our analysis is totally different from the local SGD algorithms \citep{zhou2017convergence,stich2018local,wang2018cooperative,yu2019parallel,li2019convergence,li2019communication,khaled2019first}.
A main challenge in analyzing \texttt{LocalPower} is that the local SGD algorithms for empirical risk minimization often involve an explicit form of (stochastic) gradients.
For SVD or PCA, a canonical example of non-convex problems, the gradient cannot be explicitly expressed, so the existing techniques cannot be applied~\citep{simchowitz2017gap}.
Instead, we borrowed tools from the noisy power method \citep{hardt2014noisy,balcan2016improved} and carefully analyze the residual errors.

In our paper, we only consider the centralize PCA, where there is a server connecting all other nodes.
However, the technique of local updates can also be used in other settings like decentralized or streaming PCA~\citep{gang2019fast,raja2020distributed}.

\section{Experiments}
\label{appen:exp_all}

\subsection{Experimental Settings}
\label{appen:exp_setting}
We conduct experiments to demonstrate the communication efficiency of \texttt{LocalPower}.
We use several datasets from the LIBSVM website and summarize them in Table~\ref{tab:dataset}.
Our focus is the needed communication round required to minimize the optimization error and analyze \texttt{LocalPower} through different lens.
For comparison, we consider the following baselines:
\begin{enumerate}
	\item Weighted Distributed Averaging~\cite{bhaskara2019distributed};
	\item Unweighted Distributed Averaging~\cite{fan2019distributed};
	\item Distributed Randomized SVD.
	\item Distributed power Iteration (the case of \LP when $p=1$)
\end{enumerate}
For completeness, we include the former three algorithms in the next subsection.
We also study the effect of different choice of $m$, $p$, and the decay strategy.
Throughout, we use either $\IM_T = \{0, p ,2p, \cdots, T\}$ or the decay strategy.

\paragraph{Preprocessing.}
The data are randomly shuffled and partitioned among $m$ nodes.
We scale each feature by dividing it by the maximum value of each coordinate, so that each feature locates between $[-1, 1]$.
In particular, we will first find the maximum value for each feature coordinate among all workers in the system and share it with all participants.
All the experiments use the same initialization $\Z_0 \in \RB^{d\times r}$ (for any $r> k$) which contains a set of randomly generated orthonormal bases.

\paragraph{Experimental.}
All the experiments are conducted on a single machine.
We fix the target rank to $k=5$.
We plot $\text{dist}(\Z_t, \U_k) = \|(1-\Z_t\Z_t^\top)\U_k\| = \sin\theta_k (\Z_t, \U_k) $ against the number of communications to evaluate communication efficiency.
In Table~\ref{tab:dataset}, we list the information of $(n, d)$ for the datasets we use, all satisfying $n \ll d$. 
Though we focus on large $n$ regime, latter we also test large $d$ regimes namely $n \approx d$ for completeness.
In Table~\ref{table:2}, we estimate $\eta$ by $\max_{i \in [m]}\|\M_i - \M\|_2/\|\M\|_2$.
Under uniform sampling, when we fix $n$, the larger $m$ (equals to smaller $s$), the larger $\eta$.

\begin{table}[!ht]
	\caption{A summary of used data sets from the LIBSVM website.}
	\label{tab:dataset}
	\centering
		\begin{tabular}{lcc|lcc}
			\toprule
			Data set & $n$ & $d$ & 	Data set & $n$ & $d$ \\
			\midrule
			A9a & 32561 & 123 &
			Abalone & 2114 & 8\\
			Acoustic & 78823 & 50&
			Aloi & 108000  & 128 \\
			Combined & 78823 &100 &
			Connect-4 & 7990 & 125\\
			Covtype & 581,012 & 54 &
			Housing & 506 & 13\\
			Ijcnn1 & 49990 & 22&
			MNIST & 60,000 & 780 \\
			Poker & 25010 & 10 &
			Space-ga & 3107 & 6  \\
				Splice & 1000 & 24&
				W8a & 49749 & 300\\
			MSD & 463,715 & 90 &  &  & \\
			\bottomrule
		\end{tabular}
\end{table}

\begin{table}[!ht]
	\caption{
		The value of $\eta$ under uniform partitions on fifteen datasets. 
		In the following experiments, we uniformly distribute $n$ samples into $m = \max(\lfloor\frac{n}{1000}\rfloor, 3)$ so that each device has about 1000 samples. 
		It implies $m$ ranges from 20 to 100, which is the range we consider here.
		To fill the following table, we distributed $n$ samples into $m$ devices and estimate it by $\eta = \max_{i \in [m]}\|\M -\M_i\|_2/\|\M\|_2$.
		It can be seen that for a fixed $n$, the larger $m$, the larger $\eta$.
}
	\label{table:2}
	\centering 
	\begin{tabular}{c|c|c|c|c|c}
		\toprule
		Dataset & $m=20$ & $m=40$ & $m=60$  &$m=80$ & $m=100$ \\
		\midrule
		A9a&0.034 & 0.0563 &0.0701& 0.0906& 0.0998\\
		Abalone&0.1089 &0.23 &  0.2458 &0.2629 &0.3556\\
		Acoustic&0.0063 &0.0107 &0.0134 &0.0179 &0.0199\\
		Aloi&0.0479 &0.0659 &0.1023 &0.1162 &0.203 \\
		Combined&0.006 & 0.0089 &0.0113 &0.014 & 0.0158\\
		Connect-4&0.0376& 0.054 & 0.0771 &0.0791 &0.0899\\
		Covtype&0.0078 &0.011 & 0.0159 &0.0164& 0.0202\\
		Housing&0.3117 &0.3747& 0.5062 &0.6442 &0.6741\\
		Ijcnn1&0.016 & 0.0288 &0.0348 &0.0363& 0.0489\\
		MNIST&0.0396 &0.0584 &0.0689& 0.0896& 0.0904\\
		Poker&0.0369 &0.0519 &0.0702 &0.0803 &0.0904\\
		Space-ga&0.0855 &0.1317& 0.1495 &0.2111 &0.3446\\
		Splice&0.1627 &0.2484 &0.3154 &0.3957 &0.4717\\
		W8a&0.1046 &0.1664 &0.1937& 0.2515 &0.3167\\
		MSD&0.0007& 0.0009 &0.0012& 0.0014 &0.0015\\
		\bottomrule
	\end{tabular}
\end{table}

\newpage
\subsection{One-shot Baseline Algorithms}
\label{appen:baseline}

\begin{algorithm*}[!ht]
	\caption{Unweighted Distributed Averaging (\texttt{UDA})~\cite{fan2019distributed}} 
	\label{alg:unweited_distributed} 
	\begin{algorithmic}[1]
		\STATE {\bfseries Input:} distributed dataset $\{\A_i\}_{i=1}^m$ with $\A_i \in \RB^{s_i \times d}$, target rank $k$.
		\STATE  {\bfseries Local:} Each device computes the rank-$k$ SVD of $\M_i = \frac{1}{s_i} \A_i^\top \A_i$ as $\widehat{\V}_i \Si_i \widehat{\V}_i^\top$ with $\Si_i  \in \RB^{k \times k}$ and $\widehat{\V}_i \in \RB^{d \times k}$.
		\STATE  {\bfseries Server:} The central server computes $\widetilde{\M} = \frac{1}{m} \sum_{i=1}^n \widehat{\V}_i \widehat{\V}_i^\top$, then output the top $k$
eigenvalues and the corresponding eigenvectors of $\widetilde{\M}$.
	\end{algorithmic}
\end{algorithm*}

\begin{algorithm*}[!th]
	\caption{Weighted Distributed Averaging (\texttt{WDA})~\cite{bhaskara2019distributed}} 
	\label{alg:weited_distributed} 
	\begin{algorithmic}[1]
		\STATE {\bfseries Input:} distributed dataset $\{\A_i\}_{i=1}^m$ with $\A_i \in \RB^{s_i \times d}$, target rank $k$.
		\STATE  {\bfseries Local:} Each device computes the rank-$k$ SVD of $\M_i = \frac{1}{s_i} \A_i^\top \A_i$ as $\widehat{\V}_i \Si_i \widehat{\V}_i^\top$ with $\Si_i  \in \RB^{k \times k}$ and $\widehat{\V}_i \in \RB^{d \times k}$.
		\STATE  {\bfseries Server:} The central server computes $\widetilde{\M} = \frac{1}{m} \sum_{i=1}^n \widehat{\V}_i \Si_i \widehat{\V}_i^\top$, then output the top $k$
		eigenvalues and the corresponding eigenvectors of $\widetilde{\M}$.
	\end{algorithmic}
\end{algorithm*}

\begin{algorithm*}[!th]
	\caption{Distributed Randomized SVD (\texttt{DR-SVD}) (A distributed variant of Randomized SVD in~\citet{halko2011finding})} 
	\label{alg:local_rSVD} 
	\begin{algorithmic}[1]
		\STATE {\bfseries Input:} distributed dataset $\{\A_i\}_{i=1}^m$, $\A = [\A_1^\top, \cdots, \A_m^\top]^\top \in \RB^{n \times d}$ with target rank $k$, $\A_i \in \RB^{s_i \times d}$ and $r = k + \lfloor \frac{d-k}{4}\rfloor$.
		\STATE The server generates a $d \times r$ random Gaussian matrix $\bf{\Omega}$;
		\STATE The server learns $\Y = \A \A^\top \A\bf{\Omega}$ and obtains an orthonormal $\Q \in \RB^{n \times r}$ by QR decomposition on $\Y$;
		\STATE Let $\Q = [\Q_1^\top, \cdots, \Q_m^\top]^\top$ with $\Q_i \in \RB^{s_i \times r}$ and each worker receives $\Q_i$;
		\STATE The $i$-th worker computes $\B_i = \Q_i^\top \A_i \in \RB^{r \times d}$ for all $i \in [m]$;
		\STATE The server aggregate $\B = \sum_{i=1}^m \B_i = \Q^\top \A$ and perform SVD: $\B = \widetilde{\U} \widehat{\Si}  \widehat{\V}^T$;
		\STATE Set $\widehat{\U} = \Q \widetilde{\U} $;
		\STATE {\bfseries Output:} the first $k$ columns of $(\widehat{\U}, \widehat{\Si},  \widehat{\V})$.
	\end{algorithmic}
\end{algorithm*}

\newpage
\subsection{Additional Experiments Results}
\label{appen:exp}

 \begin{table}[!ht]
	\newcommand{\tabincell}[2]{\begin{tabular}{@{}#1@{}}#2\end{tabular}}  
	\caption{Error comparison among three one-shot baseline algorithms and our \texttt{LocalPower}.
		We uniformly distribute $n$ samples into $m = \max(\lfloor\frac{n}{1000}\rfloor, 3)$ devices so that each device has about 1000 samples.
	We show the mean errors of ten repeated experiments with its standard deviation enclosed in parentheses.
	Here we use $p=4$ for all variants of \LP and sufficiently large $T$'s which guarantee \LP converges.
	For better visualization, we show the box plot of final errors of ten repeated experiments in Figure~\ref{fig:error_box}.
	}
	\label{table:baseline_additional}
	\centering 
	\vspace{0.05in}
		\resizebox{\textwidth}{30mm}{
\begin{tabular}{c|c|c|c|c|c|c}
		\toprule
	{\multirow{2}{*}{Datasets}}& 
	\multicolumn{3}{c}{\LP with $p=4$}\vline&
	{\multirow{2}{*}{\texttt{DR-SVD}}}& 
	{\multirow{2}{*}{\texttt{UDA}}}& 
	{\multirow{2}{*}{\texttt{WDA}}}\\
	\cline{2-4}
	  & OPT & Sign-fixing & Vanilla &  &  & \\
	\hline
	A9a&  \textbf{4.09e-03} (\textbf{4.20e-04})& 5.82e-03 (1.41e-03) &8.13e-02 (3.44e-02)&4.63e-02 (9.24e-03)& 2.64e-02 (1.58e-02)&2.40e-02 (1.50e-02)\\
    Abalone & \textbf{3.16e-03} (2.89e-03)& 3.85e-03 (\textbf{2.54e-03})&3.03e-02 (5.70e-02)& 3.20e-01 (2.30e-01)& 1.03e-01 (9.38e-02)&1.03e-01 (9.18e-02)\\    
	Acoustic & \textbf{1.83e-03} (4.40e-04)& 2.03e-03 (3.90e-04)& 2.38e-03 (8.50e-04)& 1.54e-02 (6.59e-03)&7.76e-03 (2.64e-03)& 6.67e-03 (2.41e-03)\\
	Aloi & 3.07e-02 (1.10e-02)& 6.57e-02 (1.06e-02)& 5.24e-02 (1.10e-02)& \textbf{1.92e-03} (\textbf{4.30e-04})& 4.80e-02 (1.10e-02)& 4.37e-02 (4.73e-03)\\
	Combined &6.01e-03 (1.59e-03)& \textbf{5.57e-03} (\textbf{1.05e-03}) &2.47e-02 (3.40e-02)& 5.19e-02 (6.23e-03)&4.63e-02 (2.97e-02)&4.16e-02 (2.76e-02)\\
	Connect-4 &  \textbf{1.27e-02} (4.52e-03)&1.81e-02 (3.79e-03)&1.70e-02 (4.35e-03)& 1.61e-02 (2.96e-03)& 1.65e-01 (3.48e-02) &1.56e-01 (3.26e-02)\\
	Covtype & 7.38e-03 (8.50e-04)& \textbf{6.23e-03} (\textbf{3.30e-04})& 1.28e-02 (1.88e-03)& 1.82e-01 (8.73e-02)& 6.09e-02 (9.70e-03)& 5.60e-02 (9.41e-03)\\
	Housing& \textbf{1.18e-02} (\textbf{5.45e-03})& 2.76e-02 (1.14e-02)& 3.84e-02 (5.11e-02)& 5.66e-01 (2.62e-01)& 9.16e-02 (5.09e-02)& 5.89e-02 (3.25e-02)\\
	Ijcnn1& \textbf{1.53e-01} (1.87e-01)& 1.95e-01 (2.45e-01)& 3.23e-01 (2.24e-01)& 1.21e+00 (1.70e-01)& 3.85e-01 (7.62e-02)& 3.67e-01 (\textbf{7.59e-02})\\	
	MNIST  & 2.62e-03 (3.40e-04)& 4.85e-03 (8.00e-04)& 5.08e-03 (7.90e-04)& \textbf{5.00e-05} (\textbf{0.00e+00})& 1.08e-02 (3.00e-03) &8.91e-03 (2.53e-03)\\	
	Poker & \textbf{6.45e-03} (\textbf{1.90e-03})&  1.08e-02 (3.34e-03)& 5.33e-02 (3.63e-02)& 1.25e+00 (1.61e-01)& 2.39e-02 (3.00e-03) & 2.00e-02 (2.19e-03)\\	
	Space-ga& \textbf{2.80e-04} (\textbf{1.40e-04})& 5.10e-04 (2.90e-04)& 6.50e-04 (3.60e-04)& 7.40e-01 (2.14e-01)& 2.83e-02 (2.46e-02)& 3.82e-02 (2.72e-02)\\
	Splice & \textbf{1.61e-02} (\textbf{5.46e-03})& 2.87e-02 (8.93e-03)& 7.45e-02 (9.26e-02)&4.52e-01 (1.37e-01)& 1.56e-01 (7.08e-02)& 1.34e-01 (6.26e-02)\\
	W8a&1.90e-02 (2.46e-03)&1.75e-02 (1.76e-03)& \textbf{1.68e-02} (\textbf{1.29e-03}) & 7.13e-02 (2.06e-02)& 1.52e-01 (4.37e-02 )&1.51e-01 (4.11e-02)\\
	 MSD & 9.90e-03 (1.21e-03)& \textbf{9.62e-03} (\textbf{5.20e-04})&1.44e-02 (1.58e-03)& 3.01e-02 (9.64e-03)& 1.55e-02 (1.39e-03)&1.92e-02 (1.14e-03)\\
	\bottomrule
\end{tabular}}
\end{table}

 \begin{table}[!ht]
	\caption{Error comparison among \texttt{LocalPower} with the decay strategy and three different $\FM$.
		We uniformly distribute $n$ samples into $m = \max(\lfloor\frac{n}{1000}\rfloor, 3)$ devices so that each device has about 1000 samples.
		We show the mean errors of ten repeated experiments with its standard deviation enclosed in parentheses.
		Here we use $p=4$ for all variants of \LP and sufficiently large $T$'s which guarantee \LP converges.
	}
	\label{table:baseline_decay_additional}
	\centering 
		\begin{tabular}{c|c|c|c}
			\toprule
			{\multirow{2}{*}{Datasets}}& 
			\multicolumn{3}{c}{\LP with the decay strategy}\\
			\cline{2-4}
			& OPT & Sign-fixing & Vanilla  \\
			\hline
			A9a& 4.84e-03 (1.40e-02)& 1.52e-03 (4.08e-03) &\textbf{3.11e-04} (\textbf{4.84e-04}) \\
			Abalone  &
			\textbf{3.50e-10} (4.10e-10)&4.14e-10  (\textbf{4.00e-10})&6.12e-10 (6.77e-10)\\
			Acoustic &
			\textbf{1.40e-05} (\textbf{2.16e-05}) &1.92e-05 (3.72e-05) &2.28e-05 (4.91e-05)\\
			Aloi &
			\textbf{5.82e-10} (\textbf{5.17e-10}) &1.71e-09 (2.20e-09)&2.36e-09 (2.14e-09)\\
			Combined &
			3.68e-03 (5.63e-03) &7.74e-03 (1.70e-02)& \textbf{2.99e-03} (\textbf{3.88e-03}) \\
			Connect-4 &
			4.90e-03 (8.47e-03)&3.58e-03 (4.35e-03)&\textbf{3.09e-03} (\textbf{3.16e-03})\\
			Covtype &
			5.57e-04 (1.55e-03)&\textbf{4.95e-05} (\textbf{5.40e-05}) &8.01e-05 (8.62e-05)\\
			Housing &
			\textbf{1.38e-05} (\textbf{2.88e-05})&2.20e-05 (5.66e-05)&2.08e-05 (5.68e-05)\\
			Ijcnn1 &
			3.56e-01 (1.97e-01)&3.33e-01 (\textbf{1.67e-01})& \textbf{3.32e-01} (1.72e-01) \\
			MNIST  &
			2.06e-05 (2.38e-05)&\textbf{1.72e-05} (1.62e-05)&1.72e-05 (\textbf{1.62e-05})\\
			Poker &
			\textbf{3.08e-03} (\textbf{1.49e-03})&3.22e-03 (1.82e-03)&3.22e-03 (1.93e-03)\\
			Space-ga&
			\textbf{3.47e-14} (2.13e-14)&3.56e-14 (\textbf{2.11e-14})&3.87e-14 (2.27e-14)\\
			Splice &
			\textbf{4.11e-07} (\textbf{5.29e-07})&8.88e-07 (1.24e-06)&1.01e-06 (1.34e-06)\\
			W8a& \textbf{1.70e-03} (\textbf{2.46e-03}) & 1.85e-02 (4.94e-02)&6.09e-03 (9.60e-03)\\
			MSD & 2.75e-05 (3.34e-05)&\textbf{2.47e-05} (3.27e-05)& 3.02e-05 (\textbf{2.10e-05})\\
			\bottomrule
	\end{tabular}
\end{table}

\begin{figure*}[!ht]
	\centering
		\includegraphics[width=\columnwidth] {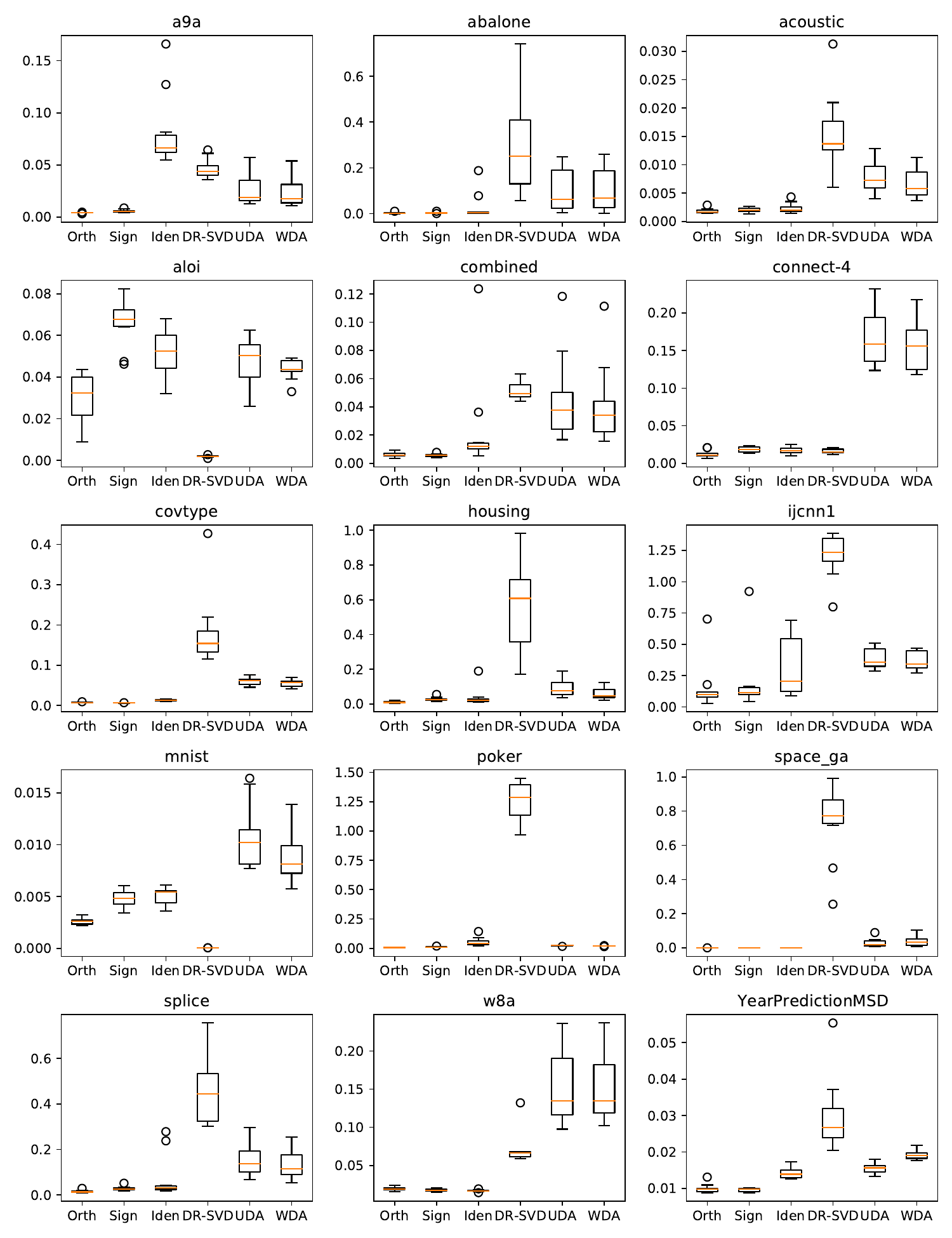}
	\caption{Box plot of Table~\ref{table:baseline_additional} for better visualization.
	Here \textsf{Orth}, \textsf{Sign} and \textsf{Iden} represents OPT, sign-fixing and the vanilla \LP respectively.\protect\footnotemark ~
	We can see that for most datasets, \LP with $p=4$ obtains smallest error and more stability.
    We can obtain zero error if we use the decay strategy.}
	\label{fig:error_box}
\end{figure*}
\footnotetext{Actually, it means setting $\FM$ for \LP as $\OM_{k}, \DM_k$ and $\{\I_k\}$ respectively (see~\eqref{eq:D0} for the reason).}

\begin{figure*}[!ht]
	\centering
	\includegraphics[width=\columnwidth] {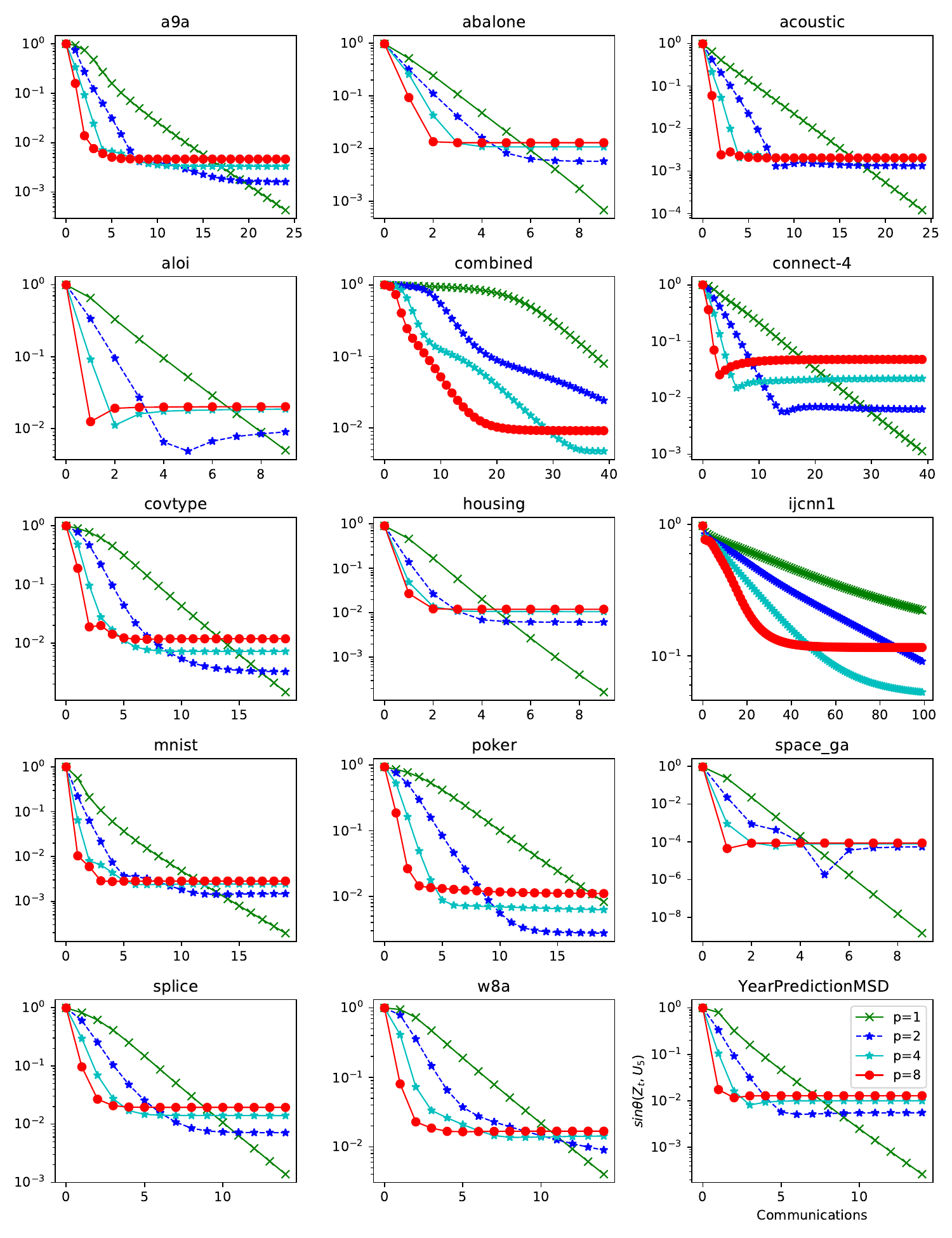}
	\caption{Vary $p$ for \LP with OPT.
	Typically, the larger $p$, the larger error, which is consistent with our theory.
	Typically, \LP with OPT achieves the smallest error among our three proposed methods.
}
	\label{fig:p_orth}
\end{figure*}

\begin{figure*}[!ht]
	\centering
	\includegraphics[width=\columnwidth] {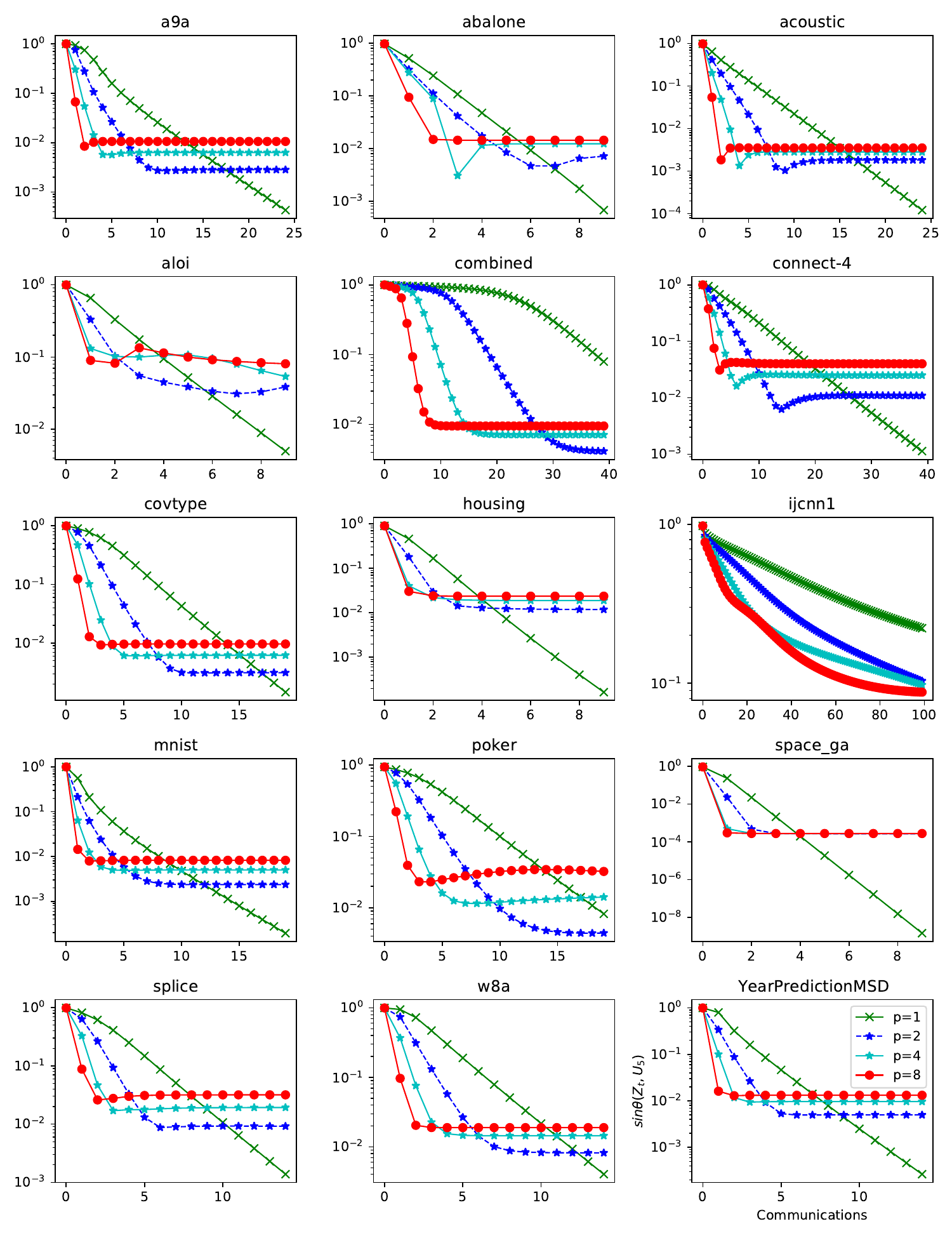}
	\caption{Vary $p$ for \LP with sign-fixing
		Similar to Figure~\ref{fig:p_orth}, the larger $p$, the larger error, which is consistent with our theory.
	\LP with sign-fixing is much computation efficient than that with OPT.
	Sign-fixing can be viewed as a good practical of surrogate of OPT.
	}
	\label{fig:p_sign}
\end{figure*}

\begin{figure*}[!ht]
	\centering
	\includegraphics[width=\columnwidth] {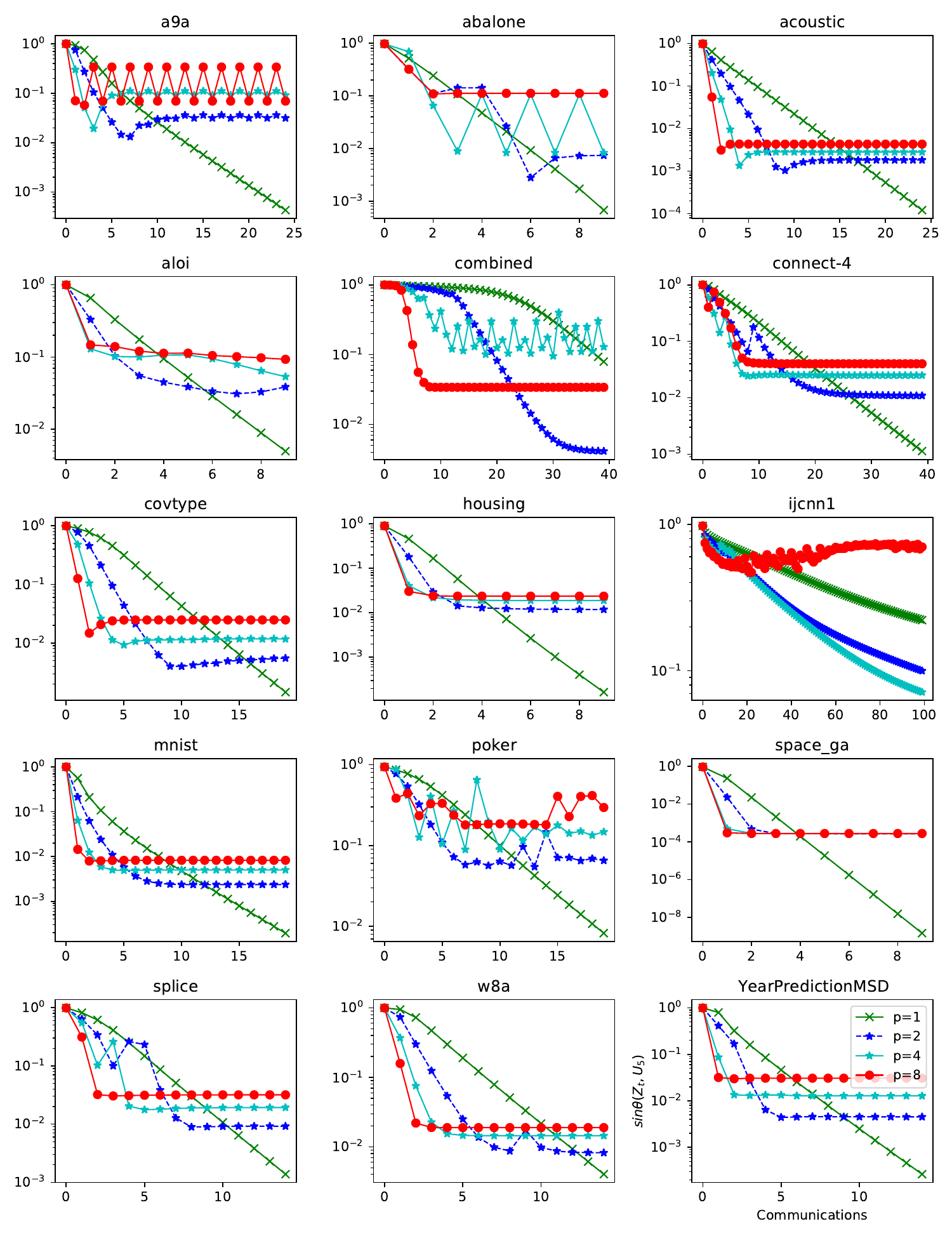}
	\caption{Vary $p$ for vanilla \LP. 
		For most datasets, vanilla \LP converges and the similar pattern that the larger $p$, the larger error occurs.
		However, for large $p$, it fluctuates and even diverges on some datasets (including A9a, Abalone, Combined, Ijcnn1 and Poker).
		This is because $\eta$ can't meet required smallness.
		As argued,  \LP with OPT or sign-fixing typically is more stable than the vanilla one, since it requires less strict smallness of $\eta$.
		Besides, we can use the decay strategy or decreases the number of devices.
	}
	\label{fig:p_iden}
\end{figure*}

\begin{figure*}[!ht]
	\centering
	\includegraphics[width=\columnwidth] {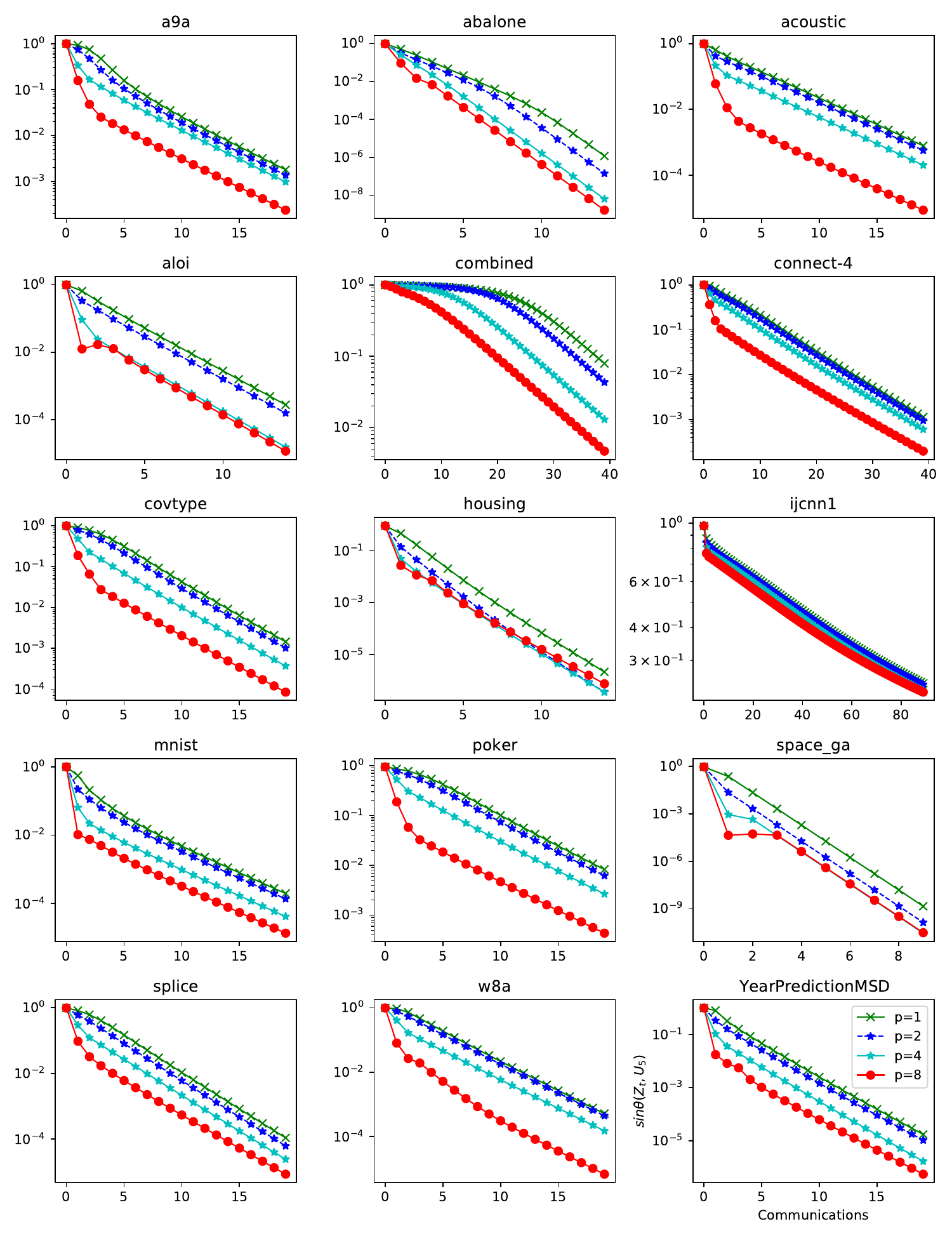}
	\caption{Decay strategy for \LP with OPT.
		For most datasets, \LP with OPT converges faster and achieves much less error than non-decay counterparts (see Figure~\ref{fig:p_orth}).
		Theoretically, \LP with decay strategy can achieve zero error.
	}
	\label{fig:p_orth_decay}
\end{figure*}

\begin{figure*}[!ht]
	\centering
	\includegraphics[width=\columnwidth] {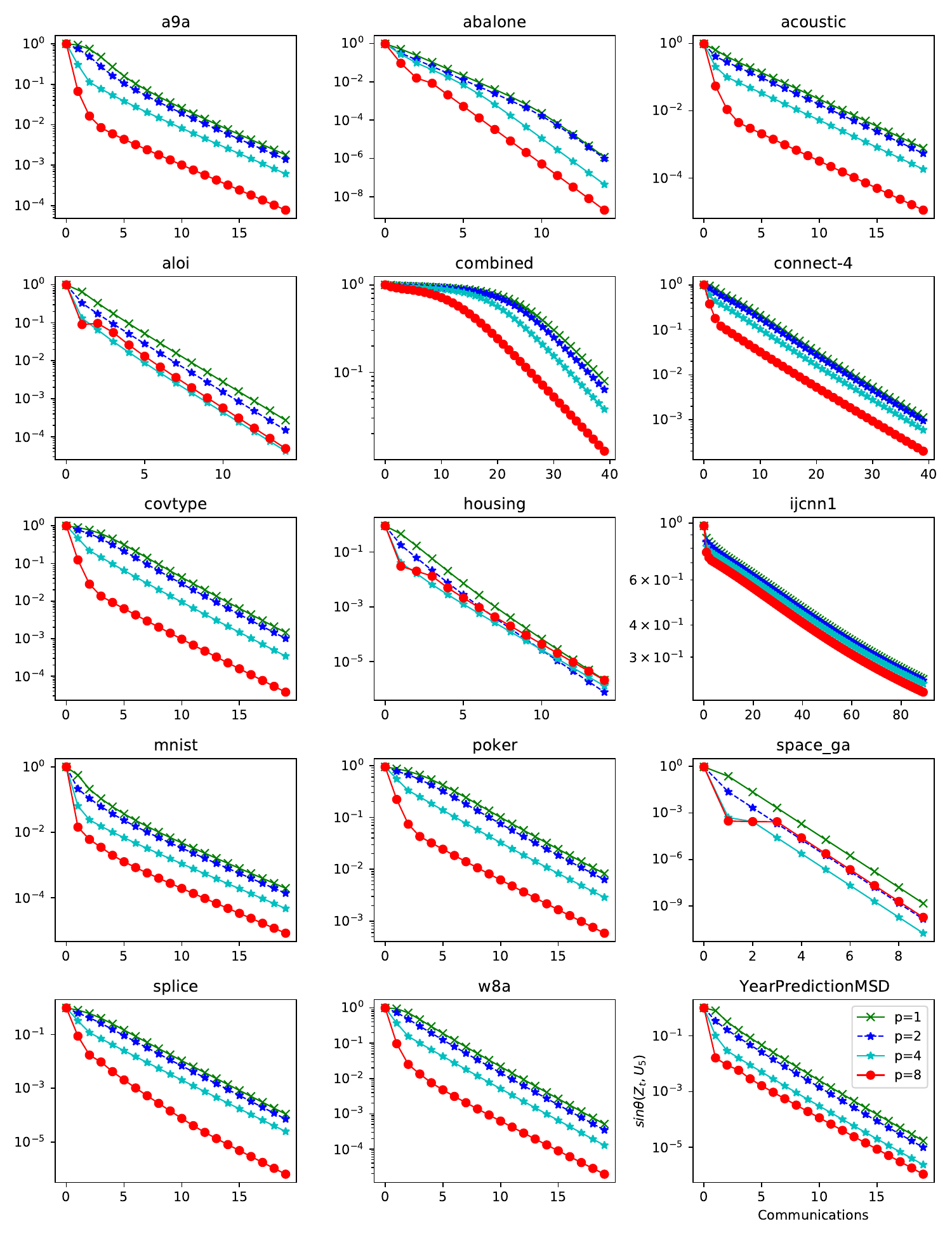}
	\caption{Decay strategy for \LP with sign-fixing.
		For most datasets, \LP with sign-fixing converges faster and achieves much less error than non-decay counterparts (see Figure~\ref{fig:p_sign}).
		Theoretically, \LP with decay strategy can achieve zero error.
	}
	\label{fig:p_sign_decay}
\end{figure*}

\begin{figure*}[!ht]
	\centering
	\includegraphics[width=\columnwidth] {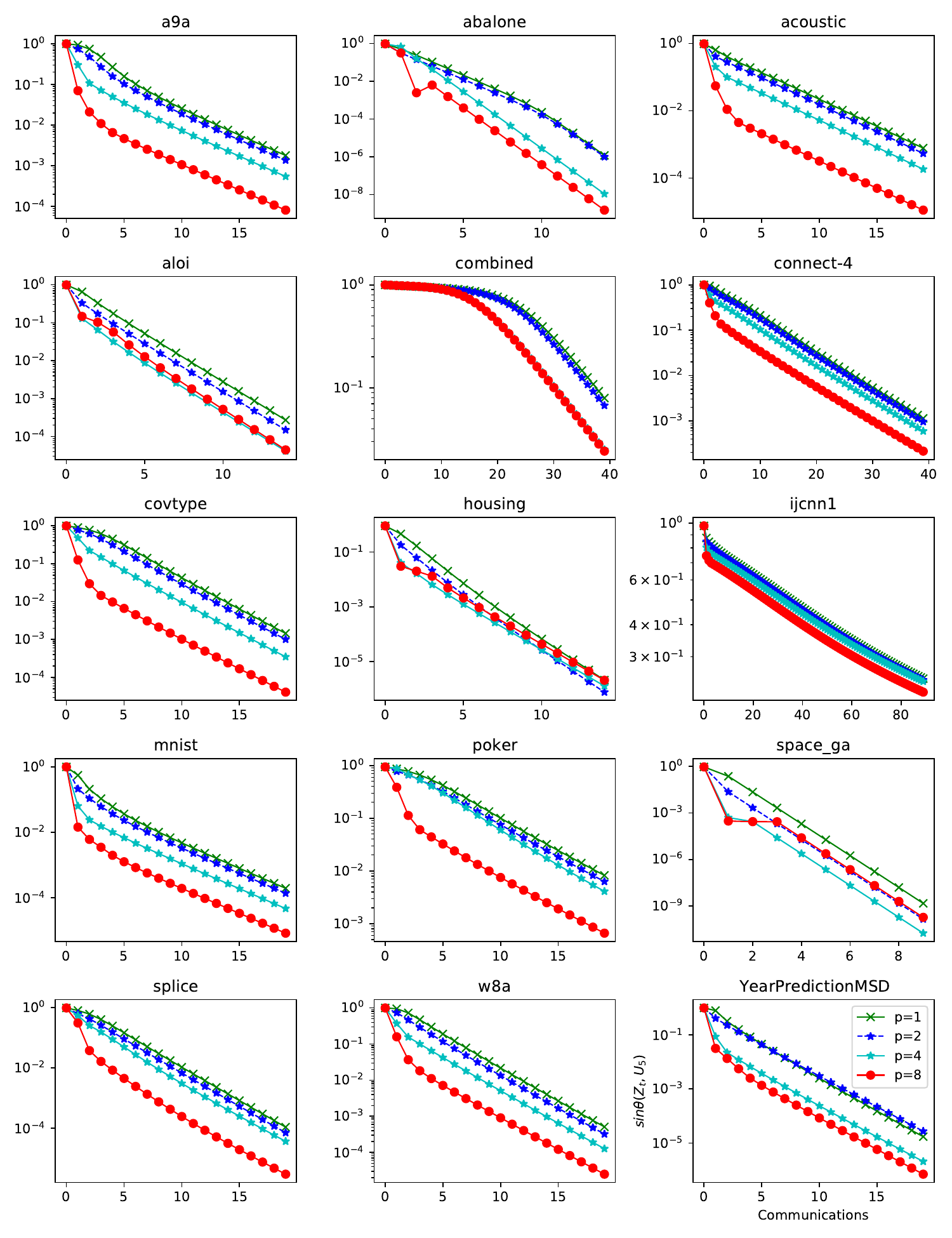}
	\caption{Decay strategy for vanilla \LP.
		For most datasets, vanilla \LP converges faster and more stable than non-decay counterparts (see Figure~\ref{fig:p_sign}).
		It typically achieves much less error than non-decay counterparts.
		Theoretically, \LP with decay strategy can achieve zero error.
	}
	\label{fig:p_iden_decay}
\end{figure*}

\begin{figure*}[!ht]
	\centering
	\includegraphics[width=\columnwidth] {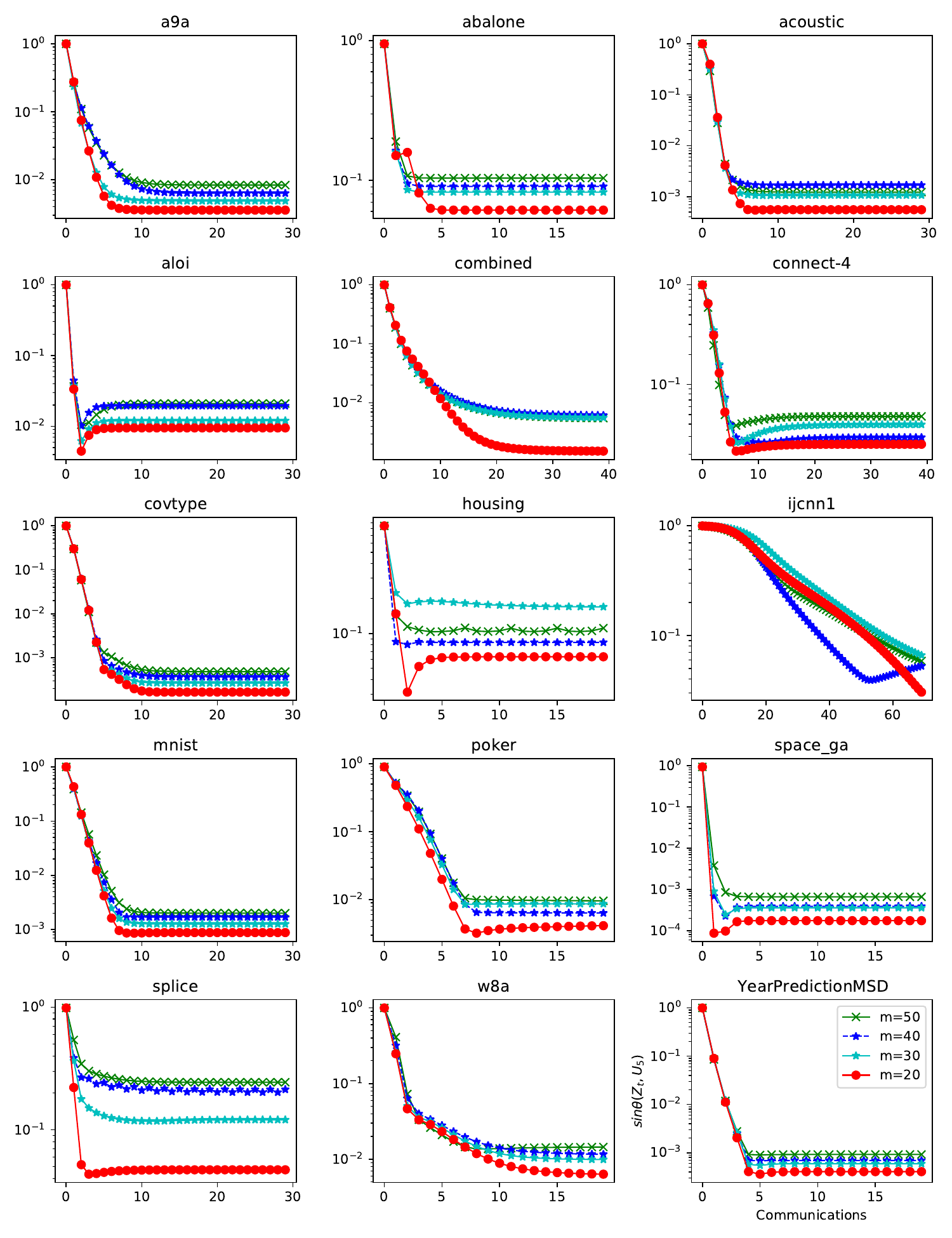}
	\caption{Various $m$ for \LP with OPT. Typically, the smaller $m$ has smaller errors.
	}
	\label{fig:m_orth}
\end{figure*}

\begin{figure*}[!ht]
	\centering
	\includegraphics[width=\columnwidth] {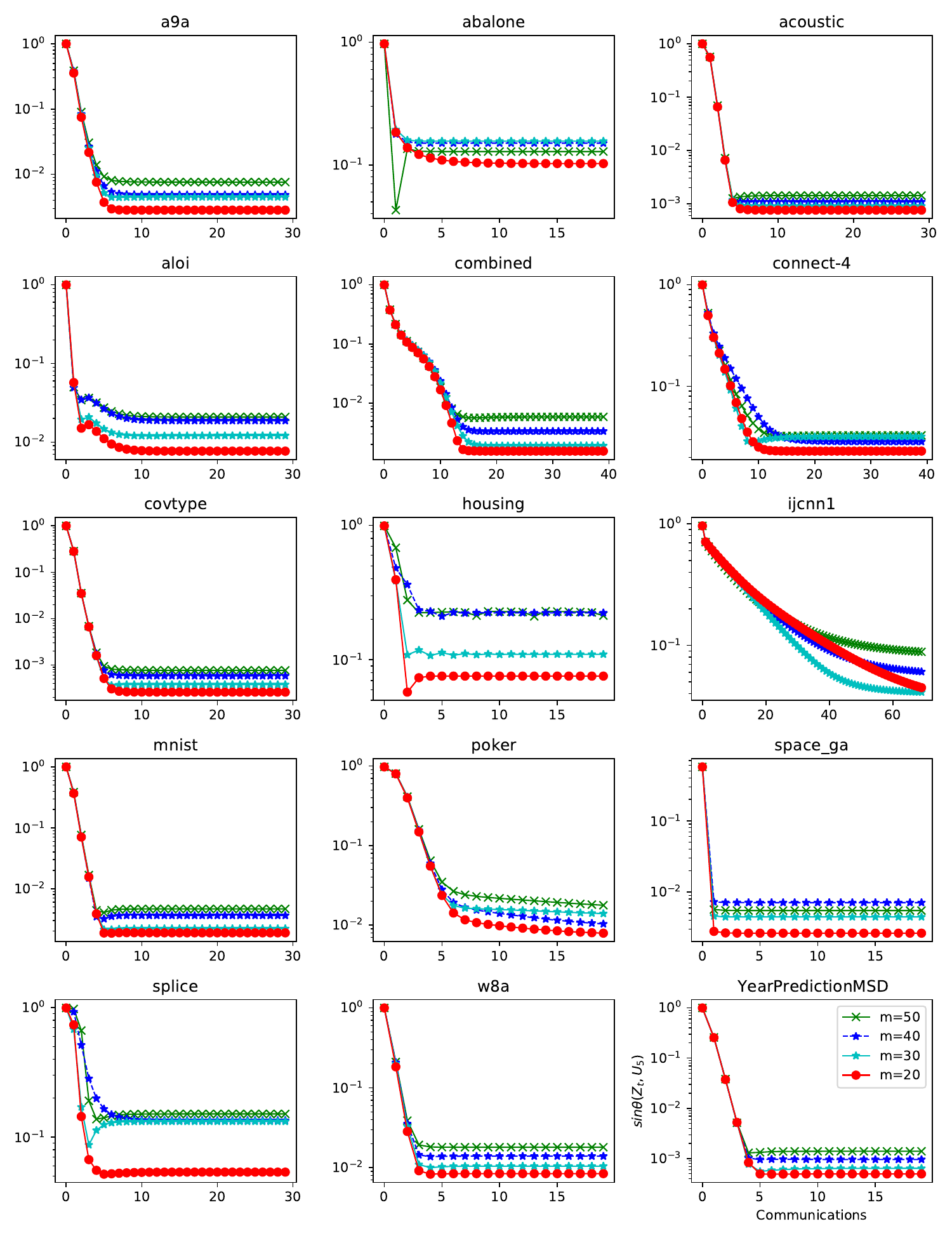}
	\caption{Various $m$ for \LP with sign-fixing. Typically, the smaller $m$ has smaller errors.
	}
	\label{fig:m_sign}
\end{figure*}

\begin{figure*}[!ht]
	\centering
	\includegraphics[width=\columnwidth] {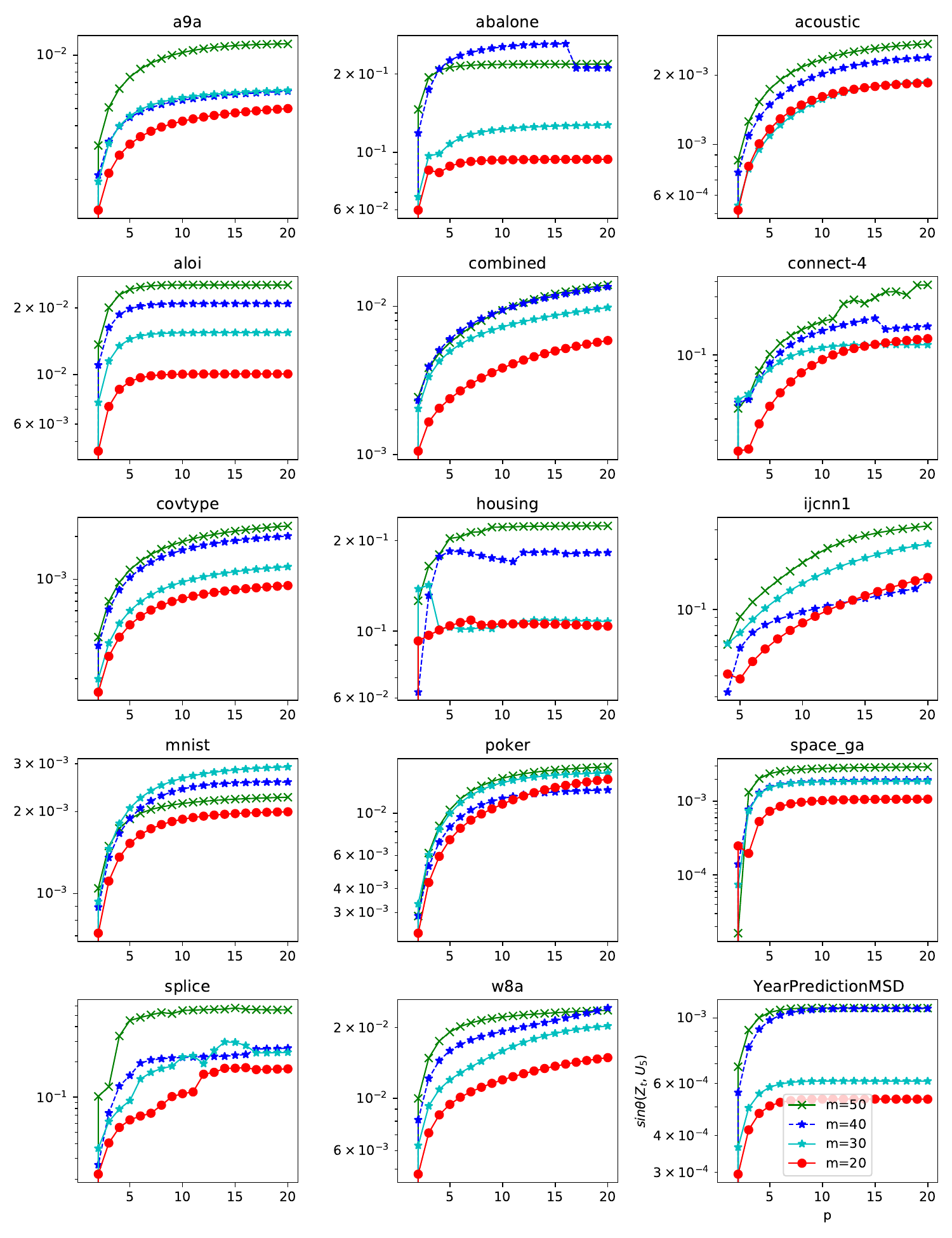}
	\caption{Error dependence of \LP with OPT.
	}
	\label{fig:sin_m_p}
\end{figure*}
\end{document}